\title{Targeted Adaptive Design\thanks{Submitted to the editors DATE.
\funding{The U.S. Government retains a nonexclusive, royalty-free license to
publish or reproduce the published form of this contribution, or allow others to
do so, for U.S. Government purposes. Copyright is owned by SIAM to the extent
not limited by these rights.}}}
\author{
Carlo Graziani\thanks{Argonne National Laboratory, Lemont, IL 60439 USA
(\email{cgraziani@anl.gov})}
\and 
Marieme Ngom\thanks{Argonne National Laboratory, Lemont, IL 60439, USA
(\email{mngom@anl.gov})}
}
\newcommand{\etattr}{\eta_{\mathrm{TTR}}}
\newcommand{\bx}{\bm{x}}
\newcommand{\bX}{\bm{X}}
\newcommand{\by}{\bm{y}}
\newcommand{\bz}{\bm{z}}
\newcommand{\br}{\bm{r}}
\newcommand{\bg}{\bm{g}}
\newcommand{\bC}{\bm{C}}
\newcommand{\bK}{\bm{K}}
\newcommand{\bJ}{\bm{J}}
\newcommand{\bL}{\bm{L}}
\newcommand{\bmm}{\bm{m}}
\newcommand{\bp}{\bm{p}}
\newcommand{\bP}{\bm{P}}
\newcommand{\bq}{\bm{q}}
\newcommand{\bQ}{\bm{Q}}
\newcommand{\bS}{\bm{S}}
\newcommand{\bI}{\bm{I}}
\newcommand{\bSigma}{\bm{\Sigma}}
\newcommand{\bsigma}{\bm{\sigma}}
\newcommand{\bepsilon}{\bm{\epsilon}}
\newcommand{\bmu}{\bm{\mu}}
\DeclareMathOperator*{\argmax}{arg\,max}
\begin{document}

\maketitle

\begin{abstract}
Modern advanced manufacturing and advanced materials design often require searches of relatively high-dimensional process control parameter spaces for settings that result in optimal structure, property, and performance parameters.  The mapping from the former to the latter must be determined from noisy experiments or from expensive simulations. We abstract this problem to a mathematical framework in which an unknown function from a control space to a design space must be ascertained by means of expensive noisy measurements, which locate optimal control settings generating desired design features within specified tolerances, with quantified uncertainty. We describe \textit{targeted adaptive design} (TAD), a new algorithm that performs this sampling task efficiently. TAD creates a Gaussian process surrogate model of the unknown mapping at each iterative stage, proposing a new batch of control settings to sample experimentally and optimizing the updated expected log-predictive probability density of the target design. TAD either stops upon locating a solution with uncertainties that fit inside the tolerance box or uses a measure of expected future information to determine that the search space has been exhausted with no solution. TAD thus embodies the exploration-exploitation tension in a manner that recalls, but is essentially different from, Bayesian optimization and optimal experimental design.

\end{abstract}

\begin{keywords}
experimental design, Bayesian optimization, black-box optimization, Gaussian processes, manufacturing
\end{keywords}
  
\begin{AMS}
62L05, 
62M20, 
93E35, 
62F15, 
60G15 
\end{AMS}

\section{Introduction}\label{sec:Intro}

Modern manufacturing methods such as  3-D printing, a type of \textit{additive manufacturing} \cite{10.1115/DETC2019-98415}, flame-spray pyrolysis \cite{PAULSON2020108972}, melt-spray deposition \cite{DelshadKhatibi2017}, and atomic layer deposition \cite{cheng2021recent}, have created a new landscape
of materials and manufactures, accelerating technological development in
engineering fields such as printed electronics, solid-state batteries, and
materials for advanced PPE. These new manufacturing techniques have brought a
new set of challenges, however. One of the principal challenges is the
difficulty and computational cost of modeling the complex detailed physical
processes that typically occur during manufacture. In \cite{10.1115/DETC2019-98415}, for example, the authors note that for additive manufacturing (AM), ``...achieving consistency with part quality and process reliability...remains a challenge...In a sense, AM has become a manufacturing domain that is data-rich but knowledge-sparse. Extracting knowledge from the vast amounts of available AM data can be a tedious process. Despite the advances in measurement science and increasing number of datasets from the AM lifecycle, there is limited scientific understanding to characterize AM materials-geometry-process-structure-property-performance relationships.'' 

In other words, scientists and engineers working with these types of processes frequently have numerous input variables to set and control, and a target product with a list of features to be attained, each within its own tolerance, but no principled and affordable way of predicting the features produced by a list of settings---the \textit{experimental response function}. There can be, however, abundant (usually noisy) data constraining the experimental response. There is therefore increasing interest in the development of machine learning methods that can take advantage of such data to assist manufacturing design by constructing inexpensive data-driven surrogates for the experimental response, and using those surrogates to improve the efficiency of the design process \cite{jenks2020basic}.

In the abstract, this problem concerns the task of locating a target design $f_T$---an $E$-dimensional vector of desired feature values---whilst simultaneously determining the structure of the experimental response---a function $f(\cdot):\mathbb{R}^{D}\rightarrow\mathbb{R}^{E}$ from the $D$-dimensional control space to the $E$-dimensional feature space. As such, it combines the aspects of target optimization \cite{609493} with black-box function optimization \cite{jones1998efficient} into a relatively novel problem, that of \textit{multi-target black-box optimization} \cite{Peremezhney_et_al_2014}. Versions of a similar problem exist for Bayesian parameter inference (as opposed to predictive design): \cite{ranjan2016inverse,zhang2019sequential} give methods for estimating the control parameters of time-series output data, where the underlying model is a potentially expensive simulation. \cite{Vernon_etal_2010} furnish a method that uses history matching to find galaxy formation model parameters that yield plausible matches to observed galaxy luminosity functions, where the galaxy formation model is again represented by an expensive simulation. These works exploit multi-output (that is, vector-valued) Gaussian process models to construct a surrogate for the unknown experimental response. Such multi-output GP modeling was explored in the context of parameter inference by \cite{higdon2008computer}.

There are some common threads in this previous research that do not perfectly suit the problem at hand. We would like a method that acknowledges and incorporates the existence of independent tolerance ranges for each of the output parameters, and that can gracefully compare those tolerances to the changing uncertainties ascribed by the inexpensive surrogate model to those individual parameters. In this way, convergence can be defined in terms of ``uncertainty within tolerance''.  In addition, we have set for ourselves the goal of obtaining a fully Bayesian method that improves on previous methodology by incorporating such uncertainty information into the scalar optimization objective, so that model-target deviations along different output parameters are properly weighted by their attending uncertainties. In other words, the concern of this work is \textit{multi-target black-box optimization with solution uncertainties within target tolerances.} 

The work of \cite{Peremezhney_et_al_2014} does in fact treat the outputs separately through a Pareto-front approach. The proposed method is frequentist, however, and in addition makes no use of tolerance information. In addition, the method must make use of a discretized input space, so as to exploit a genetic algorithm optimization strategy, which limits its application to low-dimensional input. 

Parameter inference methods have some qualitative differences that distinguish them from the manufacturing problem described above: the role of the target is played by the data itself, rather than by some desired design, whereas the ``ground truth'' information furnished by data in the manufacturing problem is instead supplied by an expensive simulation.  The Bayesian methods described in \cite{ranjan2016inverse,zhang2019sequential} operate by optimization of an Expected Improvement (EI) \cite{jones1998efficient} function that monitors improvement of an unweighted $L_2$-norm scalarization of the deviation of simulation output from the target, and hence treat deviations along better-constrained directions in feature space on an equal footing with those along poorly-constrained directions. There is also no use of parameter tolerances, which is not unusual in inference for model parameter calibration.  

In the Bayesian work of \cite{Vernon_etal_2010}, parameter tolerances are considered, but only as devices to exclude large regions of ``implausible'' parameter space, rather than as part of a convergence criterion, which instead relies on a comparison between emulator (surrogate model) uncertainty and other types of uncertainty. Per-parameter total uncertainties are used as weights in ``implausibility measures'', and the input space is iteratively reduced until arriving at an irreducible ``plausible'' region. The approach is well-suited to the study of galaxy formation, where models are characterized by large uncertainties representing poorly-controlled or unknown physics, so that coarse history matching is more informative than detailed parameter calibration. It does not provide good guidance for the manufacturing control problem, however, since the expensive simulation is then replaced by the data acquisition process, and there are of course no modeling issues that arise in this case.

The multi-output GP modeling discussed in \cite{higdon2008computer} uses a static experimental design and static data, which permits a drastic data reduction step through a Principal Component Analysis (PCA) of simulator output. The multi-output modeling is thereby simplified, since independent Gaussian process models may be used for a small number of most active PCA components. This clever approach is not available within the iterative framework for manufacturing design that we consider here, since there is no static design to furnish PCA components, so that one cannot use this data-driven technique for construction of the experimental response surrogate. 

Thus, the abstract problem raised by manufacturing design as we have stated it
does not  appear to have been addressed to date. It is nonetheless recognizably related to a broad statistical literature under the connected headings of Bayesian optimization, adaptive experimental design, and design of computer experiments.

Bayesian optimization (BO) is reviewed in \cite{7352306}. The subject addresses the general problem of finding a global optimum of an unknown acquisition function that may be noisily sampled by sequentially updating a surrogate model of the function, prescribing prior beliefs and using sampled data to revise the model iteratively through posterior updating. At each iteration, the updated model results in an updated acquisition function that is used to choose the next data sample. 

The main ingredients of BO, then, are a surrogate model, often a Gaussian process (GP), and a loss functional that processes the updated model to furnish the updated acquisition function. Expected improvement (EI) of the surrogate is possibly the most common choice of loss functional.  Important considerations include model validation at each stage of iteration and stopping rules \cite{jones1998efficient,bingham2014design}.

Within the statistical theory of design of physical experiments, the subfield of Bayesian optimal experiment design (OED) is also closely connected to the problem statement that concerns us here. Stochastic process surrogate models for the response surface were introduced by \cite{sacks1984some}, and Gaussian processes have been in common use since then.  Lindley \cite{lindley1956measure} explored the role of information theory in iterative updates of decisions concerning future experiments based on currently available data.  Information theoretic gain functions have played a prominent role in the subject (see, e.g., \cite{ryan2003estimating,zhang2016adaptive}). In the context of normal models, optimizing expected information gain is equivalent to the criterion of ``D-optimality,'' that is, to minimization of the determinant of the predictive covariance \cite{10.2307/2246015}. 

A closely connected topic is the design and analysis of computer experiments  \cite{sacks1989design,kennedy2001bayesian}. Here the problem is that expensive computer simulations must be run at a continuous range of input parameter settings (and, potentially, at a range of physical fidelities) in order to predict physical observations, so that the model itself may be constrained and possibly refined. The expense of computer simulations has incentivized research on optimal adaptive sequential strategies for sampling simulation input settings. In \cite{huang2006sequential}, an expected improvement maximization criterion was demonstrated in the context of multiple-fidelity simulations of metal formation. The work described in \cite{sinsbeck2017sequential}
gives applications of sequential design with minimization of Bayesian risk. 

We will describe an approach to target optimization adapted to the manufacturing problem that we call \textit{targeted adaptive design} (TAD). TAD exploits valuable connections to BO and OED but  differs from them in decisive ways. We will introduce a GP surrogate, albeit a vector-valued GP surrogate (since the response is vector-valued). One difference from usual practice is that rather than introducing new sample points one at a time, we do so in batches of $N_2$ points at a time (the reason for the notation will be clear below). This is not customary in the statistical sequential OED literature, although recent BO approaches have featured batch sampling (e.g., \cite{NIPS2016_18d10dc6,contal2013parallel,gonzalez2016batch}). 

In addition, instead of extremizing expected improvement or some variant thereof, our objective will be obtained by extremizing the \textit{expected log-predictive probability density} (ELPPD) that the latent response $f(x)$ attain the target value $f_T$. The resulting TAD acquisition function has a simple closed form. We exhibit this acquisition function in \cref{subsec:objectivefn}. It has some distinct advantages over existing acquisition functions: it is both easy to compute (unlike, for example, the knowledge gradient \cite{NIPS2016_18d10dc6}) and continuously differentiable (unlike the expected improvement \cite{jones1998efficient}), making it an interesting candidate for use with gradient optimization methods in high-dimensional problems. In addition, this acquisition function has the property that unlike acquisition functions that are based on $L_2$-norm measures of the discrepancy between the model and the target \cite{ranjan2016inverse,zhang2019sequential}, \textit{it properly incorporates model uncertainty in assessing target discrepancy}. The ELPPD is more intolerant of model deviations from the target along output features that are well-constrained according to their predictive uncertainty than it is of deviations along directions that are poorly constrained. This is a desirable property in an acquisition function, since it prevents the optimization from pursuing targets that are seemingly nearby according to their $L_2$-norm distance, but are actually quite far in probability.

An actual algorithm based on this approach must search the space of control settings without advance knowledge of whether a solution leading to a design that satisfies the specified tolerances in fact exists. Therefore  the algorithm must be equipped with two stopping rules: \textit{convergence/success} if such a control point $x_T$ has in fact been located and \textit{convergence/failure} if one can conclude that no such point exists. The second stopping rule is not straightforward to formulate without resorting to brute-force search. We show that the form of the TAD acquisition function is intimately connected to a second diagnostic: the \textit{expected information gain} (EIG)---in the information-theoretic sense---to be obtained from an upcoming (but not yet acquired) sample. The interplay between the TAD acquisition function and the EIG allows us to let the algorithm run, alternating tentative exploitation phases with exploration phases until either a solution is located or the surrogate function uncertainty has ``squeezed'' the target out of plausible solution, as diagnosed by a plummeting value of the EIG. In \cref{subsec:EI} we discuss EIG and its use in TAD.

The form of vector-valued Gaussian process models that we adopt is specified in terms of weighted sums of Kronecker products of scalar covariance functions over the control space with covariance matrices in the output feature space. This form permits us to simply add complexity to the surrogate model when validation measures seem to demand it. Kernel choice is discussed in \cref{subsec:kernels}.

One important lesson that we (re)learned is that an algorithm can be based on satisfying and valid theory, but a bad model can nonetheless can cause it to malfunction. For the correct functioning of the resulting algorithm,  model validity should be constantly monitored, and  model flexibility should be dynamically augmentable when model invalidity is detected. We have introduced tests of model validity that extend the usual ``leave-one-out'' (LOO) validation tests, based on out-of-sample GP prediction of future acquired samples, using frequentist $\chi^2$ tests and the appropriate GP covariances. These tests give us the required ability to dynamically extend model flexibility. We describe them in \cref{subsec:validation}.

We gather the theory into a discussion of the features of the TAD algorithm and formally describe the algorithm in \cref{sec:algorithm}. We exhibit its behavior in numerical experiments in \cref{sec:results}.

\section{Theory of Targeted Adaptive Design}

In this section we set out the theoretical development required for assembling the TAD algorithm. After the notational definitions, there are two main subsections: derivation of the \textit{TAD acquisition function}, which embodies the optimization problem to be solved, and derivation of the \textit{expected information gain}, a quantity required for the algorithm to decide whether it should declare that a search for the target design will not succeed because such a design likely does not exist in the search space. Following these are a subsection on covariance model choice and another on validation methodology.

Consider the problem of designing a product according to a list of desired
features, each within its own tolerance, using advanced manufacturing equipment,
when the experimental response to the equipment controls is not known a priori
and may be constrained only by noisy experiments.

More formally, assume a control space $X\subset\mathbb{R}^{D}$,  a design space $Y\subset\mathbb{R}^{E}$, and an unknown
experimental response function $f:X\rightarrow Y$. We may take noisy measurements of $f(\cdot)$, $g_{k}=f(x_{k})+\epsilon_{k}$, $x_{k}\in X$, $k=1,\ldots,N$,
where $\epsilon_{k}$ is a zero-mean Gaussian noise term. These measurements give
us some idea of the shape of $f(\cdot)$ in the neighborhood of the $x_{k}$. We
want to find a point $x_{T}\in X$ such that the function $f(\cdot)$ approaches
some desired target value $f_{T}$ (the design), that is, $f(x_{T})\approx f_{T}$.
The degree of approximation is set by an $E$-dimensional tolerance vector $\tau$
with components $\tau_{i}$, $i=1,\ldots,E$ representing the tolerance within
which we must hit each component of the target.  We define a ``targeted
tolerance region'' (TTR) in terms of intervals
$\eta_{i}\equiv\left[f_{Ti}-\tau_{i},f_{Ti}+\tau_{i}\right]$, $i=1,\ldots,E$,
where $f_{Ti}$ is the $i$th component of the target vector $f_{T}$. The TTR
is the region $\etattr\equiv\{y\in Y: y_i\in\eta_i, i=1,\ldots,E\}$. 

We would like an efficient procedure that samples batches of points in $X$, successively constraining the shape of $f$ while searching for regions of $X$ where $f(\cdot)$ has values closest to $f_T$, and stopping either when a value of $x_T\in X$ is located such that, with good confidence, $f(x_T)\in\etattr$ or when we are sure that no such $x$ can be found in $X$. 

The approach adopted here is to assume a Gaussian process  model on
$f(\cdot)$. We will define a candidate acquisition function in terms of the
\textit{log-predictive probability density
of }$f_{T}$ and show that this function has
desirable properties with respect to targeting $f_{T}$. We will then show that
given a proposed new experiment, one can  compute the expectation value
of this acquisition function under the distribution of predicted new data given
current data. The acquisition function can thus be optimized simultaneously with respect to
the new experimental setting and the choice of $x_{T}$.

As we will see, this acquisition function has the desirable property of embodying
the ``exploration-exploitation'' tension---the competition between
exploring regions of $X$ where the function $f(\cdot)$ is 
uncertain and investigating regions of $X$ that seem promising for
values of $x$ satisfying $f(x)=f_{T}$. 

\subsection{SetUp and Notation}\label{subsec:setup}

The standard procedure in these types of black-box problems is to model the
unknown function by some kind of surrogate function, usually a GP. We will
assume a GP model on $f(\cdot)$. Technically, since $f$ is vector-valued
($f:\mathbb{R}^{D}\rightarrow\mathbb{R}^{E})$, we require a
vector-valued GP (a ``VVGP''), also known as a ``multitask'' or
``multioutput'' GP \cite{williams2007multi}. This is a straightforward
generalization of a GP. A VVGP is characterized by a vector-valued mean function
$\mu_{i}(x)$ and by a matrix-valued covariance function
$C_{ii^{\prime}}(x,x^{\prime})$, $i,i^\prime=1,\ldots,E$. $C$ must be positive
definite, so that for any vector field
$h:\mathbb{R}^{D}\rightarrow\mathbb{R}^{E}$ we have that $\sum_{ii^{\prime}}\int
dx\,dx^{\prime}\,h_{i}(x)\,C_{ii^{\prime}}(x,x^{\prime})h_{i^{\prime}}(x^{\prime})>0$.
For a discussion of possible VVGP kernel models, see \cite{alvarez2012kernels}.
We discuss our specific model choices in \cref{subsec:kernels}. In what
follows we assume that $f\sim\textnormal{VVGP}(\mu,C)$ is a valid approximation
for the black-box function, for some vector-valued function $\mu(x)$, and for some
valid matrix-valued covariance $C(x,x^\prime)$. 

Suppose that we have three sets of
points in $X$: a set $\bx_1\equiv\{x_{1,k}\in X:k=1,\ldots,N_1\}$; a set
$\bx_2\equiv\{x_{2,l}\in X:l=1,\ldots,N_2\}$; and a  set consisting of the
singleton $\{x\in X\}$. The $\bx_1$ will represent points at which noisy data
from $f(\cdot)$ has already been acquired; $\bx_2$ will represent ``proposed''
new points at which data has yet to be obtained; and $x$ is a candidate target
solution. Below we will have occasion to use a useful shorthand, where a function $h:\mathbb{R}^D\rightarrow\mathbb{R}^E$, applied to an argument consisting of an indexed set of elements of $\mathbb{R}^E$ such as $\bx_1$, is to be interpreted as a vector $h(\bx_1)=\begin{pmatrix}h(x_{1,1})^T,\ldots,h(x_{1,N_1})^T\end{pmatrix}^T$.

Suppose noisy data from the function $f(\cdot)$ is sampled at the points in $\bx_1$. We denote this data by a vector 
$\bg_1\equiv f(\bx_1) + \bepsilon_1$, where by our shorthand $f(\bx_1)$ means the vector
$\begin{bmatrix}
f(x_{1,1})^T&f(x_{1,2})^T&\ldots&f(x_{1,N_1})^T
\end{bmatrix}^T$ and where $\bepsilon$ is an $(N_1 E)$-dimensional Gaussian zero-mean noise term with covariance $\bSigma_1$.  A similar sample $\bg_2\equiv f(\bx_2) + \bepsilon_2$ is realized at the points $\bx_2$, where the noise $\bepsilon_2$ is an $(N_2 E)$-dimensional random normal zero-mean vector with covariance $\bSigma_2$.  We assume that the observation noise is uncorrelated with the GP model of $f(\cdot)$ and  that $<\bepsilon_1\bepsilon_2>=0$ (although this latter condition could easily be relaxed). For later convenience we define the $(N_1 E)$-dimensional vector space $W_1$ such that $\bg_1\in W_1$, the $(N_2 E)$-dimensional vector space  $W_2$ such that $\bg_2\in W_2$, and the vector space $W_{1+2}\equiv W_1\oplus W_2$.

With these assumptions, the random vector $\begin{bmatrix}
f(x),\bg_1^T,\bg_2^T
\end{bmatrix}^T$ is governed by the following normal distribution:
\begin{equation}
\begin{bmatrix}
f(x)\\
\bg_1\\
\bg_2
\end{bmatrix} \sim \mathcal{N}\left\{ 
\begin{bmatrix}
\mu(x)\\
\bmu_1\\
\bmu_2
\end{bmatrix},
\Gamma
\right\},\label{eq:joint1}
\end{equation}
where $\Gamma$ is the matrix
\begin{equation}
\Gamma\equiv
\begin{bmatrix}
K_{xx} & \bK_{x1} & \bK_{x2}\\
\bK_{1x} & \bK_{11}+\bSigma_1 & \bK_{12}\\
\bK_{2x} & \bK_{21} & \bK_{22}+\bSigma_{2}
\end{bmatrix}
\label{eq:joint1_cov}
\end{equation}
and where we have defined the quantities $\bmu_1\equiv\mu(\bx_1)$, $\bmu_2\equiv\mu(\bx_2)$, $K_{xx}\equiv C(x,x)$, $\bK_{x1}=\bK_{1x}^T\equiv C(x,\bx_1)$, $\bK_{x2}=\bK_{2x}^T\equiv C(x,\bx_2)$, $K_{11}\equiv C(\bx_1,\bx_1)$, $K_{22}\equiv C(\bx_2,\bx_2)$, and $K_{12}=K_{21}^T\equiv C(\bx_1,\bx_2)$.

This distribution gives rise to the well-known \textit{predictive distribution} for $f(x)$ \cite[Chapter 2]{RandW}:

\begin{align}
f(x)|(\bg_{1},\bg_{2}) &\sim \mathcal{N}\left\{ p^{(f(x)|1+2)},Q^{(f(x)|1+2)}\right\} \label{eq:pred_1+2_t}\\
p^{(f(x)|1+2)} &\equiv \mu(x)+
\begin{bmatrix}
\bK_{x1} & \bK_{x2}\end{bmatrix}
\begin{bmatrix}
\bK_{11}+\bSigma_1 & \bK_{12}\\
\bK_{21} & \bK_{22}+\bSigma_2
\end{bmatrix}^{-1}
\begin{bmatrix}
\bg_1-\bmu_1\\
\bg_2-\bmu_2
\end{bmatrix}\label{eq:pred_1+2_mean_t}\\
Q^{(f(x)|1+2)} &\equiv K_{xx}-
\begin{bmatrix}
\bK_{x1} & \bK_{x2}
\end{bmatrix}
\begin{bmatrix}
\bK_{11}+\bSigma_1 & \bK_{12}\\
\bK_{21} & \bK_{22}+\bSigma_2
\end{bmatrix}^{-1}
\begin{bmatrix}
\bK_{1x}\\
\bK_{2x}
\end{bmatrix}.\label{eq:pred_1+2_cov_t}
\end{align}

Recall our interpretation of the data $(\bx_1,\bg_1)$ as already acquired and processed, while the data $(\bx_2,\bg_2)$ is latent, in that it might be acquired, but only after the current iteration is concluded. It follows that training of hyperparameters of the GP model in use in the current iteration produces a dependence of the GP covariance matrix $Q^{(f(x)|1+2)}$ and mean vector $p^{(f(x)|1+2)}$ on $\bg_1$ through the model hyperparameters. Since $\bg_2$ was not present during training, there is no comparable dependence through the hyperparameters on $\bg_2$. Conditioning does produce a dependence of $p^{(f(x)|1+2)}$ on $\bg_2$ in Equation \cref{eq:pred_1+2_mean_t}. However, note that according to Equation \cref{eq:pred_1+2_cov_t}, the covariance matrix $Q^{(f(x)|1+2)}$ is independent of $\bg_2$.

\subsection{Acquisition Function: Expected Log-Probability Density}\label{subsec:objectivefn}

We may begin constructing the TAD acquisition function by considering the log-predictive probability density of the target design $f_T$ at the point $x$, conditioned on observations $(\bg_1,\bg_2)$. By Equation \cref{eq:pred_1+2_t} and the standard formula for multivariate normal probability density, this is
\begin{multline}
\mathcal{L}_P(x,\bx_1,\bg_1,\bx_2,\bg_2)=-\frac{1}{2}\log\det Q^{(f(x)|1+2)}\\
-\frac{1}{2}\left(f_T-p^{(f(x)|1+2)}\right)^{T}\left(Q^{(f(x)|1+2)}\right)^{-1}\left(f_T-p^{(f(x)|1+2)}\right),\label{eq:loglik_g2}
\end{multline}
up to an inessential additive constant.

The motivation for choosing this function as the starting point in the construction of the TAD acquisition function is that it measures departure of the current model prediction for $f(x)$ from the target $f_T$ in a way that fully incorporates the current uncertainties concerning the output features $f_i(x)$, $i=1,\ldots,E$. This is a desirable property in an acquisition function, because the unweighted residual measures $r_i\equiv|p^{(f(x)|1+2)}_i-f_{T,i}|$ do not reveal by their relative magnitudes whether the current solution is in fact closer to the target along feature $i$ than along feature $j$, where $i\neq j$.  We may have $r_i>r_j$ so that the $j$-th feature appears closer to the target, but if the $j$-th feature should happen to be much better constrained by the data than the $i$-th feature (so that the corresponding variances satisfy $\sigma^2_i\gg\sigma^2_j$) then the actual discrepancy would be much larger along feature $j$ than along feature $i$, so that (for example) it would be wasteful to acquire data directed at further reducing the $i$-discrepancy at the expense of reducing the $j$-discrepancy. In fact the weighted $j$-discrepancy may be so large as to reveal that the current target point $x$ is in an unpromising region of the control space, a fact not discernible from $r_j$ alone.  A lack of such discrimination between the current relative importances of discrepancies along different features necessarily affects acquisition functions based on the $L_2$-norm discrepancy (Euclidean ``distance'' $d\equiv\left(\sum_{i=1}^Er_i^2\right)^{1/2}$), such as the expected improvement (EI)-based functions of \cite{ranjan2016inverse,zhang2019sequential}. By contrast, the log predictive probability density of $f(x)=f_T$, given in Equation \cref{eq:loglik_g2} naturally weights feature discrepancies by the corresponding uncertainties.

Broadly speaking, we intend to proceed by maximizing a quantity derived from
$\mathcal{L}_P()$ with respect to the target point $x$ and with respect to the
proposed sample points $\bx_2$ in order to locate a region where the target
design $f_T$ ``has greatest probability'' according to the GP model. The
quantity $\mathcal{L}_P()$ is not directly suitable for this task because TAD
is to be an iterative procedure in which at each iteration $\bg_1$ represents
``data in the can,'' whereas $\bg_2$ represents latent data---that is, data not
yet acquired, from proposed points $\bx_2$ whose optimality is still under
evaluation. Since $\mathcal{L}_P()$ depends on $p^{(f|1+2)}$, which in turn
depends on $\bg_2$, we have no way to evaluate $\mathcal{L}_P()$ directly. As noted previously, the dependence of $\mathcal{L}_P()$ on $Q^{(f|1+2)}$ presents
no problem with respect to latent data, since $Q^{(f|1+2)}$ depends on $\bx_2$
but not on $\bg_2$.

We may draw inspiration from the procedure adopted in similar circumstances in BO \cite{7352306,jones1998efficient,bingham2014design} or OED \cite{lindley1956measure,10.2307/2246015}, where some latent measure of improvement or optimality that depends on not-yet-acquired data is averaged over the values that the future data may attain, weighted by the probability ascribed to those values by the current probabilistic model. In the context of TAD, the analog of this procedure is easily stated: The TAD acquisition function is the ELPPD,
\begin{equation}
\mathcal{L}_{TAD}(x,\bx_1,\bg_1,\bx_2)\equiv E_{\bg_2|\bg_1}\left\{\mathcal{L_P}(x,\bx_1,\bg_1,\bx_2,\bg_2)\right\},\label{eq:TAD_Obj_1}
\end{equation}
where the \textit{data predictive distribution} $\bg_2|\bg_1$ may be written down directly from Equation \cref{eq:joint1},
\begin{align}
\bg_{2}|\bg_{1} &\sim \mathcal{N}\left(\bp^{(2|1)},\bQ^{(2|1)}\right)\label{eq:datapred_t}\\
\bp^{(2|1)} &\equiv \bmu_2+\bK_{21}\left(\bK_{11}+\bSigma_1\right)^{-1}(\bg_{1}-\bmu_1)\label{eq:datapred_mean_t}\\
\bQ^{(2|1)} &\equiv \bK_{22}+\bSigma_2-\bK_{21}\left(\bK_{11}+\bSigma_1\right)^{-1}\bK_{12}.\label{eq:datapred_cov_t}
\end{align}

We may now state the main result of this section.

\begin{theorem}[TAD acquisition function]\label{thm:TAD_of}
The TAD acquisition function $\mathcal{L}_{TAD}(x,\bx_1,\bg_1,\bx_2)$ of Equation \cref{eq:TAD_Obj_1} has the following explicit functional form:
\begin{multline}
\mathcal{L}_{TAD}(x,\bx_1,\bg_1,\bx_2)=
-\frac{1}{2}\log\det Q^{(f(x)|1+2)}-\\
\frac{1}{2}\left(f_{T}-p^{(f(x)|1)}\right)^{T}\left(Q^{(f(x)|1+2)}\right)^{-1}\left(f_{T}-p^{(f(x)|1)}\right) \\
-\frac{1}{2}
\mathrm{Trace}\biggl\{
\left(\bK_{x2}-\bK_{x1}\left(\bK_{11}+\bSigma_1\right)^{-1}\bK_{12}\right)
\left(\bQ^{(2|1)}\right)^{-1}\\
\times
\left(\bK_{2x}-\bK_{21}\left(\bK_{11}+\bSigma_1\right)^{-1}\bK_{1x}\right)\left(Q^{(f(x)|1+2)}\right)^{-1}
\biggr\}. \label{eq:TAD_of}
\end{multline}
\end{theorem}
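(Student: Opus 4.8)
The plan is to evaluate the expectation in \cref{eq:TAD_Obj_1} directly, exploiting the fact that $\mathcal{L}_P$ depends on the latent data $\bg_2$ only through the predictive mean $p^{(f(x)|1+2)}$. As already observed in the text, the covariance $Q^{(f(x)|1+2)}$ of \cref{eq:pred_1+2_cov_t} depends on $\bx_2$ but not on $\bg_2$, so both the log-determinant term and the matrix $\left(Q^{(f(x)|1+2)}\right)^{-1}$ in \cref{eq:loglik_g2} are constants under $E_{\bg_2|\bg_1}$. The log-determinant term therefore passes through the expectation untouched, reproducing the first term of \cref{eq:TAD_of}. By \cref{eq:pred_1+2_mean_t}, $p^{(f(x)|1+2)}$ is an affine function of $\bg_2$, so under the data-predictive law \cref{eq:datapred_t} it is itself Gaussian, and the task collapses to evaluating the expectation of one quadratic form in a Gaussian vector.

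For that I would invoke the standard identity that, for a random vector $Z$ with mean $\bar{Z}$ and covariance $\bS$ and any fixed symmetric matrix $M$,
\[
E\left[(c-Z)^{T}M(c-Z)\right]=(c-\bar{Z})^{T}M(c-\bar{Z})+\mathrm{Trace}(M\bS),
\]
applied with $c=f_T$, $Z=p^{(f(x)|1+2)}$, and $M=\left(Q^{(f(x)|1+2)}\right)^{-1}$. The first piece on the right will become the second term of \cref{eq:TAD_of} once the mean $\bar{Z}$ is identified, and the trace piece will become the third term once the covariance $\bS$ is computed. Identifying the mean is immediate from the tower property of conditional expectation: since $p^{(f(x)|1+2)}=E[f(x)|\bg_1,\bg_2]$ is the posterior mean of $f(x)$ given both data sets, averaging over $\bg_2|\bg_1$ gives $E_{\bg_2|\bg_1}\!\left[E(f(x)|\bg_1,\bg_2)\right]=E(f(x)|\bg_1)=p^{(f(x)|1)}$, the predictive mean based on $\bg_1$ alone. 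This supplies the second term of \cref{eq:TAD_of} with no further computation.

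The one genuinely computational step, and the step I expect to be the \emph{main obstacle}, is evaluating $\bS=\mathrm{Cov}_{\bg_2|\bg_1}\!\left(p^{(f(x)|1+2)}\right)$. Writing $p^{(f(x)|1+2)}=A+B(\bg_2-\bmu_2)$, with $B$ the block of the gain matrix in \cref{eq:pred_1+2_mean_t} that multiplies $\bg_2-\bmu_2$, affineness gives $\bS=B\,\bQ^{(2|1)}B^{T}$. To extract $B$ I would apply the $2\times2$ block-matrix inversion formula to the inner covariance in \cref{eq:pred_1+2_mean_t}. The observation that makes everything collapse is that the Schur complement of the $\bK_{11}+\bSigma_1$ block is exactly $\bQ^{(2|1)}$ of \cref{eq:datapred_cov_t}; the relevant $(1,2)$ and $(2,2)$ blocks of the inverse then combine to give
\[
B=\left(\bK_{x2}-\bK_{x1}\left(\bK_{11}+\bSigma_1\right)^{-1}\bK_{12}\right)\left(\bQ^{(2|1)}\right)^{-1}\equiv R\left(\bQ^{(2|1)}\right)^{-1}.
\]

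Consequently the two $\left(\bQ^{(2|1)}\right)^{-1}$ factors telescope, $\bS=R\left(\bQ^{(2|1)}\right)^{-1}\bQ^{(2|1)}\left(\bQ^{(2|1)}\right)^{-1}R^{T}=R\left(\bQ^{(2|1)}\right)^{-1}R^{T}$, where $R^{T}=\bK_{2x}-\bK_{21}\left(\bK_{11}+\bSigma_1\right)^{-1}\bK_{1x}$. Substituting this $\bS$ into $\mathrm{Trace}(M\bS)$ and using cyclicity of the trace reproduces the third term of \cref{eq:TAD_of} verbatim, completing the proof. The only real care needed is the block-inverse bookkeeping and keeping the transpose structure of $R$ straight; once the Schur-complement identity is spotted, the remaining algebra is mechanical.
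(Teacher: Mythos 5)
Your proposal is correct and follows essentially the same route as the paper: both reduce the problem to the first and second moments of $p^{(f(x)|1+2)}$ under $\bg_2|\bg_1$ and apply the Gaussian quadratic-form expectation identity, with your Schur-complement computation of the gain matrix $B$ being exactly the content of part (i) of \cref{thm:gpred_update}, which the paper invokes directly. Your use of the tower property to identify the mean is a slightly slicker shortcut than reading it off the update formula, but it does not change the substance of the argument.
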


\begin{proof}
Comparing Equations \cref{eq:loglik_g2,eq:TAD_Obj_1},  we can see that  we require moments up to order 2 of $p^{(f(x)|1+2)}$ under $\bg_2|\bg_1$.
We begin by expanding Equation \cref{eq:pred_1+2_mean_t} using (i) of \cref{thm:gpred_update}:
\begin{multline}
p^{(f(x)|1+2)} = p^{(f(x)|1)} \\
+\left[\bK_{x2}-\bK_{x1}\left(\bK_{11}+\bSigma_1\right)^{-1}\bK_{12}\right]
\left(\bQ^{(2|1)}\right)^{-1}
\left(\bg_2-\bp^{(2|1)}\right).
\label{eq:gpred_update_t}
\end{multline}

We therefore have
\begin{equation}
E_{\bg_2|\bg_1}\left\{p^{(f(x)|1+2)}\right\}=p^{(f(x)|1)}
\label{eq:moment_1}
\end{equation}
and
\begin{multline}
E_{\bg_2|\bg_1}
\left\{
\left(p^{(f(x)|1+2)}-p^{(f(x)|1)}\right)
\left(p^{(f(x)|1+2)}-p^{(f(x)|1)}\right)^T
\right\} \\
= 
\left[
\bK_{x2}
-\bK_{x1}\left(\bK_{11}+\bSigma_1\right)^{-1}\bK_{12}
\right]\\
\times\left(\bQ^{(2|1)}\right)^{-1}
\left[
\bK_{2x}
-\bK_{21}\left(\bK_{11}+\bSigma_1\right)^{-1}\bK_{1x}
\right].
\label{eq:moment_2}
\end{multline}

Inserting Equations \cref{eq:moment_1,eq:moment_2} into Equations \cref{eq:loglik_g2,eq:TAD_Obj_1}, we get Equation \cref{eq:TAD_of}.
\end{proof}

A further development that is numerically beneficial consists of the transformation of the term $Q^{(f(x)|1+2)}$ in Equation \cref{eq:TAD_of}.

\begin{theorem}\label{thm:Qthm}
The term $Q^{(f(x)|1+2)}$ in Equation \cref{eq:pred_1+2_cov_t} may be written in update form as follows:
\begin{equation}
Q^{(f(x)|1+2)}=Q^{(f(x)|1)}-T,\label{eq:Qthm}
\end{equation}
where
\begin{equation}
T\equiv\left[
\bK_{x2}-\bK_{x1}\left(\bK_{11}+\bSigma_1\right)^{-1}\bK_{12}
\right]
\left(\bQ^{(2|1)}\right)^{-1}
\left[
\bK_{2x}-\bK_{21}\left(\bK_{11}+\bSigma_1\right)^{-1}\bK_{1x}
\right].
\label{eq:correction_term_t}
\end{equation}
\end{theorem}

The proof is a straightforward application of part (ii) of \cref{thm:gpred_update} to Equation \cref{eq:pred_1+2_cov_t}.

The numerical benefit of writing $Q^{(f(x)|1+2)}$ in update form stems from the advantage of solving the linear problem associated with the term $\left(Q^{(f(x)|1+2)}\right)^{-1}$ in Equation \cref{eq:TAD_of} in the vector space $W_2$ only (i.e., dealing only with the term $\left(\bQ^{(2|1)}\right)^{-1}$) instead of in the (typically much larger) space $W_{1+2}$. 

The quantity $T$ carries all the dependence on the $\bx_2$, so that it is convenient to combine Equations \cref{eq:TAD_of,eq:Qthm,eq:correction_term_t} to write the TAD acquisition function as follows:
\begin{multline}
\mathcal{L}_{TAD}(x,\bx_1,\bg_1,\bx_2)=
-\frac{1}{2}\log\det \left[Q^{(f(x)|1)}-T\right]\\
-\frac{1}{2}\left(f_{T}-p^{(f(x)|1)}\right)^{T}
\left(Q^{(f(x)|1)}-T\right)^{-1}\left(f_{T}-p^{(f(x)|1)}\right) \\
-\frac{1}{2}
\mathrm{Trace}\biggl\{
T\left(Q^{(f(x)|1)}-T\right)^{-1}
\biggr\}. \label{eq:TAD_of2}
\end{multline}

We will use Equation \cref{eq:TAD_of2} as the  acquisition for our optimization
problem(s), as discussed in further detail in the next section. Thus the \textit{TAD optimization problem} is
\begin{equation}
\argmax_{x,\bx_2}\mathcal{L}_{TAD}(x,\bx_1,\bg_1,\bx_2)+\mathcal{L}_X(x,\bx_2),\quad
\mathrm{subject\ to}\quad x\in X, \bx_2\in X^{N_2}.
\label{eq:TAD_opt}
\end{equation}
In \cref{eq:TAD_opt} we have defined a penalty function $\mathcal{L}_X(x,\bx_2)$ whose purpose is to keep the points $x$, $\bx_2$ from straying too far from the domain $X$. This term is identically zero when all its arguments are strictly inside $X$, is strictly negative, and grows in magnitude as any of its arguments outside of $X$ increases its distance from the boundary of $X$.

One property of expressions such as Equation \cref{eq:TAD_of2} that is not obvious at first is that in principle they remain finite even in the noise-free ($\bSigma_1=0$, $\bSigma_2=0$) case when some of the $\bx_2$ points redundandtly approach $\bx_1$ points, despite the fact that the expression for $T$ contains the term $\left(\bQ^{(2|1)}\right)^{-1}$, which is not even defined in this limit, because the predictive variances of the redundant $\bg_2$ data vanish, making the inverse predictive covariance singular. The fact that $\mathcal{L}_{TAD}()$ remains finite nonetheless in this limit is understandable from an information-theory perspective, since all the ostensibly singular terms in $\mathcal{L}_{TAD}()$ arise in connection with computations of the predictive covariance $Q^{(f(x)|1+2)}$ (including the correction term $T$), and this covariance corresponds to a predictive distribution that should ignore the arrival of redundant information, rather than as representative of the kind of crisis requiring a descent into singularity. So we should expect points in $\bx_2$ that approach redundancy by encroaching on points in $\bx_1$ to drop out of the computation in the noise-free limit. This is in fact the case, as we establish in \cref{thm:noise-free}. It follows that singularities that may be encountered in the TAD algorithm in these sorts of redundant-approach circumstances are purely numerical and must be dealt with as numerical issues.

The function $\mathcal{L}_{TAD}()$ has some other interesting features. The
proposed future acquisition points $\bx_2$ have unknown $\bg_2$ data but a known predictive
covariance $Q^{(f|1+2)}$, and this covariance naturally takes up station in the
log-determinant term (first) and in the data-fit term (second). In default of
the $\bg_2$ data, the role of the mean in the data-fit term is played by the
predictive mean $p^{(f|1)}$. The third, trace, term is manifestly
negative-definite and represents the uncertainty-broadening effect of the
unknown latent $\bg_2$-values. 

Assuming that the GP model adequately represents the experimental response function $f()$, then, roughly speaking, the form of $\mathcal{L}_{TAD}()$ leads to two available paths for its maximization, which we will refer to as the ``Hopeful Path'' and the ``Hopeless Path'' below. For the purpose of this discussion, it is helpful to think of the log-determinant term and the data-fit term as grouped together, since they constitute the logarithm of a normalized Gaussian. We will refer to the sum of those two terms as the ``log-Gaussian'' term and contrast them to the third, ``trace,'' term.

\textit{Hopeful Path:} If there exists a point $x_0\in X$ such that $f(x_0)\approx f_T$ (as measured by the scale set by eigenvalues of the covariance $Q^{(f(x_0)|1+2)}$), then $\mathcal{L}_{TAD}()$ can be efficiently maximized by maximizing the log-Gaussian term. This can happen if the target point $x$ approaches $x_0$, so that $p^{(f(x)|1)}\approx f_T$, and the data-fit term in Equation \cref{eq:TAD_of2} is as close to zero as possible. Simultaneously there is some advantage in arranging the settings $\bx_2$ so as to reduce the eigenvalues of $Q^{(f(x_0)|1+2)}=Q^{(f(x_0)|1)}-T$ somewhat with respect to those of $Q^{(f(x_0)|1)}$, in order to grow the log-determinant term without paying too much of a data-fit penalty. Viewed in terms of the $E$-dimensional  multivariate normal distribution in $\mathbb{R}^E$, this amounts to having enough freedom to move the  mean vector $p^{(f(x)|1)}$ close to the point  $f_T$ and adjusting the covariance  $Q^{(f(x_0)|1+2)}$ (through the variation of the points $\bx_2$), so as to maximize the density of the distribution at $f_T$.

\textit{Hopeless Path} On the other hand, if there is no such point $x_0$ in $X$---or there is no longer reason to believe that such a point exists---then the log-normal term can no longer be maximized in this manner.  In this case there is no benefit in reducing eigenvalues of $Q^{(f(x_0)|1+2)}$ with respect to those of $Q^{(f(x_0)|1)}$, since this will only increase the data-fit term. The only possibility is to reduce the matrix elements of $T$ as much as possible, in order to shrink the magnitude of the trace term. Doing so also reduces the covariance $Q^{(f(x_0)|1+2)}$ to $Q^{(f(x_0)|1)}$, so that the variances of the log-normal term are as large as possible, making that term as small as possible.

As obscure as these observations may appear at present, they are in fact key to understanding the exploration/exploitation balance embodied by the TAD algorithm, and they are also essential to the functioning of the TAD convergence rules. We will return to this point after explaining the role played by expected information gain in TAD.

\subsection{Expected Information Gain}\label{subsec:EI}

As we remarked above, there is a clear parental lineage connecting TAD to, among other algorithms, optimal experiment design. Since the work of Lindley \cite{lindley1956measure}, information theoretic gain functions have played a prominent role in the subject (see, e.g.,  \cite{ryan2003estimating,zhang2016adaptive}). In the context of normal models, optimizing expected information gain is equivalent to the criterion of ``D-optimality,'' that is, to minimization of the determinant of the predictive covariance \cite{10.2307/2246015}.

It turns out that the TAD acquisition function bears a close relation to expected information gain (EIG) and that relationship is exploitable to establish a convergence criterion. As we will now show, a quantity closely related to $\mathcal{L}_{TAD}()$ can tell us how much information we may expect to gain---in bits, or nats---about the experimental response function from the samples to be acquired at the next iteration. We will use this in the TAD algorithm to decide whether the algorithm should give up the search: if the expected information gain from the next iteration is less than some settable threshold and a solution has not yet been located, the algorithm will conclude that the uncertainty in the function has been reduced to a sufficient extent that no solution is in prospect, and it will terminate the search.

We define the expected information gain $\mathcal{I}(x,\bx_1,\bg_1,\bx_2)$ as follows:
\begin{equation}
\mathcal{I}(x,\bx_1,\bg_1,\bx_2)=
E_{\bg_2|\bg_1}\left\{
KL\left[
\mathcal{N}\left( p^{(f(x)|1+2)},Q^{(f(x)|1+2)} \right)
\,||\,
\mathcal{N}\left( p^{(f(x)|1)},Q^{(f(x)|1)} \right)
\right]
\right\}.
\label{eq:EI_1}
\end{equation}
Here, $KL(P||Q)$ denotes the ``Kullback--Leibler divergence'' or relative entropy between the two distributions $P$, $Q$ that constitute its arguments \cite{kullback1951information}:
\begin{equation}
KL(P||Q)\equiv\int dP \log\left(\frac{dP}{dQ}\right)
=\int d\mu(z)\,\pi_P(z)\log\left(\frac{\pi_P(z)}{\pi_Q(z)}\right),
\label{eq:KL}
\end{equation}
where the distributions $P$ and $Q$ are assumed mutually absolutely continuous, and hence possessed of a Radon--Nikodym derivative $dP/dQ$, and also absolutely continuous with respect to a third measure $\mu(z)$, with respect to which they have densities $\pi_P(z)$, $\pi_Q(z)$, yielding the more conventional expression for entropy in the second equality of Equation \cref{eq:KL}

This quantity has well-understood and desirable information-theoretic
properties, among which the fact that it is positive-semidefinite and zero only
if $P$ and $Q$ agree almost everywhere as probability measures. As applied in Equation \cref{eq:EI_1}, it furnishes a measure of the information payload in the points $(\bx_2,\bg_2)$ responsible for transforming $\mathcal{N}\left( p^{(f(x)|1)},Q^{(f(x)|1)} \right)$ into $\mathcal{N}\left( p^{(f(x)|1+2)},Q^{(f(x)|1+2)} \right)$. This is measured in bits if the logarithm is base-2 or in ``nats'' if it is a natural log.The expression for the KL divergence between two multivariate normal distributions is given by \S A.5 of \cite{RandW}:
\begin{multline}
KL\left[
\mathcal{N}\left( p^{(f(x)|1+2)},Q^{(f(x)|1+2)} \right)
\,||\,
\mathcal{N}\left( p^{(f(x)|1)},Q^{(f(x)|1)} \right)
\right]
=\\
\frac{1}{2}\log\det\left[
Q^{(f(x)|1+2)\,-1}Q^{(f(x)|1)}
\right]\\
+\frac{1}{2}\mathrm{Trace}
\Bigg[
Q^{(f(x)|1)\,-1}
\bigg(
\left[
p^{(f(x)|1+2)}-p^{(f(x)|1)}
\right]
\left[
p^{(f(x)|1+2)}-p^{(f(x)|1)}
\right]^T\\
+Q^{(f(x)|1+2)} - Q^{(f(x)|1)}
\bigg)
\Bigg].
\label{eq:KL_2}
\end{multline}

If we insert Equation \cref{eq:KL_2} into Equation \cref{eq:EI_1}, we may immediately perform the expectation value using Equation \cref{eq:moment_2}. At the same time we may exploit the update formula Equation \cref{eq:Qthm} and the definition of the quantity $T$ in Equation \cref{eq:correction_term_t} to obtain the more compact form
\begin{align}
\mathcal{I}(x,\bx_1,\bg_1,\bx_2)&=
-\frac{1}{2}\log\det\left[
\left(Q^{(f(x)|1)}-T\right)Q^{(f(x)|1)\,-1}
\right]
+\frac{1}{2}\mathrm{Trace}
\left[
Q^{(f(x)|1)\,-1}(T-T)
\right]\nonumber\\
&=-\frac{1}{2}\log\det\left[
1-TQ^{(f(x)|1)\,-1}
\right].
\label{eq:EI_2}
\end{align}

This form is computationally convenient, but it obscures the intuitive meaning of $\mathcal{I}(x,\bx_1,\bg_1,\bx_2)$, which may be rewritten 
\begin{equation}
\mathcal{I}(x,\bx_1,\bg_1,\bx_2)=\frac{1}{2}\log\frac{\det\left( Q^{(f(x)|1)}\right)}{\det\left(Q^{(f(x)|1+2)}\right)}.
\label{eq:EI_3}
\end{equation}
%

%
In this form it is immediately clear that the expected information represents the log of the reduction in the volume of the ``1-$\sigma$'' ellipsoid of the covariance matrix $Q^{(f(x)|1+2)}$ relative to that of the covariance matrix $Q^{(f(x)|1)}$, which is an entirely satisfactory interpretation of the information content of the $\bx_2$ samples with respect to the response function at the control point $x$.

The forms of Equations \cref{eq:EI_2,eq:EI_3} allow us to connect $\mathcal{I}()$ to the discussion of minimization of $\mathcal{L}_{TAD}()$ at the end of \cref{subsec:objectivefn}. Recall that we observed that in the event that there are still plausible regions of $X$ where the response function might hold a solution point $x_0$ satisfying $f(x_0)\approx f_T$ to within the uncertainties defined by $Q^{(f(x)|1+2)}$---that is, the ``Hopeful Path''---maximization of $\mathcal{L}_{TAD}()$ proceeds by moving the target point $x$ toward $x_0$ and reducing the eigenvalues of $Q^{(f(x)|1+2)}$ with respect to those of $Q^{(f(x_0)|1)}$ somewhat.  This is tantamount to increasing the peakedness of the posterior predictive distribution at its center $p^{(f(x)|1)}$, which is to say near $f_T$, and can be accomplished by increasing the eigenvalues of T (compare Equation \cref{eq:TAD_of2}). And since the eigenvalues of $\bm{Q}^{(f(x)|1+2)}$ can continue to shrink, the EIG can continue to be appreciable, by Equation \cref{eq:EI_3}. 

On the other hand, if the exploration of $X$ has reached the stage where no such $x_0$ can reasonably be held to exist---the ``Hopeless Path''---this fact is naturally reflected in the minimization behavior of $\mathcal{L}_{TAD}()$ by a transition to \textit{decreasing} the matrix elements and eigenvalues of $T$. In this regime, it follows from Equation \cref{eq:EI_2} that the EIG is $\frac{1}{2}\mathrm{Trace}\left[TQ^{(f(x)|1)-1}\right]+\mathcal{O}(T^2)$. So each region of control space more or less stabilizes its local posterior uncertainty $\bm{Q}^{(f(x)|1)}$, since this can only change by the small differences $T$ tolerated by the acquisition function optimization, according to Equation \cref{eq:Qthm}; and the EIG declines in each region as more points are found that depress $T$.
It follows that \textit{the signature of the advent of this phase should be a corresponding decline in expected information gain}.

We can now state the sense in which TAD embodies the exploration/exploitation tension. The log-normal term in $\mathcal{L}_{TAD}()$ is sensitive to similarity between the GP surrogate $p^{(f(x)|1)}$ and $f_T$. As long as there continue to exist regions of $X$ where the shape of $f()$ is substantially uncertain or where $f(x)\approx f_T$ is plausible, that term will continue to dominate the maximization.  Simultaneous monitoring of the expected information gain will show that each successive corresponding probe by $\bx_2$ samples yields finite information.  Eventually, one of two outcomes should occur, however. Either a solution region is located and exploited---a point $x$ satisfies $f(x)\approx f_T$ to within tolerances (in a sense to be made explicit below)---or else a transition occurs to a second optimization regime, in which the search is recognized as ``hopeless.''  This regime occurs because $X$ has been explored to the satisfaction of the GP model, and it is recognizable from the decline of the rate of expected information gain.

The TAD algorithm, described in \cref{sec:algorithm}, reflects these observations in an essential way.

\subsection{Gaussian Process Model Choice}\label{subsec:kernels}

We now turn to a few necessary  details concerning the VVGP models adopted within TAD. As outlined in \cref{subsec:setup}, the GP model is established by specifying a prior vector-valued mean function $\mu:X\rightarrow\mathbb{R}^E$ and a prior matrix-valued covariance function $C:X\times X\rightarrow\mathbb{R}^E\times\mathbb{R}^E$ that is positive-definite as an integral kernel. As is usual in GP modeling, both $\mu(\cdot)$ and $C(\cdot,\cdot)$ have hyperparameters that permit the model to adjust itself flexibly to the structure of the experimental response function inferred from the data, and we avail ourselves of that flexibility within the TAD algorithm at every iteration.

We choose a constant mean function $\mu(\cdot)$, with different constants $\mu_i$, $i=1,\ldots,E$ for each task component. These are hyperparameters optimized at each stage of the algorithm.

For our VVGP covariances we choose  ``sums of separable kernels''  \cite{alvarez2012kernels}. These are constructed from sums of Kronecker products of the form
\begin{equation}
C_{ii^\prime}(x,x^\prime)=\sum_{l=1}^P K^{(l)}\left(x,x^\prime;\theta\right)\times\kappa^{(l)}_{ii^\prime}(\theta).
\label{eq:kronecker}
\end{equation}
In Equation \cref{eq:kronecker}, the $K^{(l)}\left(x,x^\prime;\theta\right)$ are
conventional scalar GP kernels parameterized by some parameters $\theta$, while
the $\kappa^{(l)}_{ii^\prime}(\theta)$ are symmetric positive-definite matrices,
also parameterized. The choice of form of $K()$ has the usual flexibility that is
customary in GP modeling \cite[Chapters 4 and 5]{RandW}. The selection of
$\kappa()$ also leaves much freedom, discussed at length in
\cite{alvarez2012kernels}. 

The number of components $P$ is a modeling choice. In our view a choice of $P\ge
2$ seems advisable. The reason is that in multitask inference settings it is
well known that in the case $P=1$, where the covariance is a simple Kronecker
product, the only interdependence between different task components is through
the noise; that is, in the limit of vanishing noise the different vector
components of the model become statistically independent
\cite{williams2007multi,alvarez2012kernels}. We feel that one may be needlessly
offering a hostage to fortune by relying on noise terms rather than on model
terms to ensure that learned vector components exhibit the correlations required
of them by the data. By choosing $P\ge 2$ this possibility is avoided. 

No principled reason exists why the kernel components $K^{(l)}\left(x,x^\prime;\theta\right)$ corresponding to different values of $l$ should not correspond to different functional forms. However, we have found that dynamically increasing $P$ is an effective way to improve model quality when validation indicators suggest a model refinement is required, as we describe in \cref{subsec:validation2}. We therefore use the same functional form for all the $K^{(l)}()$, for simplicity. Each Kronecker component in the sum \cref{eq:kronecker} has its own parameters $\alpha\in\mathbb{R}^r$, so that the full hyperparameter set is $\theta\in\mathbb{R}^{rP}$.

The parameters $\mu\in\mathbb{R}^E$ and $\theta\in\mathbb{R}^{rP}$ are to be determined at each iteration of the algorithm by the customary GP procedure of maximizing the marginal log-likelihood \cite[Chapter 2]{RandW}:
\begin{multline}
\mathcal{L}_{GP}(\mu,\theta)=-\frac{EN_1}{2}\log(2\pi) -\frac{1}{2}\log\det\left[\bK_{11}(\theta)+\bSigma_1\right]\\
-\frac{1}{2}\left[\bg_1-\bmu(\mu)\right]^T\left[\bK_{11}(\theta)+\bSigma_1\right]^{-1}\left[\bg_1-\bmu(\mu)\right],
\label{eq:gploglik}
\end{multline}
which is to say they are given by the \textit{GP optimization problem}
\begin{equation}
\argmax_{\mu,\theta}{\mathcal{L}_{GP}(\mu,\theta)},\quad\mathrm{subject\ to}\quad
\mu\in\mathbb{R}^E,\,\theta\in\mathbb{R}^{rP}.
\label{eq:gpopt}
\end{equation}

\subsection{Model Validation}\label{subsec:validation}

If the experimental response function has complicated structure, early attempts at GP model selection likely could be unsuccessful at capturing that structure, especially if the control and design space are relatively high dimensional.   Therefore one must have at hand tools that enable the assessment of model adequacy.

The black-box function optimization method of \cite{jones1998efficient} features a GP model together with a LOO cross-validation approach, in which a ``test set'' is effectively created by leaving out individual points, the model is trained on the remaining points, and model validity is assessed on the basis of whether the residuals of the test points are plausible in their predictive normal distributions.

We modify this approach here, taking advantage of the fact that TAD has natural test sets in the samples $\bg_2$ acquired at every iteration: those values are first predicted (probabilistically) and then sampled. 

We have a multivariate normal model for the predicted  $\bg_2$ sample (Equations \cref{eq:datapred_t,eq:datapred_mean_t,eq:datapred_cov_t}).  
In virtue of this distribution, we know that if the model is approximately valid, the squared \textit{Mahalanobis distance}
\begin{equation}
Q\equiv\left[\bg_2-\bp^{(2|1)}\right]^T
\left[\bQ^{(2|1)}\right]^{-1}\left[\bg_2-\bp^{(2|1)}\right]
\label{eq:validation_Q}
\end{equation}
must be a random variable governed by a $\chi^2$ distribution with $N_2\times E$ degrees of freedom  \cite[\S4.2.3]{Eadie:100342}.  We use this observation to monitor model adequacy in a quantitative fashion, looking up the right-tail $P$-values of $Q$ and in its $\chi^2$ distribution at each iteration. The $P$-values are expected to be uniformly distributed, $P\sim\mathcal{U}([0,1])$. If the $P$-values span a plausible range in the unit interval,  one can be reassured about model adequacy (or at least fail to be alarmed). If $P$-values are tiny, then  a problem likely  exists.  In general, the meaning of ``tiny'' depends on the number of $P$-value samples acquired---if $P_1\le P_2\ldots\le P_N$ are $N$ i.i.d. draws from $\mathcal{U}([0,1])$, then $E(P_1)=1/(N+1)$ \cite{david2004order}, so that a ``tiny'' $P$-value would be $P\ll 1/(N+1)$.

The trends in $Q$ from iteration to iteration may be informative about the necessity to tinker with the model, as new data reveals more detailed structure in the response function. In fact, one should expect that when fewer $\bg_1$ sample points are available, so that the response function model is noisy and poorly constrained, even a simple model with (say) few Kronecker components $P$ in Equation \cref{eq:kronecker} could be capable of producing reassuring values of $Q$, whereas in later iterations, as sample points accumulate and the response surface becomes better ascertained by the data, imperfections in the surrogate model can be exposed and detected in the tendency of $Q$ to ``peg'' at values near zero.

An additional effect that we noticed in numerical experiments described in \cref{sec:results} is that $Q$ can indicate a  poor model fit in  early iterations of the algorithm to be described, when  few points have been acquired from a limited region of $X$ and the TAD optimization problem \cref{eq:TAD_opt} has led the $\bx_2$ points to a  different part of $X$. This surprised us at first but should have been expected, since the GP optimization solution of \cref{eq:gpopt} based on information from a small region of $X$ cannot be expected to furnish a model that adequately represents the behavior of the experimental response over the whole of $X$. We will describe a strategy  that restores the benefit of the validation measure $Q$ in this case.

In addition to $Q$ we have found that while carrying out the GP optimization problem of Equation \cref{eq:gpopt}, it is  helpful to monitor the quadratic form part of the log-likelihood in Equation \cref{eq:gploglik}, that is,
\begin{equation} 
S\equiv \left[\bg_1-\bmu(\mu)\right]^T\left[\bK_{11}(\theta)+\bSigma_1\right]^{-1}\left[\bg_1-\bmu(\mu)\right].
\label{eq:verification_S}
\end{equation}
This quantity should not, strictly speaking, be viewed as a random variable in the same sense as $Q$. As a heuristic matter, however, one does expect that if the GP model makes sense and the optimization has converged to a reasonable range of parameters, then $S$ should take values consistent with those sampled from a $\chi^2$ distribution with $N_1\times E-E$ degrees of freedom. This is by no means a ``validation,'' but it is a useful sanity check, as well as a valuable convergence criterion for the optimization routine, which is probably wasting computing cycles if it does not halt when the $P$-value from the $\chi^2$ distribution is already at some plausible value in the unit interval.

This concludes our discussion of the theoretical elements. We now assemble them into the TAD algorithm.

\section{The TAD Algorithm}\label{sec:algorithm}

\begin{figure}[t]
\includegraphics[width=0.325\textwidth]{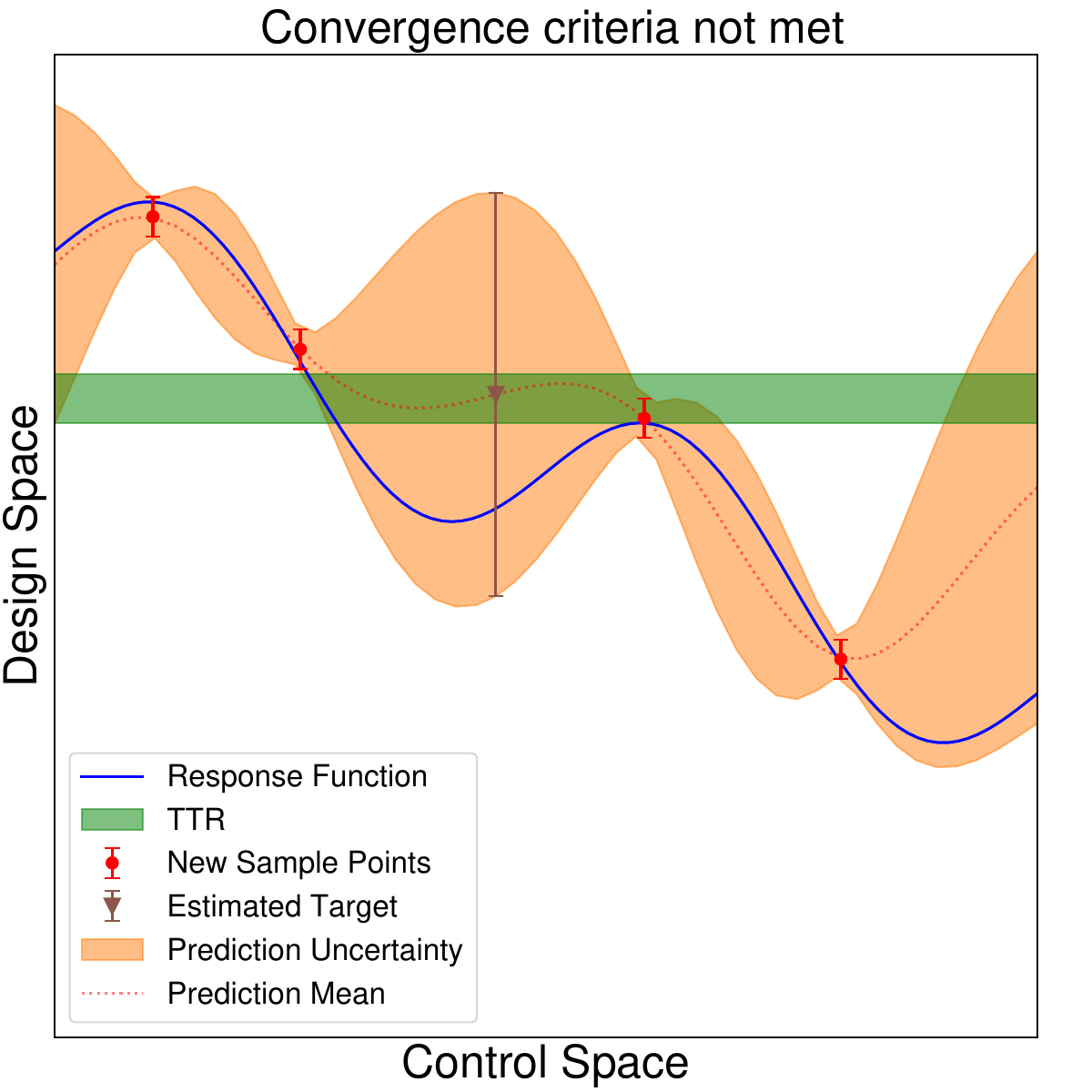}
\includegraphics[width=0.325\textwidth]{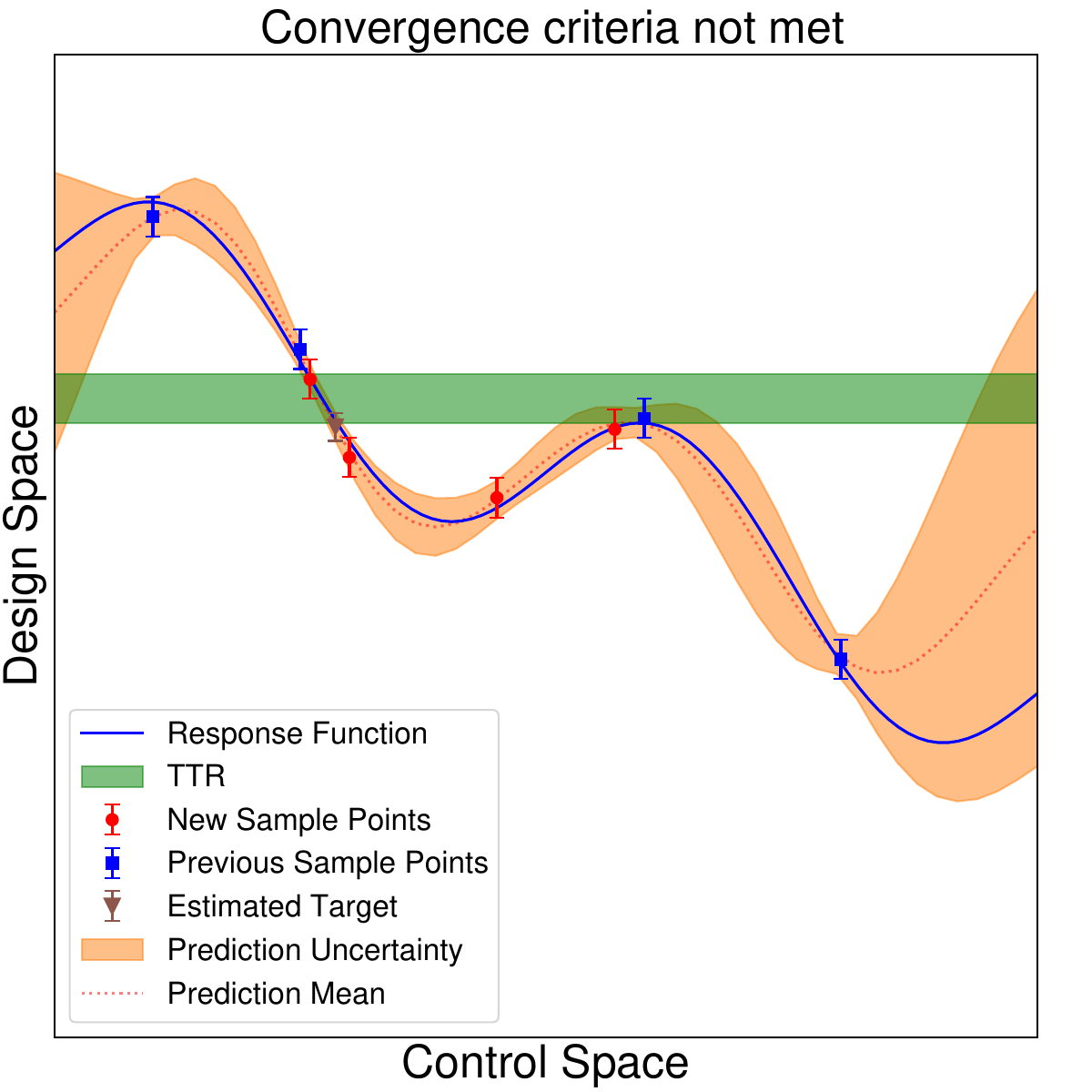}
\includegraphics[width=0.325\textwidth]{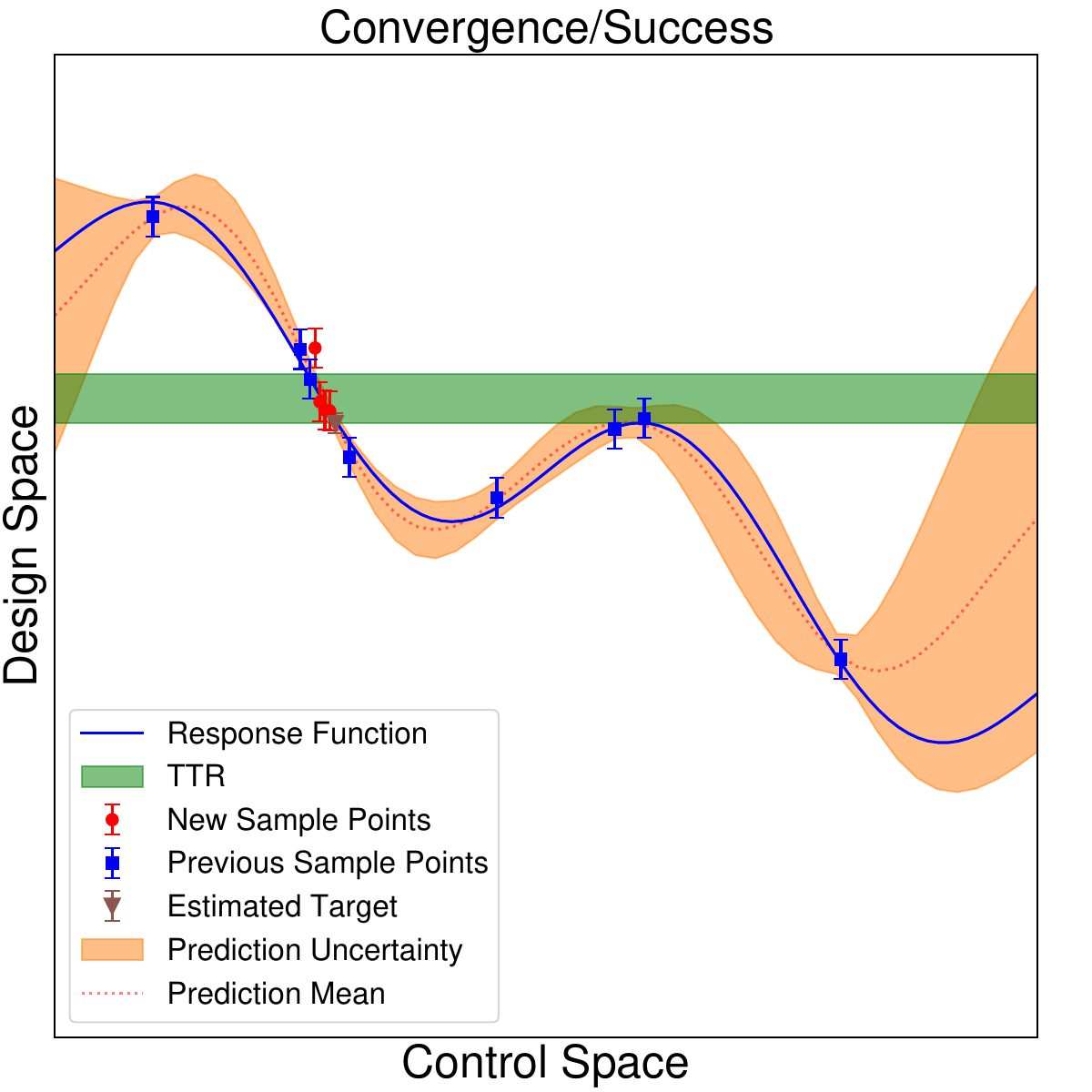}\\
\includegraphics[width=0.325\textwidth]{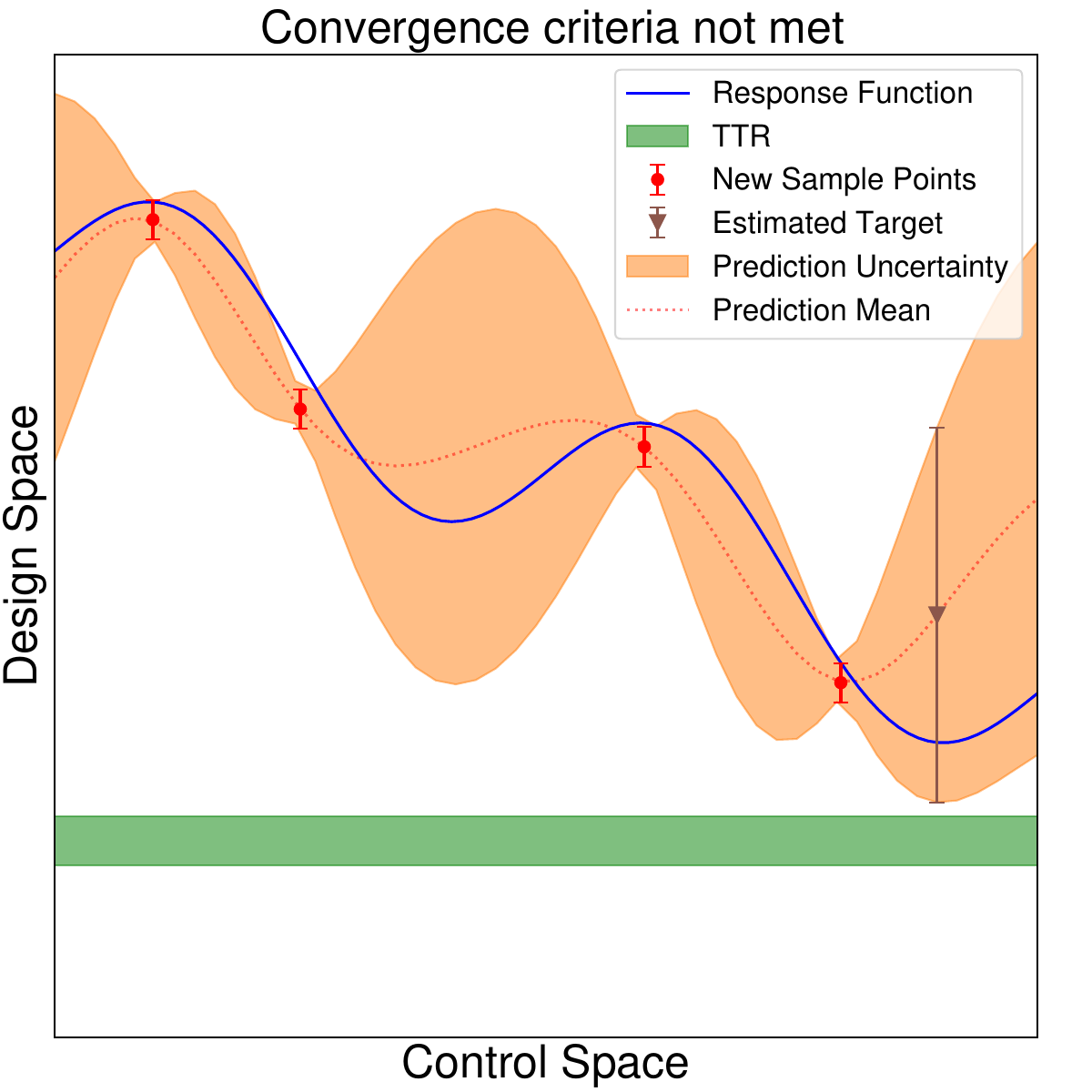}
\includegraphics[width=0.325\textwidth]{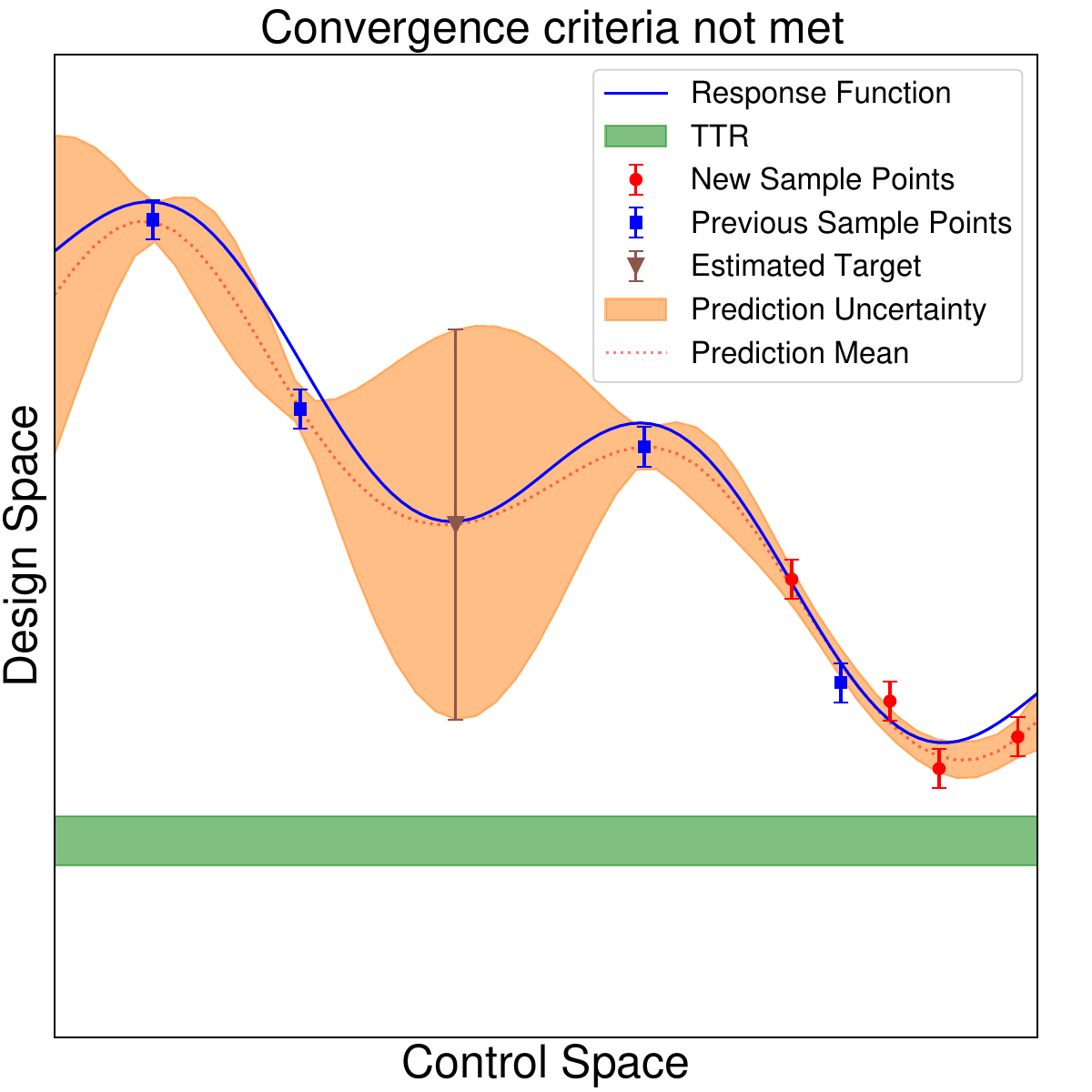}
\includegraphics[width=0.325\textwidth]{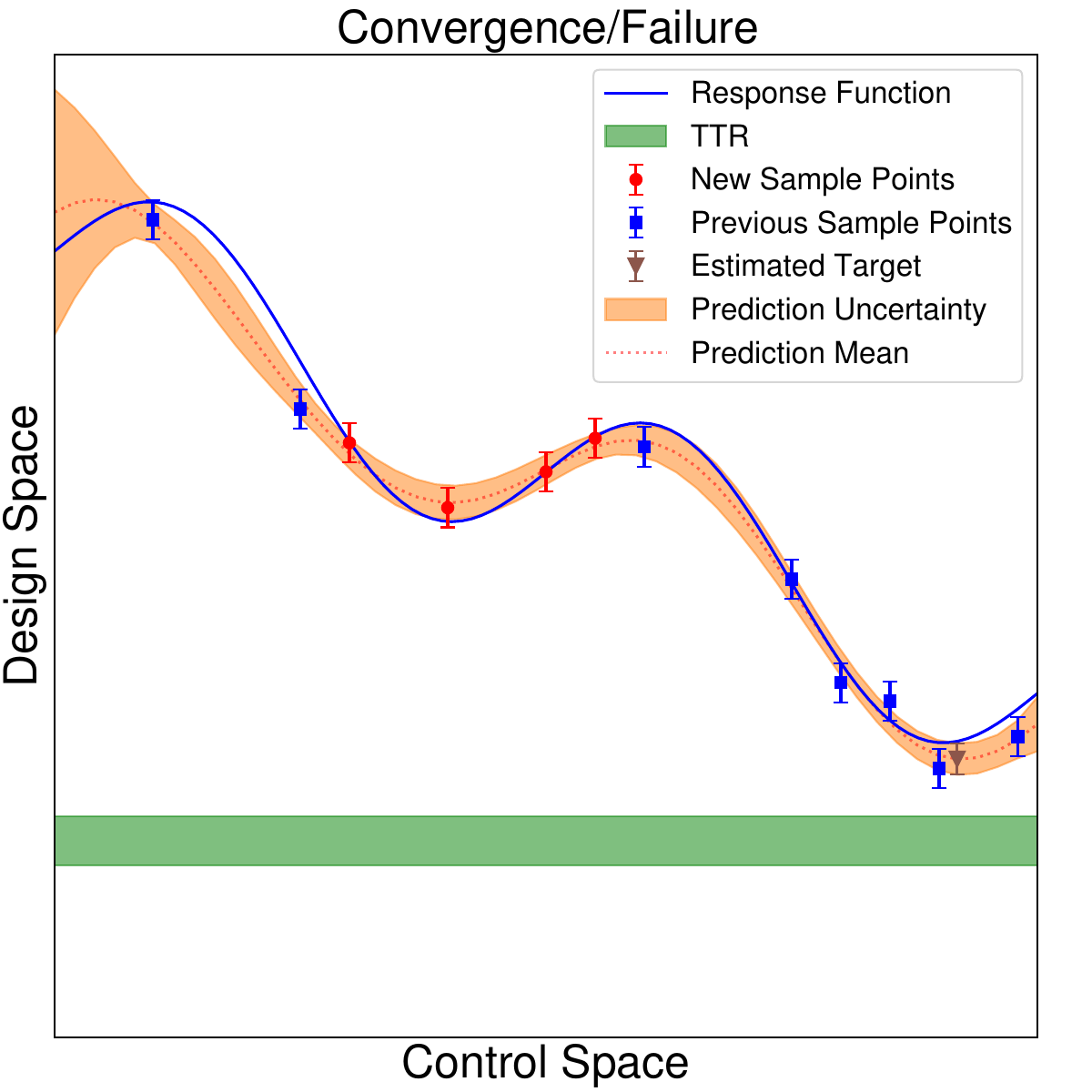}\\
\caption{Schematic illustration of the TAD algorithm. The panels display an idealized problem in which the control space is collapsed onto the $x$-axis and the design space is collapsed onto the $y$-axis. The algorithm samples the control space to discover the shape of the response surface while searching the space for the target design. The top panels illustrate the case where a control setting exists such that the target design exists, within tolerance. The bottom panels illustrate the case where no such design is to be found. \label{figure:Illustration}}
\end{figure}

The TAD algorithm in effect defines the TTR--- a rectangular parallelepiped centered on the target design $f_T$---and iterates,  at each iteration maximizing $\mathcal{L}_{TAD}(x,\bx_1,\bg_1,\bx_2)$ with respect to new sample points $\bx_2$ and a target point $x$, either until  the predicted mean design and its uncertainty (as computed by using Equations \cref{eq:gpred_update_t} and \cref{eq:Qthm}) fit inside the TTR or until it is possible to conclude that no point satisfying this condition exists in the control space $X$.


The process is illustrated in idealized form in Figure \cref{figure:Illustration}, in which the control space $X$ and the design space $Y$ are collapsed to one dimension and displayed as the $x$-axis and $y$-axis of the figure illustrations, respectively. The solid blue curve shows the experimental response, while the TTR is illustrated by the horizontal green band stretching across each plot. New ($\bx_2,\bg_2$) noisy sample points are shown as red circle markers, while old ($\bx_1,\bg_1$) sample points are shown as blue square markers. The Gaussian process model mean is shown by the dotted line, while the 1-$\sigma$ variance model uncertainty is shown by the shaded area. At each stage the control point closest in predictive probability to the target, together with its uncertainty, is shown by the brown triangle marker and error bar.

The top panels illustrate the case where a solution exists in the search space and show how the algorithm is intended to preferentially shrink the uncertainty in the response surface in regions where the predictive probability of the target is highest until the solution is located.  The lower panels illustrate the other possible case: a solution does not exist, possibly because the TTR has been specified too stringently for some design feature.  In this case, TAD shrinks the uncertainty in the function to the point where it can conclude (in a manner more systematic than merely glancing at a graphical illustration!) that no such solution exists.
As this discussion illustrates, optimization of the TAD acquisition function $\mathcal{L}_{TAD}()$ in the algorithm sketched above (and to be described in detail below) provides a tool for balancing the competing objectives of exploration and exploitation, the classic tension between discovering the global shape of the response function and discovering the optimal response point. 

\subsection{Tolerances and Stopping Rules}

Clearly the question of convergence has to do with whether the uncertainty associated with the estimated best target control point fits inside the TTR.
For making this comparison, let us define the \textit{uncertainty box} (UB) of the unknown function in a manner similar to the TTR, which is to say simply a rectangular parallelepiped at each control point $x\in X$ centered on the current GP model mean vector at $x$, $p^{{(f(x)|1)}}$, with sides along each coordinate axis in $Y$ of length equal to twice the corresponding root variance (diagonal element of $Q^{(f(x)|1+2)}$). The reason for defining the UB this way, instead of by an appeal to the principal ellipsoids of the covariance $Q^{(f(x)|1+2)}$), is for ease of comparison with the TTR and because  no real benefit accrues from the algebraic complexity that arises from a too-principled geometric approach here.

Initially the surrogate function is assumed known vaguely, and the (prior) UB at each point is wide enough to contain the TTR. At each iteration TAD chooses sample points to ``squeeze'' the UB selectively in regions of the control space where target design seems most likely, according to the current model. This variance reduction may improve the chances of locating the target in the region of interest, in which case more attention is focused on this region and more sample points are invoked to exploit its solution-bearing potential. Alternatively, the variance reduction may make the target less likely in this region, or even put the UBs of points in this region definitively outside of the TTR, in which case sample points are directed to  more promising regions. 

If a region is discovered to hold a point where a UB fits entirely within the TTR, the algorithm declares convergence and a successful solution. This condition is called ``convergence/success.''

Alternatively, the algorithm can signal that it is giving up on the search by transitioning from the ``Hopeful Path'' to the ``Hopeless Path'' of optimization, a transition that is clearly detectable in a decline in expected information $\mathcal{I}()$ to very small values. By thresholding on $\mathcal{I}()$, TAD declares convergence with no solution. This condition is called ``convergence/failure.''

These considerations are admittedly heuristic. We do not have a proof that $\mathcal{I}()$ must eventually decline and remain below some threshold, nor have we derived the rate at which decline should occur on average.  Such results might be obtainable through results on average and/or optimal \textit{learning curves} describing the rate of decline of GP generalization error with training data size \cite{sollich1998learning,opper1998general,williams2000upper}. Here we confine ourselves to noting that in our numerical experiments such as the ones described in $\S$\ref{sec:results} we observe $\mathcal{I}()$ to decline noisily but steadily enough to provide a useful gauge of solution failure.

Note the considerable benefit that has followed from the information-theory approach: while ``convergence/success'' is a local criterion, ``convergence/failure'' is a \textit{global} statement about the absence of a solution in $X$ satisfying the required tolerances. Ironically, it is much harder to fail properly than it is to succeed! A naive approach to establishing a ``convergence/failure'' criterion---say, an explicit search of $X$, partitioned a priori into search volumes---could be  expensive if the dimension $D$ of $X$ is even moderately large.  As matters stand, the TAD acquisition function $\mathcal{L}()$ provides a built-in efficient strategy to search $X$, and the expected information $\mathcal{I}()$ furnishes a gauge of when that search transitions from hopeful to hopeless, providing exactly the required ``convergence/failure'' criterion.

\subsection{Model Validation}\label{subsec:validation2}

\begin{algorithm}[t]
\caption{Targeted Adaptive Design}\label{alg:TAD}
\begin{algorithmic}[1]
\Require TTR $\etattr$ with component intervals $\eta_{i}\gets\left[f_{Ti}-\tau_{i},f_{Ti}+\tau_{i}\right]$, $i=1,\ldots,E$, where $\tau_{i}$ is the $i$-th component of the tolerance vector $\tau$, and $f_{Ti}$ is the $i$th component of the target vector $f_{T}$
\Require $\mathcal{I}_0$, expected information threshold, and $N_\mathcal{I}$ number of consecutive iterations in which $\mathcal{I}<\mathcal{I}_0$ required to declare convergence/failure
\Require $\bx_1$, a set of initial design points of size $N_1$, and the corresponding samples $\bg_1$
\Require $x$, a target control point initialization
\Require $\bx_2\gets \left\{x_{2,l}\sim\mathcal{N}\left[x,\sigma^2\bm{I}_D\right]:l=1,\ldots,N_2\right\}$, initial cluster of proposed sample settings
\Require  $Converged\gets\textbf{False}$, $Check\_Model\gets\textbf{False}$, $Perturb\gets\textbf{True}$
\Require  $iter\gets 0$, $n\_check\gets 0$, $P\gets 2$, $n_\mathcal{I}\gets 0$
\While{$\textbf{not}\; Converged$}
    \If {$\textbf{not}\; Check\_Model$}
       \State $iter\gets iter + 1$
       \State Solve the GP optimization problem \cref{eq:gpopt} for hyperparameters $\mu$, $\theta$.
    \EndIf
    \State $x^{(0)}\gets x$, $\bx_2^{(0)}\gets\bx_2$
    \State $x,\bx_2\gets \mathrm{TAD\_OPT}\left(x^{(0)}, \bx_2^{(0)}, Perturb\right)$ \Comment{See \cref{alg:TAD_optfn}}
    \State Compute $Q$ using \cref{eq:validation_Q}
    \If{$Q>0.01$}
       \State $Check\_Model\gets\textbf{False}$, $n\_check\gets 0$
       \State Call Procedure \textsc{Check\_Convergence} \Comment{See \cref{alg:TAD_conv}}
        \State Acquire data $\bg_2$ at $\bx_2$ and $g$ at $x$
        \State $\bx_1\gets\bx_1\cup\bx_2\cup x$, $\bg_1\gets\bg_1\cup\bg_2\cup g$; $Perturb\gets\textbf{True}$
    \Else
    \State $n\_check\gets n\_check+1$
        \If{$n\_check == 2$} \Comment{Model validation failure}
            \State $P\gets P+1$ \Comment{Increase number of Kronecker products}
            \State Acquire data $\bg_2$ at $\bx_2$  
            \Statex\Comment{Guard against failing validation due to data inadequacy}
            \State $\bx_1\gets\bx_1\cup\bx_2$, $\bg_1\gets\bg_1\cup\bg_2$
            \State $\bx_2\gets\bx_2^{(0)}$, $x\gets x^{(0)}$, $Perturb\gets\textbf{False}$ \Comment{Restart from previous state}
            \State $n\_check\gets 0$, $Check\_Model\gets\textbf{False}$
        \Else \Comment{Model validation alert, but not yet alarm}
            \State Acquire data $\bg_2$ at $\bx_2$ and $g$ at $x$
            \State $\bx_1\gets\bx_1\cup\bx_2\cup x$,$\bg_1\gets\bg_1\cup\bg_2\cup g$
            \State $Perturb\gets\textbf{True}$, $Check\_Model\gets\textbf{True}$
        \EndIf
    \EndIf
\EndWhile
    \end{algorithmic}
\end{algorithm}
%
\begin{algorithm}[ht]
\caption{TAD Optimization Function}\label{alg:TAD_optfn}
\begin{algorithmic}[1]
\Function{TAD\_OPT}{$x^{(0)}$, $\bx_2^{(0)}$, \textit{Perturb}}
\Statex \Comment{$x^{(0)}$ and $\bx_2^{(0)}$ are initialization values, \textit{Perturb} is Boolean}
\If{\textit{Perturb}}
  \State $\bS_2\gets <(\bx_2^{(0)}-x^{(0)})(\bx_2^{(0)}-x^{(0)})^T>$
  \State Initialize $x\gets x_\epsilon\sim\mathcal{N}\left[x^{(0)},\epsilon\right]$;
  \State Initialize $\bx_2$: $x_{2,1}\gets x_\epsilon\sim\mathcal{N}\left[x^{(0)},\epsilon\right]$; $x_{2,l}\gets z_2\sim\mathcal{N}\left[x_0,\bS_2\right]$, $l=2,\ldots,N_2$
  \Statex\Comment{$\epsilon$ is a ``small'' perturbing covariance}
\Else
  \State Initialize $x\gets x^{(0)}$
  \State Initialize $\bx_2 \gets \bx_2^{(0)}$
\EndIf
\State Solve the TAD optimization problem \cref{eq:TAD_opt} for $x$ and $\bx_2$.
\State \textbf{return} $x$, $\bx_2$
\EndFunction
\end{algorithmic}
\end{algorithm}
\begin{algorithm}[h!]
\caption{TAD Convergence Testing Procedure}\label{alg:TAD_conv}
\begin{algorithmic}[1]
\Procedure{Check\_Convergence}{}
\State Calculate UB $u$:\\ $u_i\gets
\left\{
p^{(f(x)|1)}-\left[\left(Q^{(f(x)|1+2)}\right)_{ii}\right]^{1/2},p^{(f(x)|1)}+\left[\left(Q^{(f(x)|1+2)}\right)_{ii}\right]^{1/2}\right\}$, $i=1,\ldots,E$
\State $Convergence/Success\gets \forall (i\in\mathbb{N};i> 0;i\le E)\,u_i\subset \eta_i$
\State Calculate $\mathcal{I}(x,\bx_1,\bg_1,\bx_2)$ according to Equation \cref{eq:EI_2}
\If {$\textbf{not}\; Convergence/Success\;\textbf{and}\;\mathcal{I}(x,\bx_1,\bg_1,\bx_2)<\mathcal{I}_0$}
\State $n_\mathcal{I}\gets n_\mathcal{I}+1$
\EndIf
\State  $Convergence/Failure\gets n_\mathcal{I} > N_\mathcal{I}$
\State $Converged \gets Convergence/Success\;\textbf{or}\; Convergence/Failure$
\EndProcedure
\end{algorithmic}
\end{algorithm}

Our experience with TAD has led us to be  cautious about trusting conclusions drawn from an unvalidated GP model. Indeed,  more than once our intuition about the expected behavior of the algorithm, which we based on what appeared to be sound mathematics, foundered on problems due to model inadequacy. As we noted in $\S$\ref{subsec:validation}, one should expect a progressive loss of model adequacy as successive iterations of the algorithm introduce more sample data and better knowledge of the structure of the response function. We have therefore built into the algorithm an anticipation of loss of model adequacy with iteration number, as well as a mechanism for dynamic increase of model complexity where required.

Model complexity could be increased in a few different ways, by choosing different types of basis kernel functions, for example. In this work we choose the simplest option, which is to parameterize model complexity by the number $P$ of Kronecker components in Equation \cref{eq:kronecker}. When we need to complexify the model, we add a Kronecker component, thus increasing $P$ by 1.

We monitor model adequacy using the quantity $Q$ defined in Equation \cref{eq:validation_Q}. We set a smallness threshold for this quantity,  0.01, for example, and complexify the model if $Q$ falls below this value. In order to guard against unnecessary complexification, each time a validation test at the end of a TAD optimization produces a value  of $Q$ indicating model inadequacy, we restart the optimization from a different random initialization of the $\bx_2$ sample point locations. If the reinitialized optimization still indicates model inadequacy, then model complexity is increased. Otherwise the event is considered a false alarm.

In addition, prior to each such restart, we acquire the data $\bg_2$ at the $\bx_2$ sample point locations from the TAD optimization solution that ``failed'' validation. The reason is the observation from \cref{subsec:validation} that in early iterations, when only a few $(\bx_1,\bg_1)$ samples covering small regions of $X$ are available, the GP model parameter optimization usually produces surrogate models that are not representative of the behavior of the experimental response function over the whole of $X$. In such cases, it is perfectly possible for a $Q$ value to flunk a validation test, not because of lack of model complexity, but because of data inadequacy.

\subsection{Outline of the TAD Algorithm}

We now discursively outline the TAD algorithm, so that we can discuss various elements below. The  formal definition of TAD is given as \cref{alg:TAD}.

We select an initial design $\bx_1$ of size $N_1$ and acquire the corresponding points $\bg_1$. We have found that if one wishes to allow plenty of opportunity for expected improvement at the outset of the algorithm then
it is useful to cluster the initial points in a region of $X$. However, the initial design is in principle arbitrary, and may span $X$, or be chosen as a Latin Hypercube, for example.


We fix $N_2$, the number of $(\bx_2,\bg_2)$ samples to be acquired at each new iteration (in principle $N_2$ could change adaptively with iteration number to create a more flexible algorithm in the same family).

At the outset (iteration $I=0$), the TTR is constructed, and a convergence/failure threshold $\mathcal{I}_0$ for expected information is set.  Since $\mathcal{I}$ can fluctuate during the optimization and hence can rise back up above $\mathcal{I}_0$ after first crossing the threshold, we also set a parameter $N_\mathcal{I}$, representing the number of consecutive iterations during which it must remain the case that $\mathcal{I}<\mathcal{I}_0$ in order to declare convergence/failure. The initial model complexity $P$ is chosen. The target location $x$ is initialized randomly. The sample point locations $\bx_2$ are selected randomly in some small region near $x$ using a multivariate normal with mean $x$ and a unit-proportional covariance (for simplicity). The corresponding sample values $\bg_2$ are acquired. The initial $\bx_1$-sample set is chosen to be some modest-sized design.

At iteration $I$ the algorithm performs the following steps:
\begin{enumerate}
\item Optimize the GP log-likelihood $\mathcal{L}_{GP}(\mu,\theta)$ with respect to the GP model hyperparameters $\mu$ and $\theta$ (Equation \cref{eq:gpopt}), verifying
that the quadratic form $S$ in Equation \cref{eq:verification_S} takes plausible values, as described in \S{subsec:validation}.
\item\label{item:empcov} Compute the empirical covariance $\bX_2\equiv<(\bx_2-x)(\bx_2-x)^T>$ 
of the previous $\bx_2$-sample locations about the target location (see note below), and sample a set of $N_2-1$ initialization values $\bx_{2,0}\sim\mathcal{N}\left\{x,\bX_2\right\}$. The remaining initialization value is chosen at a small perturbation of $x$, the perturbation size selected to be large enough not to induce numerical instability in computations of covariances with nearly redundant rows.
\item\label{item:tadopt} Optimize the TAD acquisition function $\mathcal{L}_{TAD}(x,\bx_1,\bg_1,\bx_2)$ with respect to the sample locations $\bx_2$ and target location $x$, initializing the optimization target location $x$ from the value of $x$ found at the previous iteration and initializing the $\bx_2$ by $\bx_2=\bx_{2,0}$ (Equation \cref{eq:TAD_opt}).
\item Perform model validation, by computing the quadratic form $Q$ of Equation \cref{eq:validation_Q} and checking whether  it takes plausible values, as described in \cref{subsec:validation2}. If it does not, restart the previous step; and if the failure persists, complexify the model by adding a Kronecker component. Otherwise, proceed.
\item Test for convergence/success condition: construct the UB at $x$, and test whether all of its components fit inside the TTR. If so, declare convergence/success; report the solution control setting $x$, the corresponding design $p^{(f(x)|1)}$, and predictive covariance $Q^{(f(x)|1+2)}$; and stop. Otherwise continue.
\item Test for convergence/failure condition: compute $\mathcal{I}(x,\bx_1,\bg_1,\bx_2)$ using Equation \cref{eq:EI_3}, and test whether $\mathcal{I}(x,\bx_1,\bg_1,\bx_2)<\mathcal{I}_0$. If so and if this has occurred for more than $N_\mathcal{I}$ consecutive iterations, declare convergence/failure, and stop; otherwise continue.

\item Acquire sample of data $\bg_2$ at newly determined sample points $\bx_2$, and target data $g$ from latest estimated target point $x$.
\item Proceed to iteration $I+1$.
\end{enumerate}

At \cref{item:empcov} we have found empirically that it is in general better to compute the scatter of the $\bx_2$ points about the target point $x$ than about the mean $\bx_2$ point. The reason is that in an exploitation phase, there is not much difference, but in exploration the $\bx_2$ points often need to be initialized with greater variance in order to sense  the optimal direction of $\mathcal{L}_{TAD}()$. The target point usually ``leads'' the cloud of $\bx_2$ points during exploration, which elongates the principal axes of the covariance $\bX_2$ that extend in the direction from the $\bx_2$ points to the target point. We have found that this initialization has a beneficial impact on the TAD optimization problem \cref{eq:TAD_opt}.

At \cref{item:tadopt} it is important to initialize the target point $x$ at or very near the solution from the previous iteration. Doing so allows the acquisition function to increase monotonically while exploiting a possible solution. It is also useful to initialize the $\bx_2$ sample locations near $x$ and at remixed values of $\bx_2$, in order to begin adding information to what is already available from the previous iteration.

\subsection{Implementation}

Our implementation of the TAD algorithm is hosted at a GitHub repository \cite{TAD_github} and implemented in Python. The GP portion of the implementation is built on top of GPyTorch 
\cite{GPyTorch,gardner2018gpytorch}. 
The reason for this choice is that the GPyTorch package implements fast kernel methods such as the SKI inducing point method described in \cite{wilson2015kernel}, while exploiting the GPU acceleration facilities built into the PyTorch deep learning framework. These features hold great promise for large GP kernel matrix performance and appear to have broken through the ``curse of dimensionality'' problem of GP modeling and prediction, allowing computation with covariance matrices so large that they must be computed on the fly, because they cannot be affordably stored \cite{gardner2018gpytorch}.

These are choices that we have made with an eye to the future, in view of the potential for  large-scale optimization problems that may arise in the exploration of advanced manufacturing control spaces. For example, Paulson et al.~\cite{PAULSON2020108972} describe exploration of  flame spray pyrolysis experimental controls with $D=6$, in which the design objective is to control features of the particle formation size spectrum---three moments of the spectrum in the case described, although there is no reason a more complex description might not be required.  Even granting that TAD can do much better than a naive sampler, with a fixed grid on each axis, one can easily see how the classic computational costs of GP modeling, which grow as $N^3$ (where $N$ is the dimensionality of the covariances in question), might make such modeling unaffordable in the absence of the new methods.

\begin{figure}[t]
    \centering
    \includegraphics[width=1.0\textwidth]{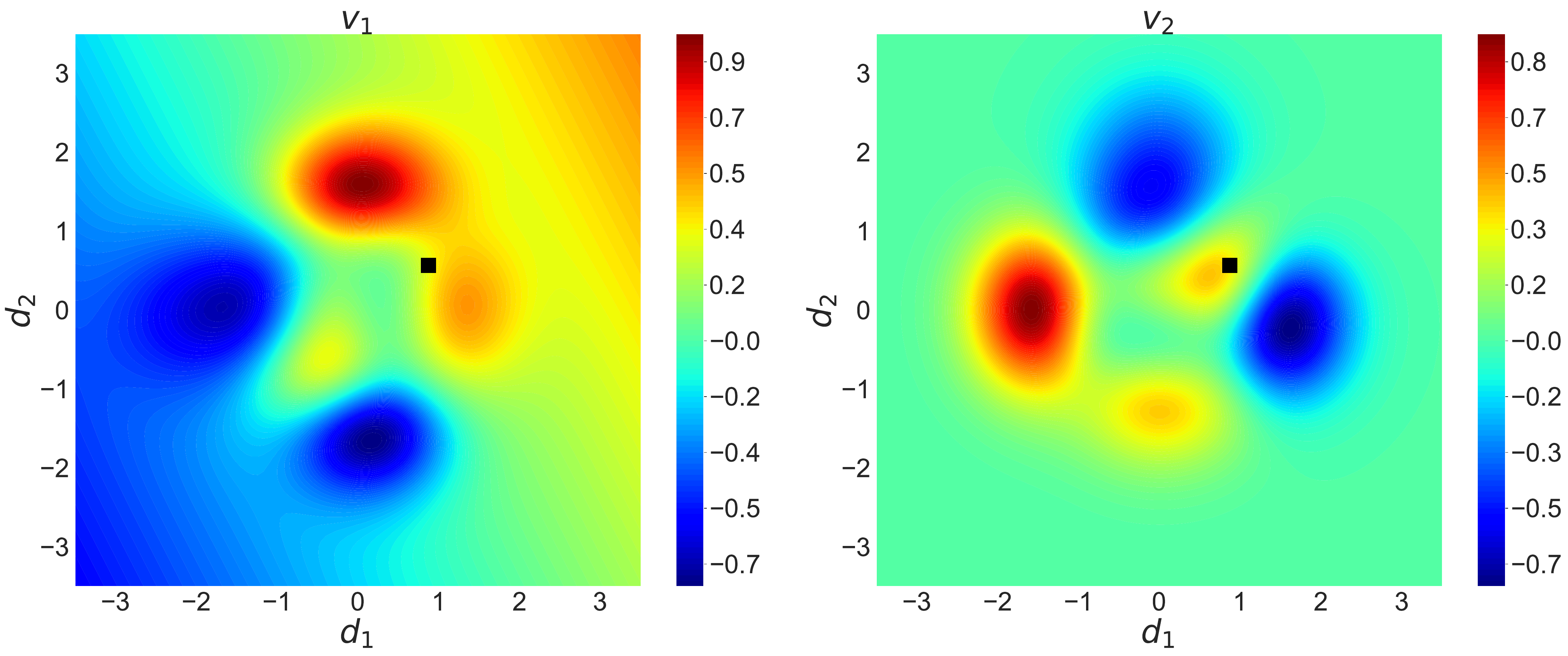}
    \caption{Test function for numerical experiments. The black square is the location of the target value for the ``convergence/success'' test.}
    \label{fig:tgtfun}
  \end{figure}
  

\begin{figure}[h]
\includegraphics[width=0.5\textwidth]{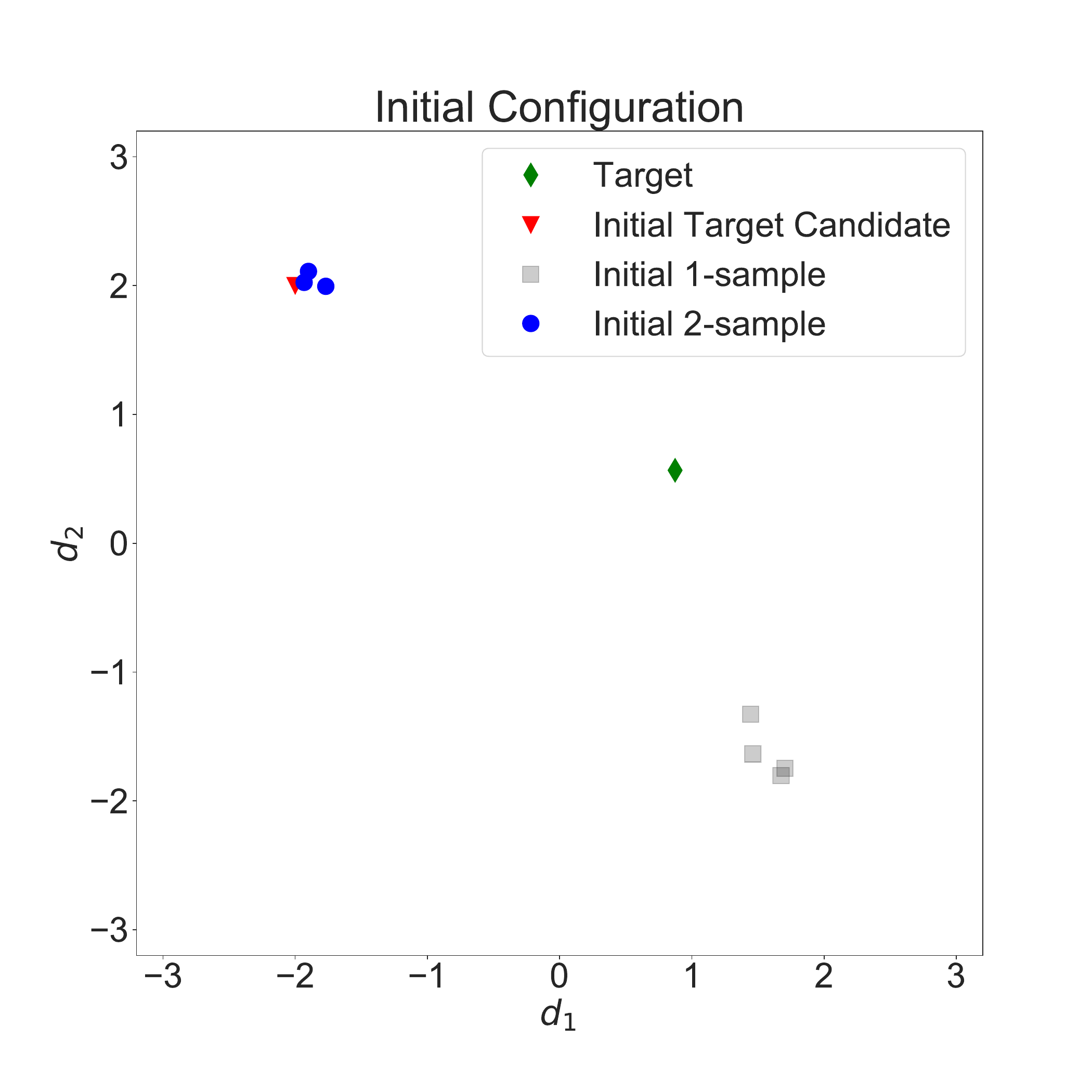}
\includegraphics[width=0.5\textwidth]{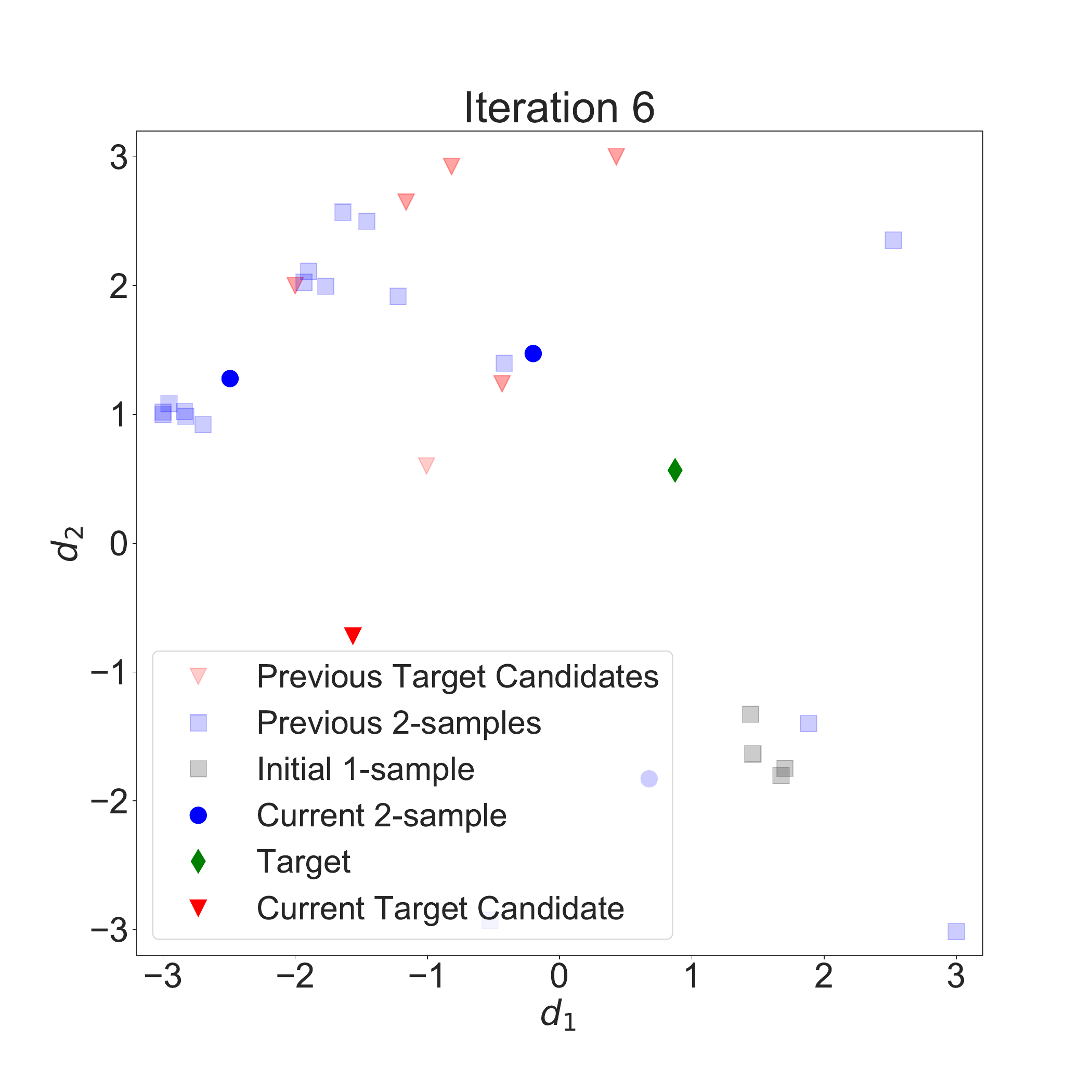}\\
\includegraphics[width=0.5\textwidth]{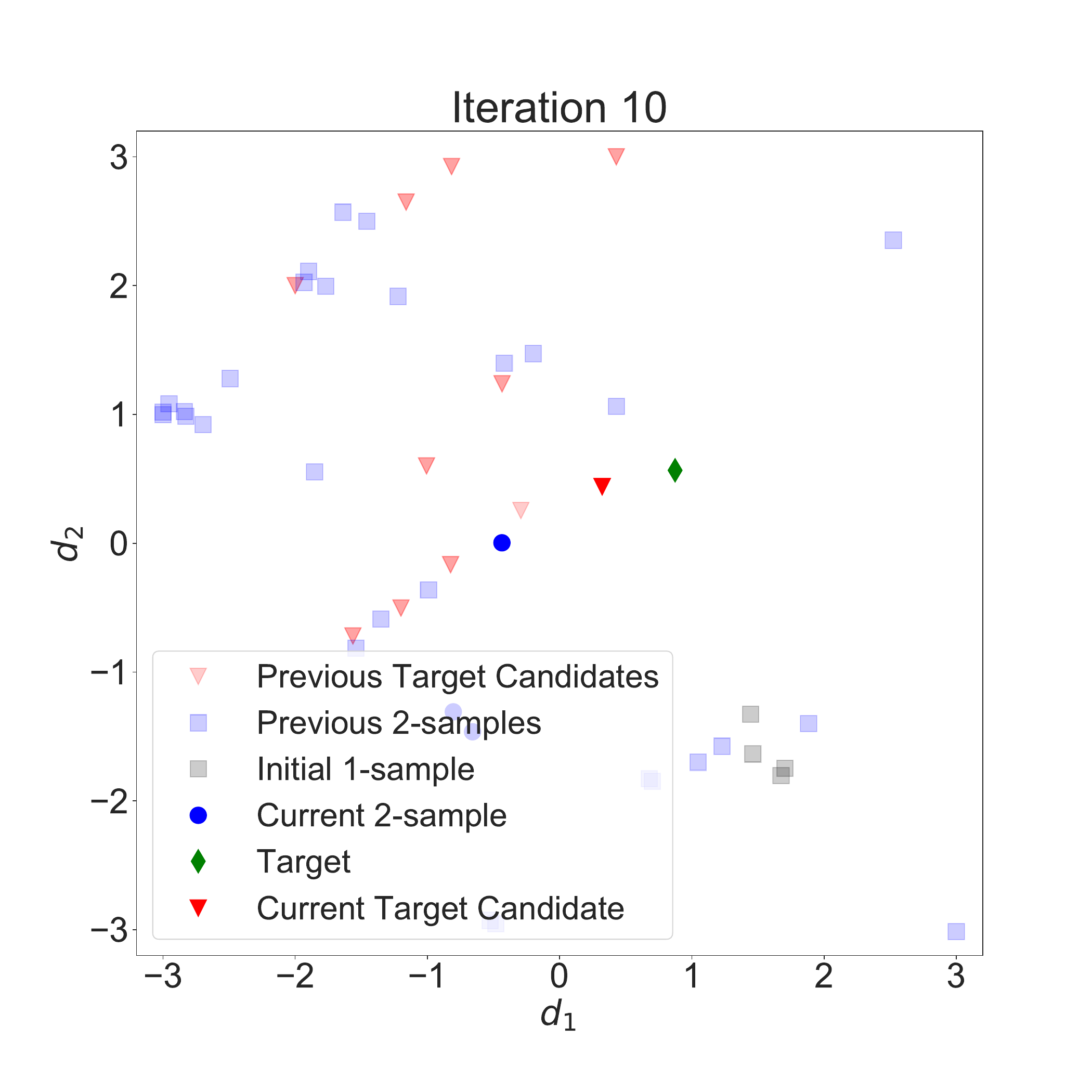}
\includegraphics[width=0.5\textwidth]{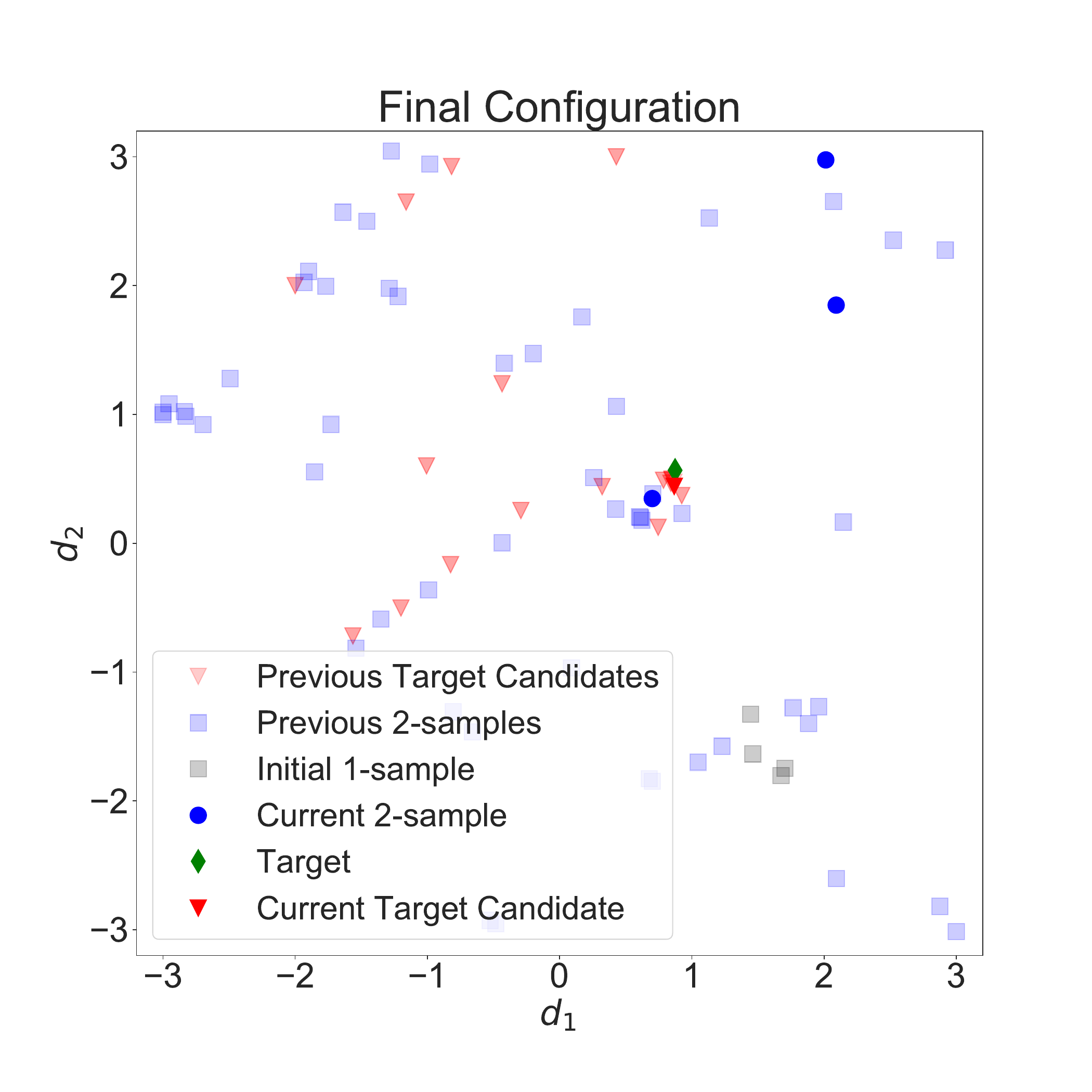}
\caption{Evolution of TAD ``2'' samples and solution.\label{fig:evolsol}}
\end{figure}

\section{Numerical Results}\label{sec:results}
\subsection{Twin-peak problem}\label{subsec:TP}
The TAD class has  no existing problems. For testing purposes we use a problem taken from \cite{OCW} with a $D=2$-dimensional control space and an $E=2$-dimensional design space.  The two-dimensional vector function $(v_1,v_2)$ has the form
\begin{multline}
v_1(d_1, d_2) = 3(1 - d_1)^2 e^{-d_1^2 - (d_2 +1)^2} - 10 (d_1/5 - d_1 ^3 - d_2^5) e^{-d_1^2 - d_2 ^2}\\
- 3 e^{- (d_1 + 2) ^2 - d_2^2} + 0.5(2d_1 + d_2);
\label{eq:vfield1}
\end{multline}
\begin{multline}
v_2(d_1, d_2) = 3(1 +d_2)^2 e^{-d_2^2 - (d_1 +1)^2} - 10 (-d_2/5 + d_2 ^3 + d_1^5) e^{-d_1^2 - d_2 ^2}\\
 - 3 e^{- ( 2- d_2) ^2 - d_1^2} + 0.5(2d_1 + d_2),
\label{eq:vfield2}
\end{multline}
where $(d_1, d_2) \in [-3, 3]^2$. The behavior of the function is displayed in Figure \cref{fig:tgtfun}.

\subsubsection{Convergence/Success}

We tested the convergence/success path of the algorithm by searching for the target $f_T=[0.3380,3502]$ with tolerance $\tau=0.01$. This value occurs in the domain at $[d_1,d_2]=[0.8731, 0.5664]$, shown by the black square in Figure \cref{fig:tgtfun}.  We adopted convergence/failure criteria of $\mathcal{I}_0=10^{-3}$, $N_\mathcal{I}=50$.  The initial design was a set of $N_1=4$ points in the vicinity of $[d_1,d_2]=[1.5,-1.5]$. We used $N_2=3$ and started the test at $x=[-2.0, 2.0]$.

Convergence occurred in 18 iterations, which included 4 restarts for model
update checks, requiring a total of $4+(18+4)\times 3=70$ sample points. The
initial and final sample point configurations and two intermediate snapshots are
displayed in \cref{fig:evolsol}. In each case, previously acquired samples are
displayed with partial transparency, while the current $\bx_2$ samples and
target value are displayed solidly, as is the true target solution. These
figures clearly show that the algorithm is engaging in some balance between
feeling out the response function and hunting for the target value.

\begin{figure}[htbp]
  \centering
  \includegraphics[width=1.0\textwidth]{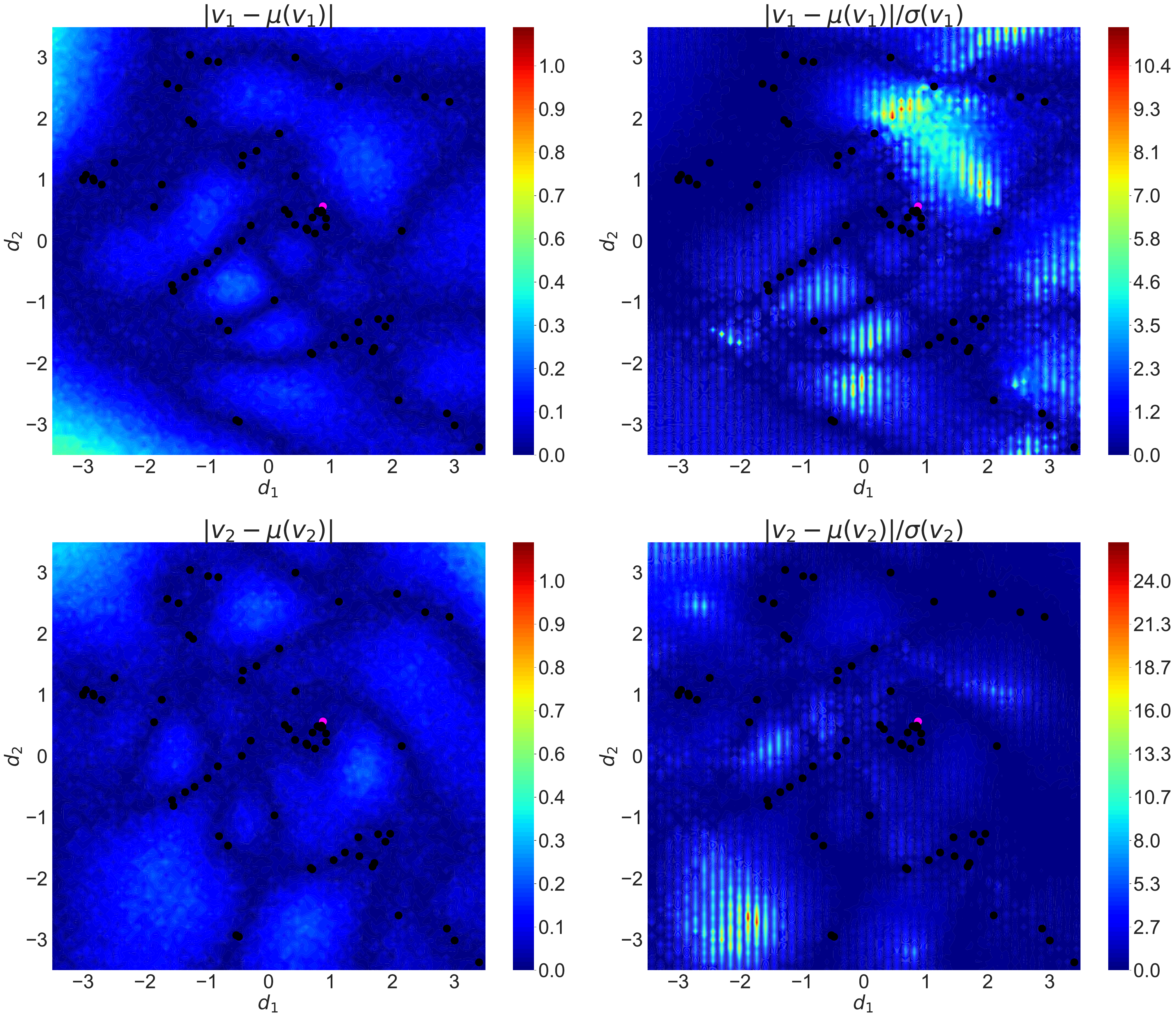}
  \caption{Pointwise residuals. Left column: Absolute residuals between predictive mean and target function. Right column: Residuals normalized to root predictive variance.\label{fig:predmean_2}}
\end{figure}

The pointwise residuals of the GP surrogate at the final iteration are displayed in \cref{fig:predmean_2}. The left panels show unnormalized residuals, whose magnitudes are not as straightforward to interpret as those of the residuals normalized to root predictive variance, shown in the right panels. These are expected to be of order unity in regions where the GP surrogate furnishes a reasonable model of the experimental response. From the figure, it is clear that this is, unsurprisingly, true only in the vicinity of the sample points.

\begin{figure}[htbp]
\centering
\includegraphics[width=0.7\textwidth]{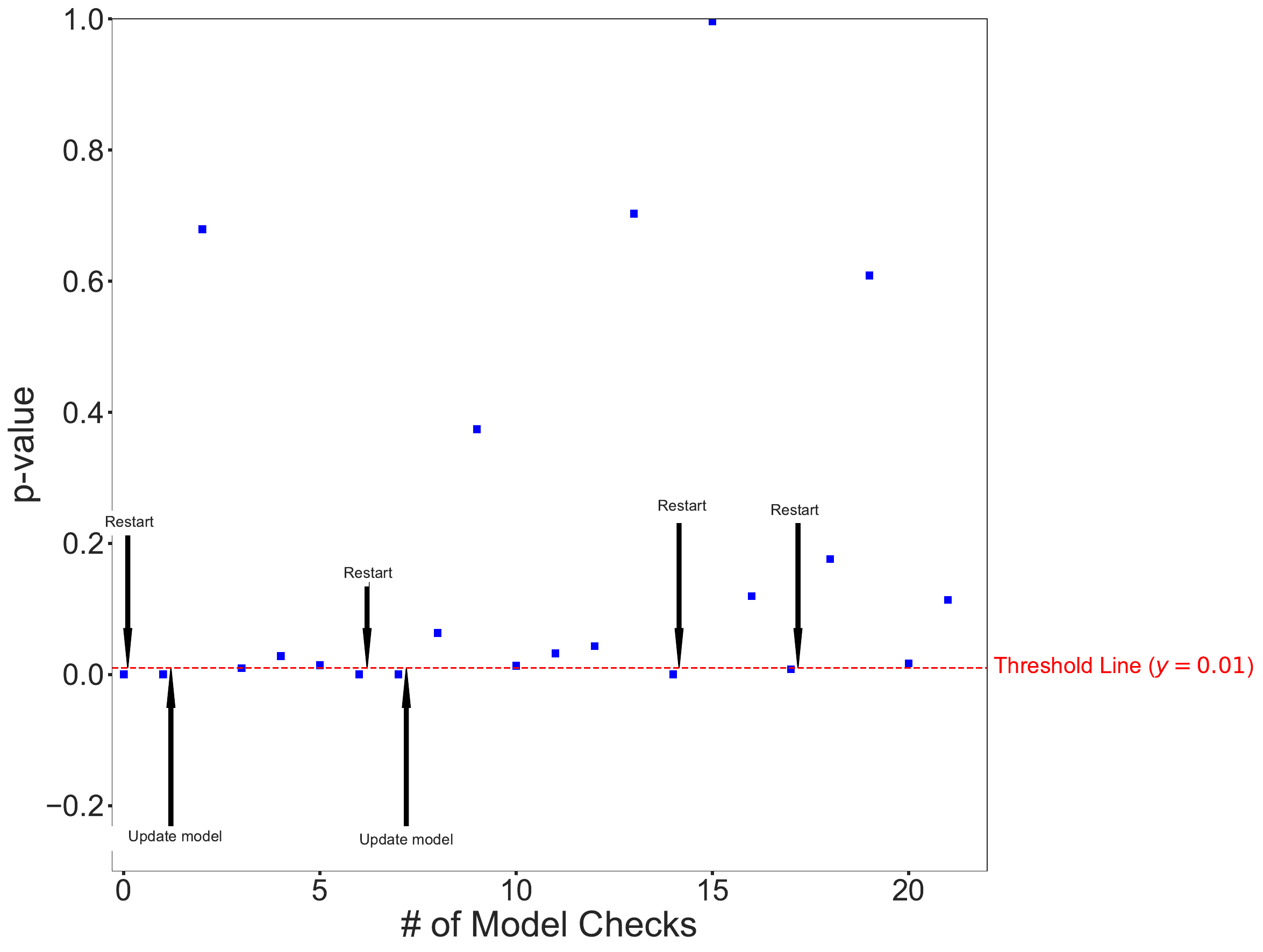}
\caption{Model validation, convergence/success case.\label{fig:qcheck}}
\end{figure}

The model validation and dynamic model complexification mechanism is visualized in \cref{fig:qcheck}. The figure shows the $P$-value threshold as a horizontal dashed line and the $P$-values ascertained at each iteration as square markers. The downward-pointing arrows show model validation failure alerts, where restarts occur. The subsequent upward-pointing arrows show confirmations of validation failures, where the GP model is updated by increasing the number of Kronecker components. The plot shows that after an initial couple of rounds of model validation failures, the surrogate model gained enough complexity to represent the portion of the response surface required to exploit the solution.

The run of expected information gain and of the log-Gaussian term is displayed in \cref{fig:expinfo_success}. The initial drop in expected information is associated with the refinement of the initially crude model. In this study the algorithm found an exploitation mode very early. As can be seen from \cref{fig:evolsol}, by iteration 7 the target point $x$ was already in the neighborhood of the solution; and in the remaining iterations the drop in log-Gaussian represents the effort to obtain convergence by squeezing the UB into the TTR. 

\begin{figure}[t]
\centering
\includegraphics[width=1.\textwidth]{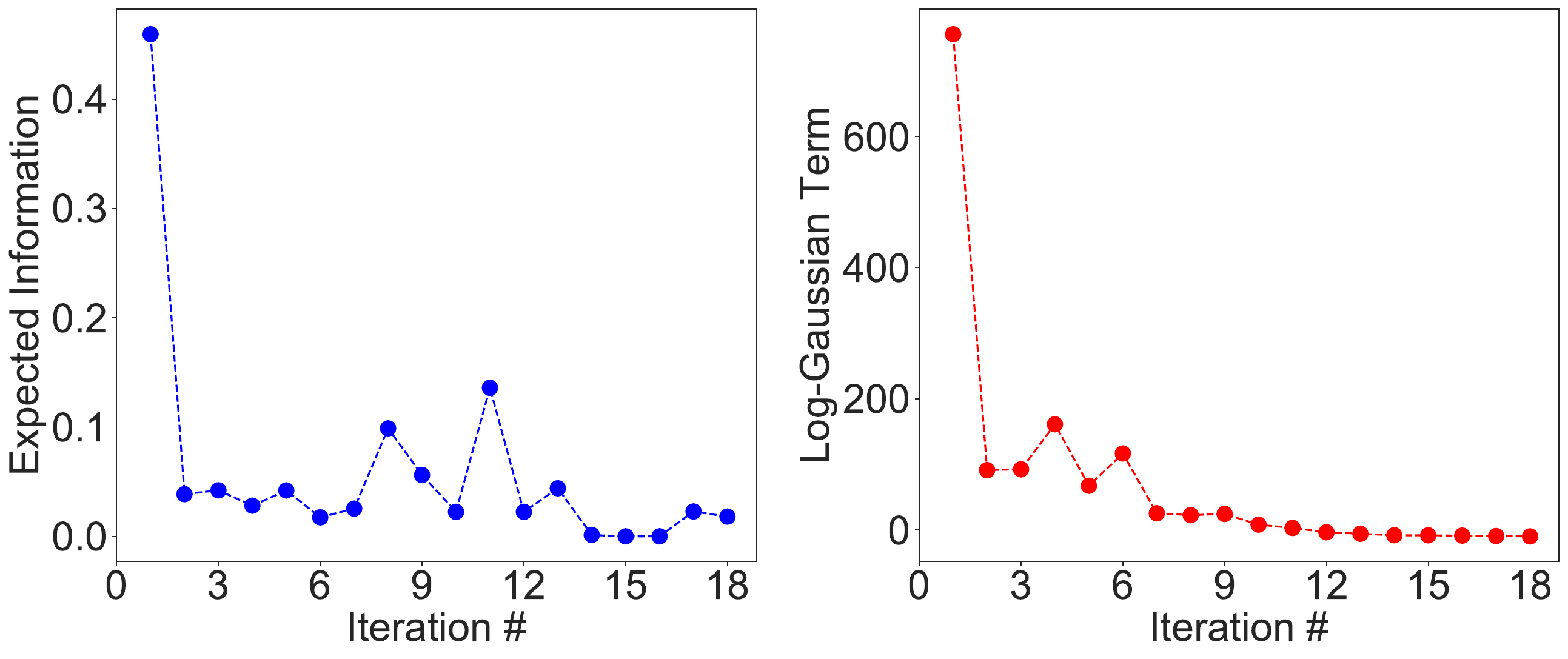}
\caption{Expected information gain and log-Gaussian term vs number of iterations, convergence/success case.\label{fig:expinfo_success}}
\end{figure}

\subsubsection{Convergence/Failure}

We tested the convergence/failure path of the algorithm by searching for the target $f_T=[-1,-1]$, which does not occur in the domain, with tolerance $\tau=0.01$.   We adopted convergence/failure criteria of $\mathcal{I}_0=10^{-3}$, $N_\mathcal{I}=50$. The initial design was the same set of $N_1=4$ points as for the convergence/success test.  We used $N_2=3$ and started the test at $x=[2.0, 2.0]$.

\begin{figure}[t] 
  \centering
 \includegraphics[width=0.7\textwidth]{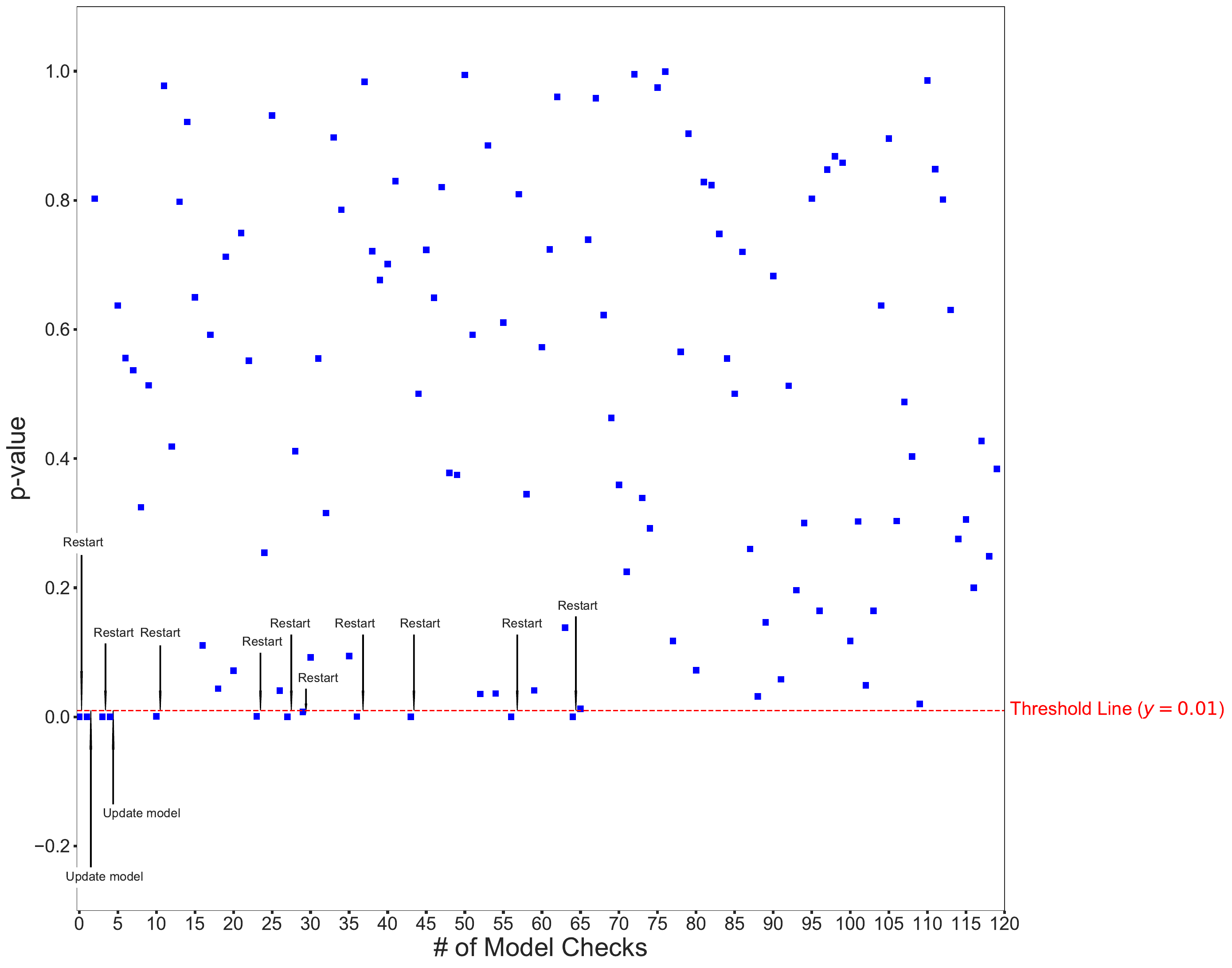}
  \caption{Model validation, convergence/failure case.\label{fig:qcheck_fail}}
\end{figure}

The algorithm declared convergence/failure after 111 iterations, requiring 442 sample points in all. \cref{fig:qcheck_fail} shows the model validation and dynamic model complexification mechanism. Once again we see an initial period of model inadequacy. There follows a cycle of exploration punctuated by occasional model validation alerts, all but two of which are false positives.

\begin{figure}[h]
  \centering
  \includegraphics[width=0.6\textwidth]{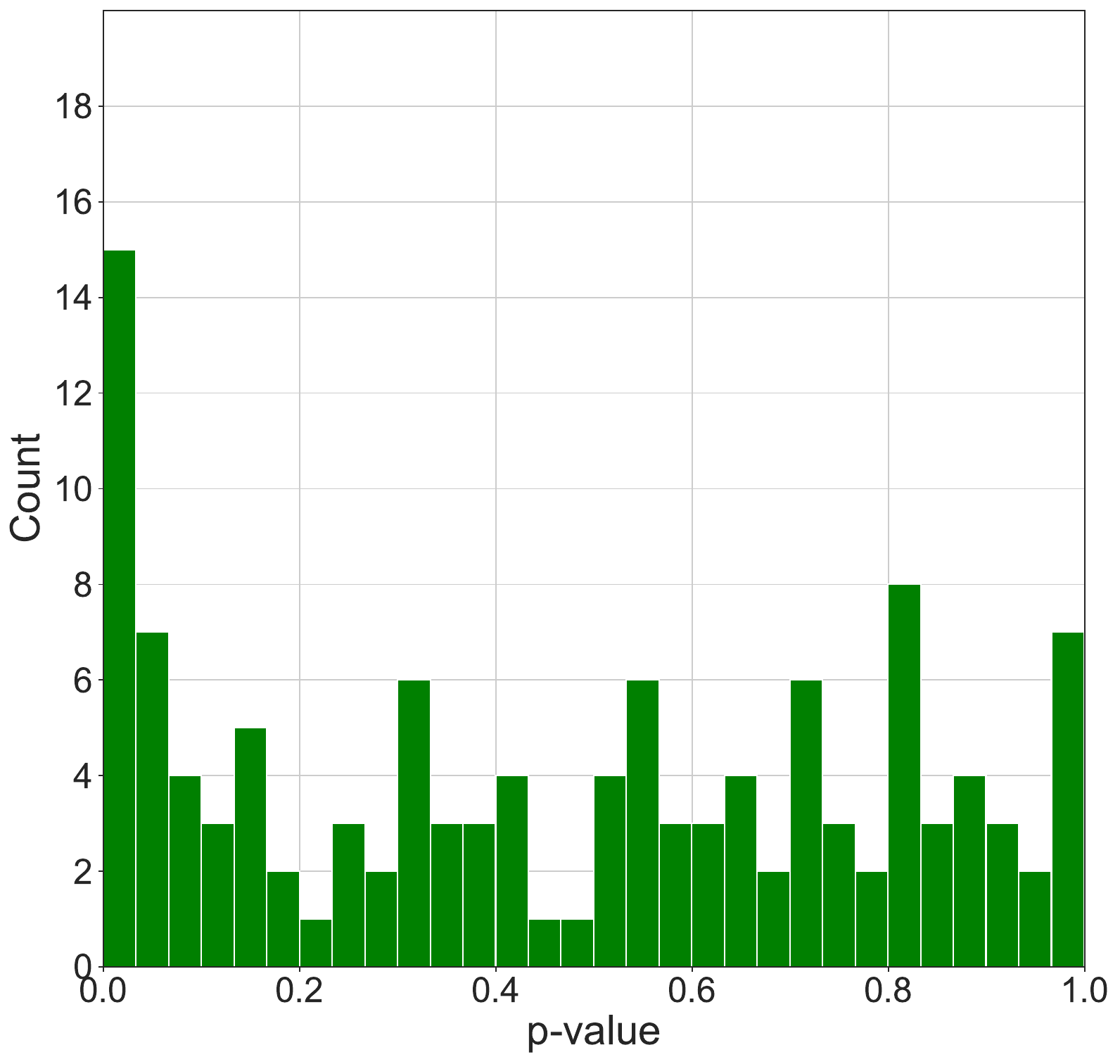}
  \caption{Histogram of $P$-values from \cref{fig:qcheck_fail}.\label{fig:pval_hist_failure}}
\end{figure}

The distribution of $P$-values appearing in \cref{fig:qcheck_fail} is displayed in \cref{fig:pval_hist_failure}. Other than the spike of small $P$-values caused by validation failures that trigger model complexifications, the histogram appears reasonably uniform, as it should.

\begin{figure}[htbp]
  \centering
 \includegraphics[width=1.\textwidth]{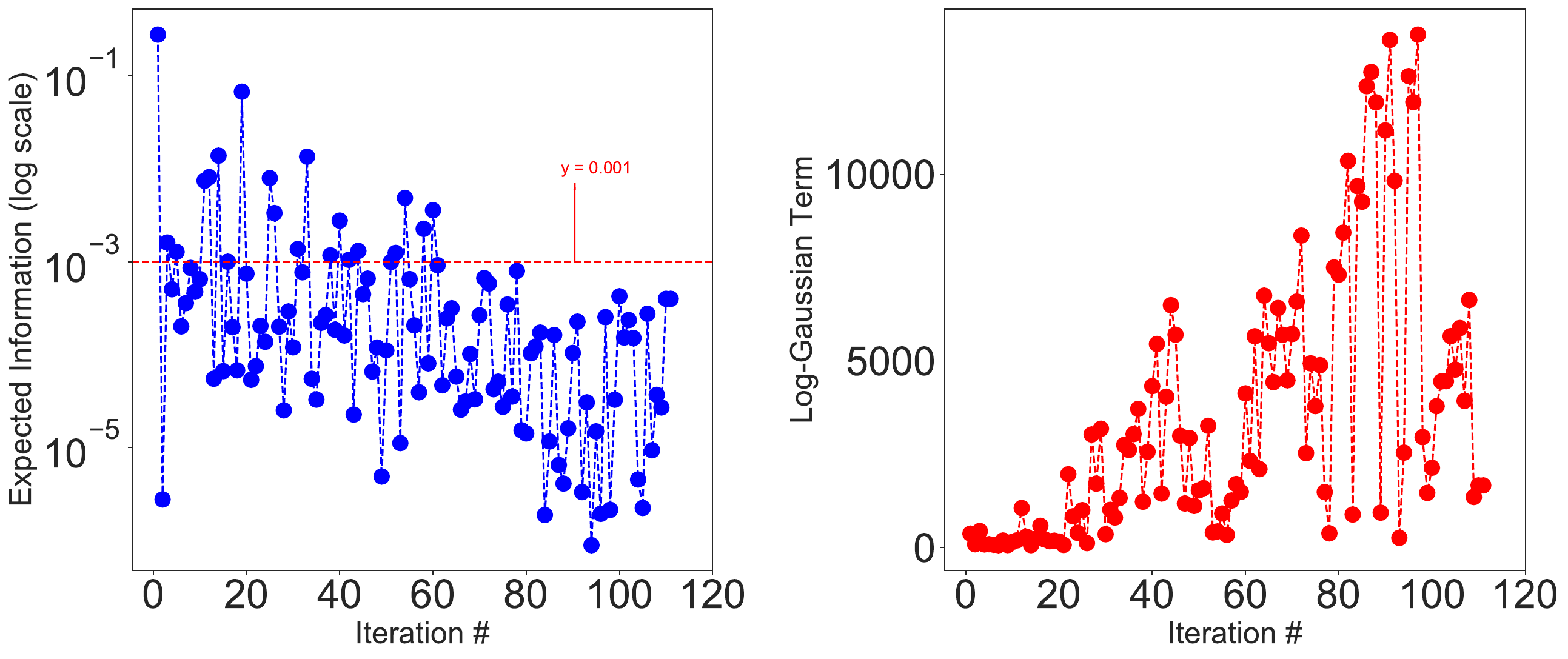}
  \caption{Expected information gain and log-Gaussian term, convergence/failure case.}\label{fig:expinfo_fail}
\end{figure}

\cref{fig:expinfo_fail} shows the run of expected information gain and of the log-Gaussian term.  Here we see a logarithmic decline in expected information gain as the uncertainty is squeezed out of the surrogate response. Monitoring expected information gain for this kind of drop results in an effective convergence/failure criterion.

\begin{figure}[htbp]
  \centering
  \includegraphics[width=1.0\textwidth]{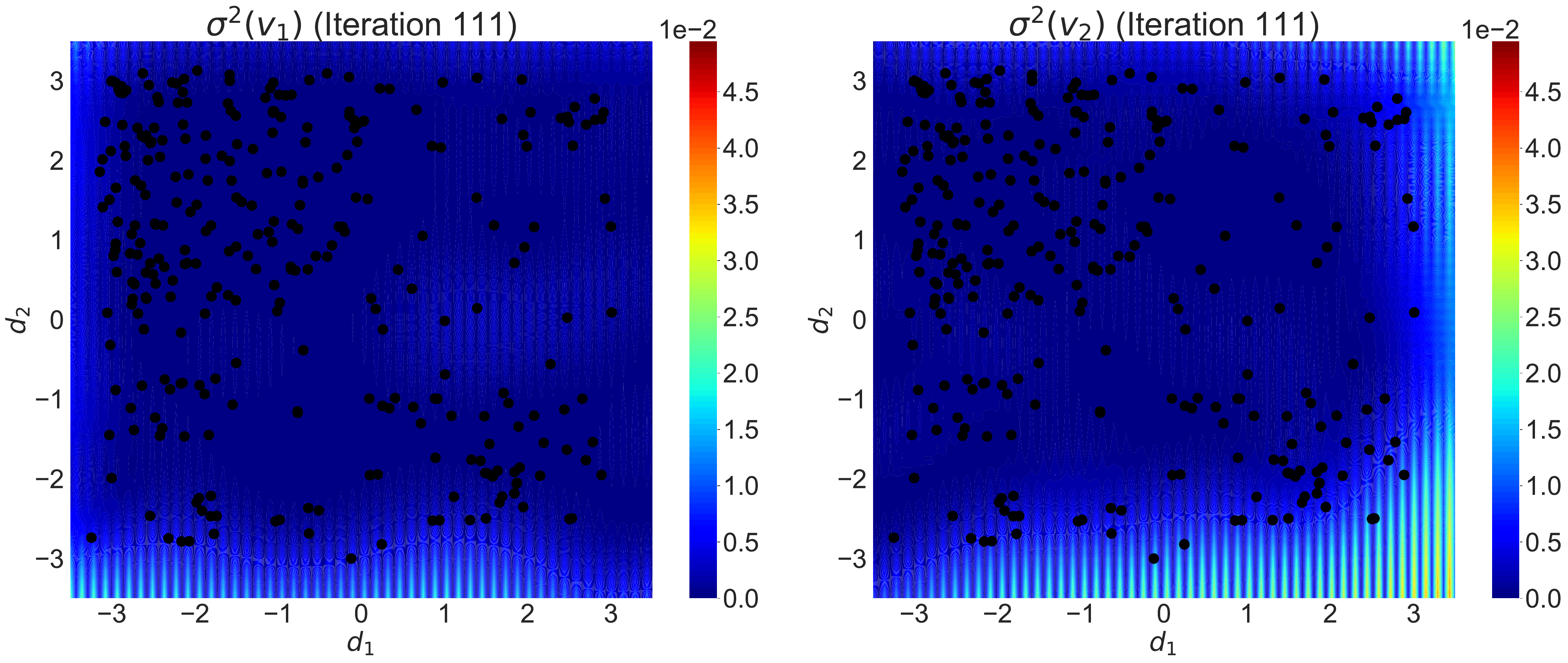}
  \caption{Variance at convergence/failure.\label{fig:var_failure}}
  
\end{figure}

\cref{fig:var_failure} displays the surrogate model variance at the final iteration when convergence/failure is declared. At this stage the algorithm has sampled the space fairly uniformly and has generated a relatively smooth model of the variance for both components of the vector field.

\begin{figure}[t]
\centering
\includegraphics[width=1.0\textwidth]{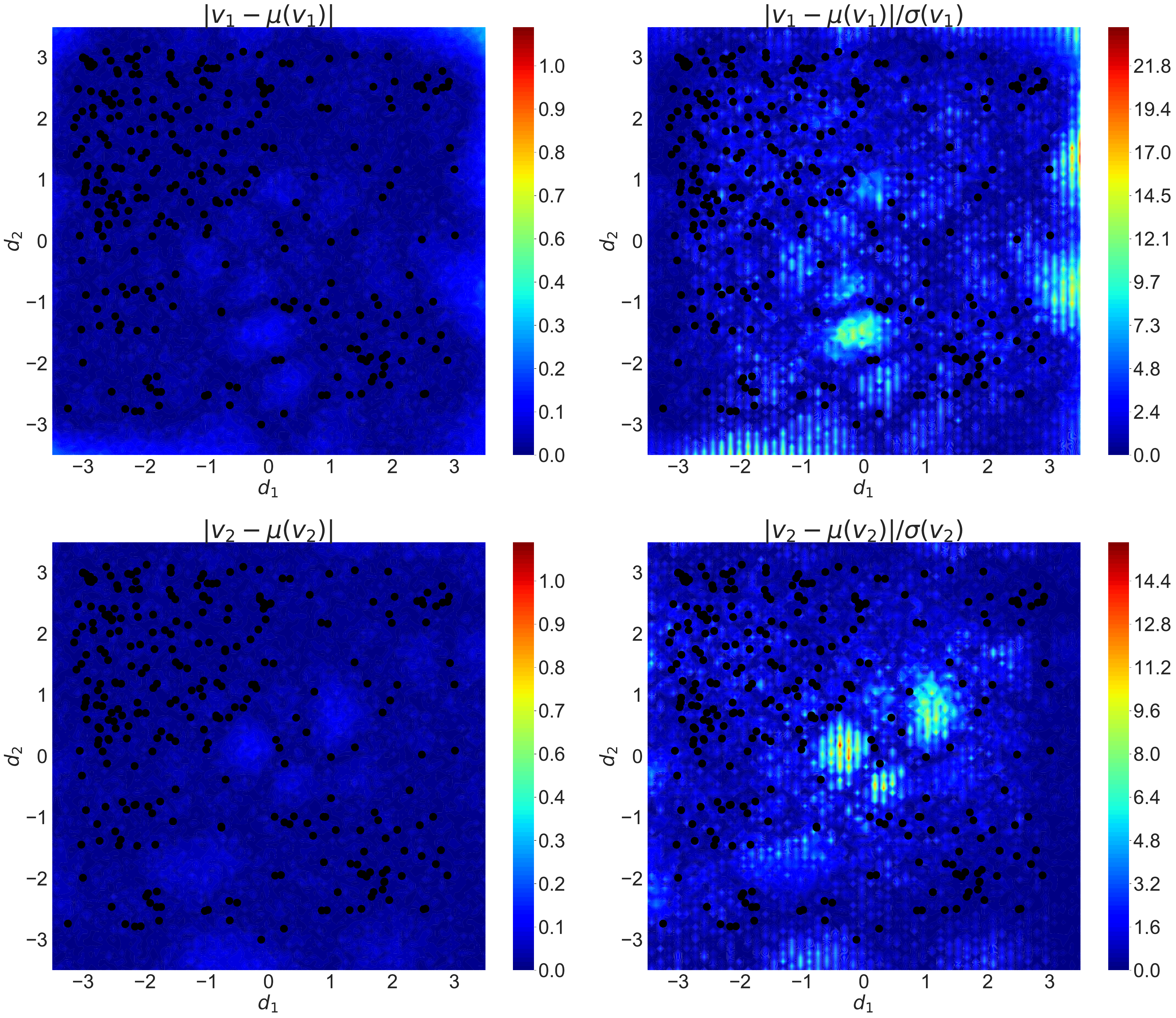}
\caption{Pointwise residuals. Left column: Absolute residuals between predictive mean and target function. Right column: Residuals normalized to root predictive variance.\label{fig:predmean_failure}}
\end{figure}

\cref{fig:predmean_failure} displays the residuals between the surrogate model and the target function. The right-hand panels display the normalized residual, expected to be of order unity, with some statistical fluctuations, when the model is valid. We see that this is largely the case, with some residual model inadequacy near the domain edge and in some patches of less-dense sampling.

\subsubsection{Impact of Tolerance Selection}

\begin{figure}[t]
\centering
\includegraphics[width=1.0\textwidth]{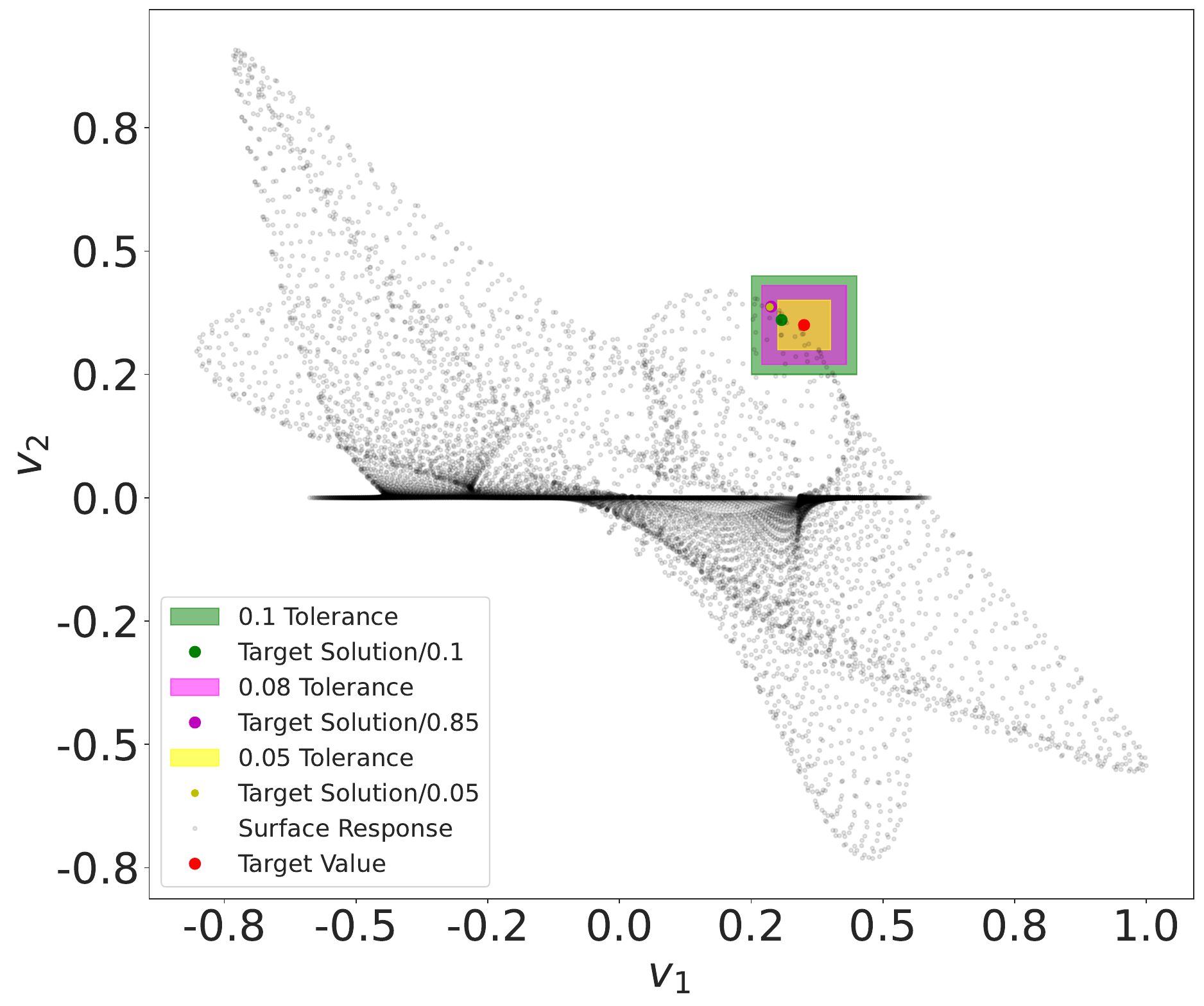}
\caption{Study of the effect of tolerance selection on the TAD solution.  The gray dots represent the values of the output pairs $(v_1,v_2)$ sampled from a $100\times 100$ uniform grid of points in the input space $(u_1,u_2)$. The red dot is the target value, which is slightly offset from the range of feasible values of the function.  The colored boxes correspond to different TTRs, whereas the correspondingly-colored dots represent the corresponding TAD solutions.  The innermost (yellow) TTR triggered a Convergence/Fail termination
after 200 iterations. \label{fig:tolerance_selection}}
\end{figure}

Our goal here is to illustrate the interplay between the convergence rules, the
tolerance and the uncertainty around TAD output.
\Cref{fig:tolerance_selection} shows the change in TAD solutions as the tolerance
threshold changes.  The figure represents as gray points all the values of the
output pairs $(v_1,v_2)$ sampled from a $100\times 100$ uniform grid of points
in the input space $(u_1,u_2)$. The target value (red circle) is not in the
range of the surface function (represented by the gray dots) but is slightly
offset from that range, so that a solution only exists if the size of the TTR
exceeds some minimum value. 

In each case, the algorithm starts at $x_0 = [-2,2]$. The tolerance threshold
takes values $0.1$ (green TTR box), $0.08$ (magenta box) and $0.05$ (yellow
box). The $0.05$ tolerance is too small to permit a solution. 

The colored dots of represent the solution arrived at by TAD for the correspondingly-colored TTR. As expected, TAD converges successfully within the prescribed tolerances for $0.1$ and $0.08$ and terminates with ``Convergence/Fail'' for the $0.05$ threshold. With the $0.05$ threshold, the final target point estimated by the algorithm before exiting coincides with the target point
found for the $0.08$ threshold.

\subsection{DTLZ4 problem}

The DTLZ problem test suite comprises benchmark problems for multiobjective optimization. These problems were developed by Deb, Thiele, Laumanns and Zitzler \cite{Deb2005} and are particularly for scalable multiobjective optimization as the number of variables, as well as the number of objective functions, can be tuned. In this work, we use the dtlz4 problem 
\begin{eqnarray*}
 f_1(x) &=& (1 + g(x_M)) \cos(x_1^{\alpha}\pi/2)\cdots\cos(x_{M-2}^{\alpha}\pi/2)\cos(x_{M-1}^{\alpha}\pi/2) \\
  f_2(x) &=& (1 + g(x_M)) \cos(x_1^{\alpha}\pi/2)\cdots\cos(x_{M-2}^{\alpha}\pi/2)\sin(x_{M-1}^{\alpha}\pi/2)\\
  \vdots \\
 f_M(x) &=& (1 + g(x_M)) \sin(x_1^{\alpha}\pi/2) \\
  \text{with } g(x_M) &=& \sum_{x_i \in x_M} (x_i - 0.5)^2,\; \alpha = 100 \;\text{and} \;\;0\leq x_i\leq 1\;\; \text{for } i = 1,2,\cdots, n.
\end{eqnarray*}
The last $k = n - M + 1$ variables are represented in $x_M$. See 
\cite{pymoo} for more details.

\subsubsection{Convergence/Success}

We set the number of control variables to $n = 4$ control variables and the number of tasks to $M = 3$. The target value is $f_T = [1.0607, 0.0109, 0.0040]$. We set the noise value at $0.0001$ and the tolerance at $1\%$. TAD found a design value of $f_{TAD} = [1.0474e, 0.025, 0.00025]$ after 6 iterations with 4 model updates i.e. 5 total Kronecker products needed. The behavior of the expected information gain and the log-Gaussian term is displayed in \cref{fig:expinfo_success_dtlz4}.

\begin{figure}[htbp]
  \centering
 \includegraphics[width=1.\textwidth]{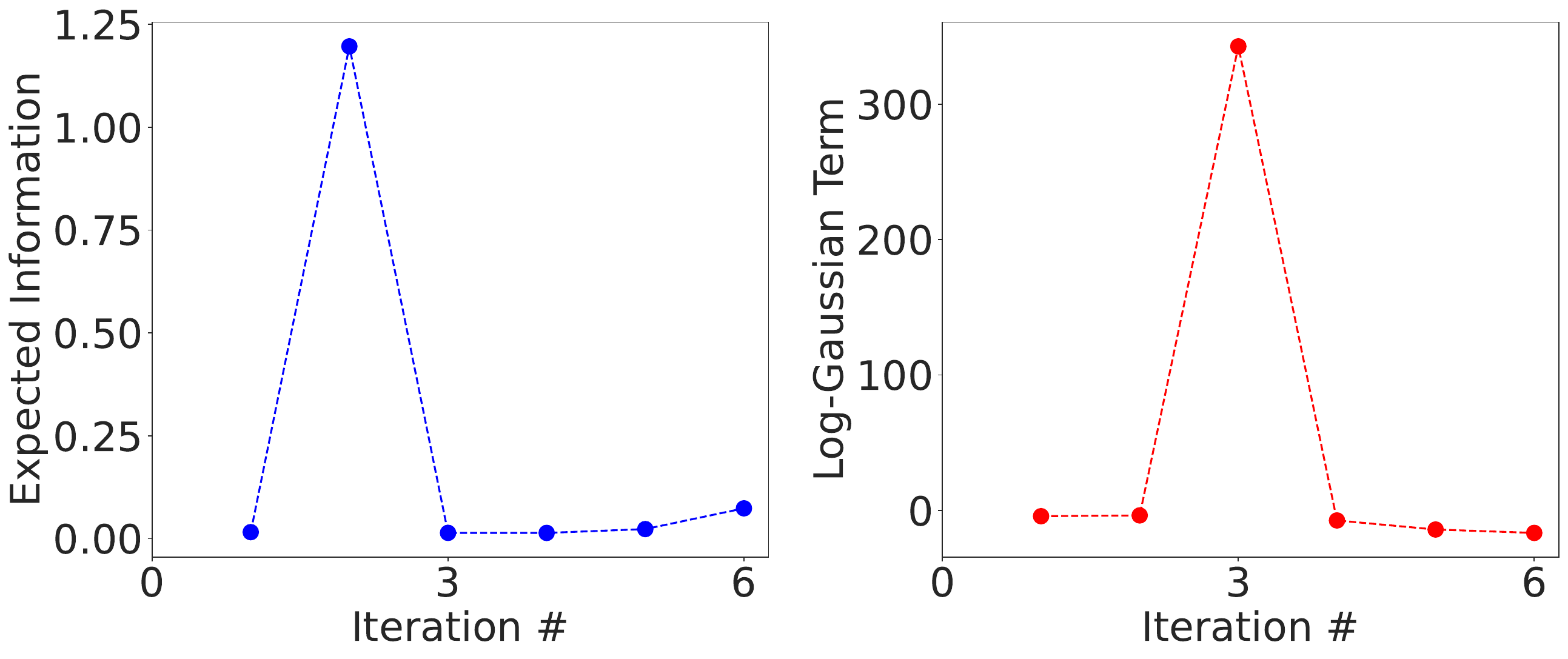}
  \caption{Expected info and log gaussian term for the dtlz4 problem, convergence/success.}\label{fig:expinfo_success_dtlz4}
\end{figure}

\subsubsection{Convergence/Failure}
We set the number of control variables to $n = 4$ control variables and the number of tasks to $M = 3$. The target value is $f_T = [2., 2., 2.]$ which is not feasible. As for the convergence/success case, the noise value is $0.0001$ and the tolerance at $1\%$. \cref{fig:expinfo_fail_dtlz4} displays behavior of the expected information gain and the log-Gaussian term. TAD needed three Kronecker products and failed in $57$ iterations as.
\begin{figure}[htbp]
  \centering
 \includegraphics[width=1.\textwidth]{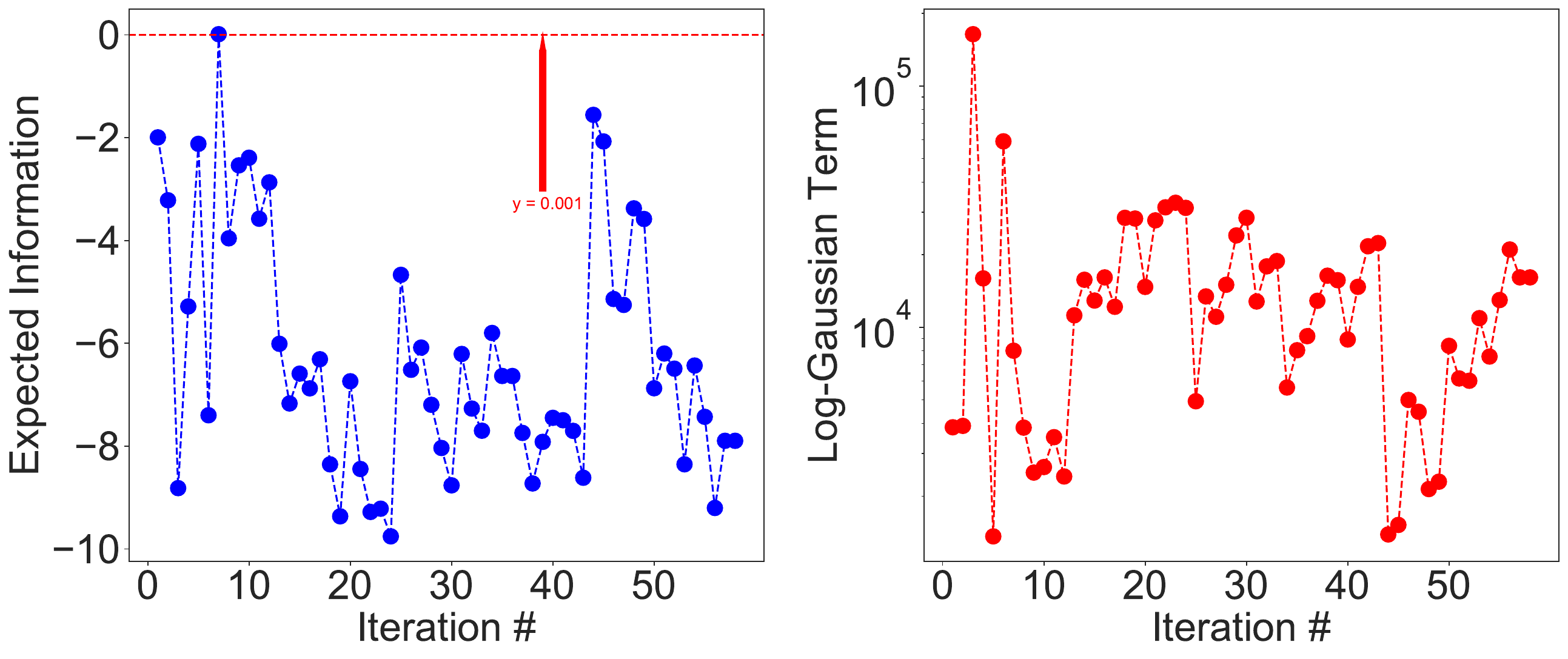}
  \caption{Expected information gain and log-Gaussian term for the dtlz4 problem in log scale, convergence/failure case.}\label{fig:expinfo_fail_dtlz4}
\end{figure}

\subsection{Comparison With an $L_2$-Based Method}

As discussed in $\S$\ref{sec:Intro}, the specific problem addressed by the TAD
method, multi-target black-box optimization with solution uncertainties within
target tolerances, does not appear to have been previously addressed in the
literature, so that it is not possible to perform precise comparisons of the
algorithm to alternative approaches.  One of the closest methods that can
be adapted and pressed into service is described in \cite{zhang2019sequential},
wherein parametrized dynamical simulations of time-series are calibrated to
reproduce an observed series by considering an acquisition function based on the
$L_2$ discrepancy between model and target time-series.  If we analogize
time-series elements to design features, the $L_2$ discrepancy in \cite{zhang2019sequential} becomes
\begin{equation}
\delta(\bx)=\sum_{l=1}^E(f_l(\bx)-f_{T,l})^2=(f(\bx)-f_T)^\top(f(\bx)-f_T),
\label{eq:L2_disc}
\end{equation}
where $f(\bx)$ is the multi-output GP surrogate.

In \cite{zhang2019sequential}, the Expected Improvement (EI) of $\delta(\bx)$ is
adopted as a sampling acquisition function, although, as the authors note, there
is no closed-form expression for this function, so that it must be approximated.
Furthermore, the characteristic shape of the EI function tends to be multimodal
with large ``flat'' regions of minimal change, and it is not continuously
differentiable \cite{jones1998efficient}, so that it is not ideal for gradient
optimization.  On the other hand, \cite{zhang2019sequential} also introduce the
Expected Squared $L_2$ Discrepancy (ESL2D) for the purpose of optimizing the
simulator parameters once a new sample has been acquired.  In present context,
the ESL2D $\mathcal{E}(\bx)$ is given by
\begin{align}
\mathcal{E}(\bx)&=E_{f(\bx)|1}\left[(f(\bx)-f_T)^\top(f(\bx)-f_T)\right]\nonumber\\
&=\left(p^{(f|1)}-f_T\right)^\top\left(p^{(f|1)}-f_T\right)
+\mathrm{Trace}\left(Q^{(f|1)}\right),
\label{eq:ESL2D}
\end{align}
which corresponds to the expression given for the ESL2D in \cite{zhang2019sequential}.

It follows that a simplified version of sequential method of
\cite{zhang2019sequential} can be readily adapted to our context if we combine
the sampling and optimization stages of that method, and use the ESL2D
$\mathcal{E}(\bx)$ as the sample acquisition function, instead of the EI.  The
result is a method in which the aquisition function is simple to compute in
closed form, has no optimization-defying regions of discontinuous gradient, and
simply seeks out sample points that minimize the expected value of the $L_2$
discrepancy $\delta(\bx)$.  In this method, there are no $\bx_2$ points, because they would have no effect, as it is straightforward to demonstrate that $E_{\bg_2|\bg_1} E_{f(\bx)|\bg_1\bg_2} [\delta(\bx)] = E_{f(\bx)|\bg_1}[\delta(\bx)]$. The method is therefore purely sequential, and has no batch aspect. We
will refer to this method as the ``sequential $L_2$ method.''

The sequential $L_2$ method differs from TAD in that while its aquisition
function, Equation (\ref{eq:ESL2D}), is sensitive to the sum of the principal
variances (the trace term in Equation \ref{eq:ESL2D}), it does not use them to
discriminate between directions in design space that reduce the distance in
probability to the target, in the manner of the TAD acquisition function,
Equation (\ref{eq:TAD_of}).  The value of a comparison between TAD and the
sequential $L_2$ method, then, beyond simply providing a baseline comparison
with an alternative method, is to illustrate the benefit of incorporating such
directional information.

\begin{figure}[t]
  \centering
 \includegraphics[width=0.45\textwidth]{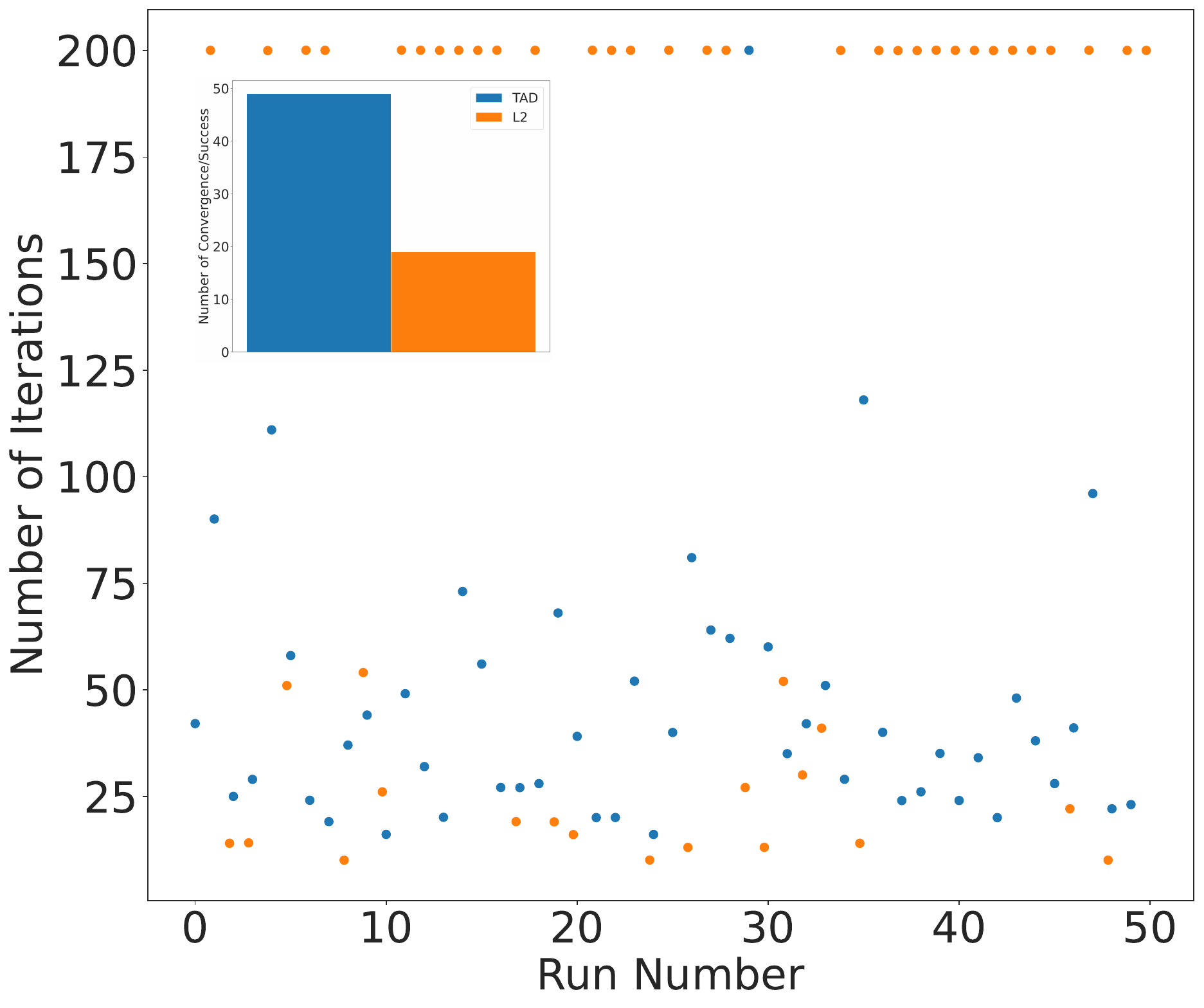}
\includegraphics[width=0.45\textwidth]{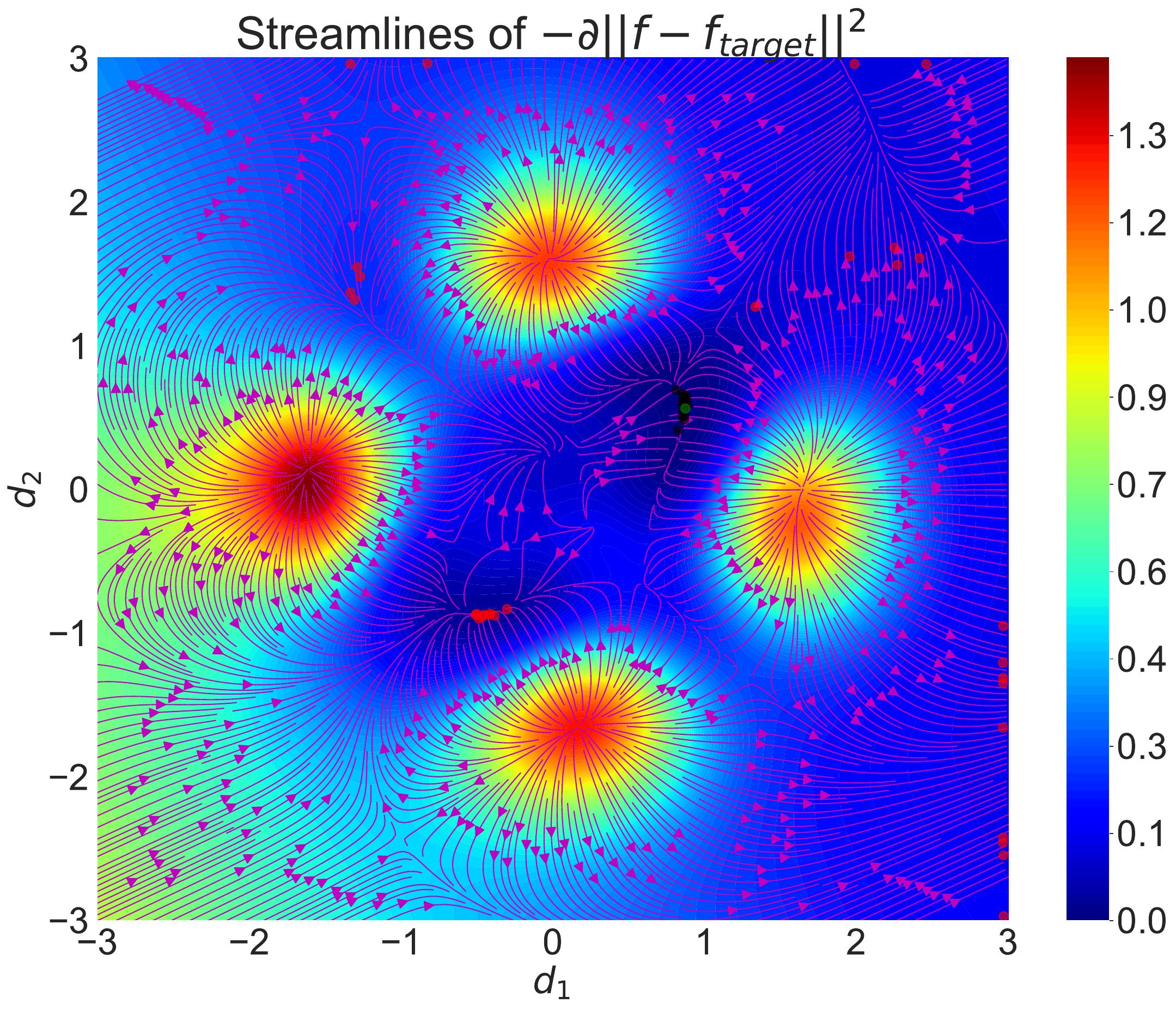}
  \caption{Left panel: Comparison between TAD and sequential $L_2$ method on the
Twin Peak problem of $\S$\ref{subsec:TP}. The inset displays the relative number
of convergence/success cases for the two methods. Right panel: Final states of
sequential $L_2$ searches for the Twin Peaks problem, superposed on colormap of
$l(\bx)\equiv||f(\bx)-f_T||_2^2$, and on streamlines of $\nabla l(\bx)$. The green
dot is the target. The black dots that neighbor the target are the solutions
found in convergence/success cases. The red dots are the convergence/failure
final states. }\label{fig:TADvsL2}
\end{figure}

The left panel of \cref{fig:TADvsL2} illustrates the different performance of the two algorithms.
We run each algorithm $50$ times on the Twin Peaks problem of
$\S$\ref{subsec:TP}, with initialization location chosen randomly for each run,
to assess their convergence/success capabilities. The tolerance is $0.01$ and
the target value $f_T = [0.3380, 0.3502]$ (as in $\S$\ref{subsec:TP}) for all
the tests. We stopped the algorithms and declared failure whenever the number of
iterations reached $200$. Note that TAD can potentially declare
convergence/failure before reaching $200$ iterations. We added the same penalty
term at the boundaries to the $L_2$ acquisition function $\mathcal{E}$ as we
did to the TAD acquisition function. 

As shown by the inset, TAD reached "convergence/success" $49$ times out of the
$50$--the remaining case was halted at 200 iterations, rather than according to
TAD's native convergence/failure criterion. The sequential $L_2$ method
converged $22/50$ times. Thus TAD consistently found a solution within the
prescribed tolerance, whereas the sequential $L_2$ method struggled to converge
within the set number of iterations. TAD required, on average, 43 iterations to
converge, corresponding to about 130-150 function evaluations (three per
iteration, including restarts). It is worth noting that, in the cases where they
reached  "convergence/success", the sequential $L^2$ method did so on average faster than TAD---24 iterations on average, as opposed to 43.

The nature of the convergence failures of the sequential $L_2$ method are
interesting. The right panel of \cref{fig:TADvsL2} displays the final states of
all the runs of the method, superposed on a colormap of $l(\bx)\equiv\left\Vert
f(\bx_{opt})-f_T\right\Vert_2^2$ and on streamlines of $-\nabla l(\bx)$. It
appears that the method located a few local minima of $\mathcal{E}(\bx)$,
represented in the figure by the clusters near stationary points indicated by
the behavior of the streamlines, and by the concentrations at the top and left
boundary, where the boundary penalty term created a spurious minimum of
$\mathcal{E}(\bx)$. Such points $\bx_{opt}$ are not solutions, because
$f(\bx_{opt})$ is outside the TTR. Adding samples in the neighborhood of such an
``optimum'' point $\bx_{opt}$ causes the predictive distribution to
progressively narrow, shrinking the trace term in Equation (\ref{eq:ESL2D})
while the predictive mean $p^{(f|1)}$ converges to $f(\bx_{opt})$.  The function
$\mathcal{E}(\bx)$ thus eventually stabilizes at the optimal value $\left\Vert
f(\bx_{opt})-f_T\right\Vert_2^2$, and the optimization stalls, being unable to
fit the uncertainty box at $\bx_{opt}$ into the TTR.  Had we adopted an EI-based
approach instead of the ESL2D approach chosen here, this behavior near
non-solution minima of $\delta(\bx)$ would manifest itself as flat regions where
the sampling is unable to increase the value of the $\delta(\bx)$ from previous
iterations, and the optimization would again stall. In order for such an
algorithm to produce confidence that a convergence failure likely corresponds to
the nonexistence of a solution, it would be necessary to restart it multiple
times in different parts of the domain, or to begin with a dense design of
initial sample points.

By contrast, the TAD acquisition function naturally leads to rejection of an
initially-promising non-solution point, because as the predictive distribution
narrows, the log-probability density of $f_T$, initially high, begins to shrink
rapidly. TAD thus avoids stalling at such points, once their inadequacy is
revealed, and is therefore capable of exploring different regions of the input space starting from a small initial design  $(\bx_1, \bg_1)$, and without requiring an extensive campaign of restarts across the domain.

It thus appears that the incorporation of changing relative
uncertainty in the surrogate function along different search directions does
indeed impact the efficiency of the search. 

\section{Discussion}

We have given some theory and some empirical experience to justify the design of TAD, a new algorithm for data-driven sequential discovery of optimal control settings for advanced manufacturing equipment. The algorithm is designed to track uncertainty in multiple design feature dimensions, so as to be capable of satisfying simultaneous specified tolerance criteria within specified uncertainty. 

Some noteworthy features of TAD  bear comment. TAD works unproblematically with gradient-based optimization methods, unlike methods that rely on expected improvement-type criteria. EI is known not to be well suited to gradient optimization, because it tends to be multimodal with large ``flat'' regions of minimal change and because it is not continuously differentiable \cite{jones1998efficient}. In addition, the TAD acquisition function is inexpensive to compute, unlike, for example, the knowledge gradient \cite{NIPS2016_18d10dc6}.   Such methods are thereby limited to relatively low-dimensional problems. 
The advanced manufacturing problem space, on the other hand, can easily lead to relatively large control and design spaces. Having a gradient method available provides some confidence that the algorithm can be scaled to larger problems.

A second reason to feel confidence in success at larger scales is the recent breakthroughs that have occurred in GP modeling with vastly more points than was possible in the past, using fast kernel approximations and GPU acceleration \cite{wilson2015kernel,gardner2018gpytorch}. Given that high-dimensional control and design spaces might require many iterations and numerous samples to reach convergence and that the associated GP and TAD optimization problems can therefore technically require many potentially expensive linear solves, the availability of fast  methods seems likely to be  important.

The importance of validation, principally by means of LOO validation, has been emphasized by, for example,  \cite{jones1998efficient,bingham2014design}.  It is worth driving home again. A Gaussian process surrogate whose validity is not assiduously monitored has every chance of making poor predictions, and no algorithm founded on such a model is secure no matter how sound its mathematical foundation.  Oddly we have not found any references in the literature to the seemingly obvious idea of using the $\chi^2$ distribution with the predictive distribution of a batch of samples, in the manner that we described in \cref{subsec:validation}. This is a straightforward validation test that offers sensitivity gains over straight LOO and that comes practically  for free since the linear solve of a GP prediction is an inner product away from the quadratic form that yields $\chi^2$.

In real-world manufacturing applications the highly simplified model represented by the mapping $X\rightarrow Y$ from control space to design space can be clouded. For example, a product's desired design properties might require so-called ex situ measurements at facilities external to the manufacturing facilities \cite{wolfman2022importance}, and this requirement can introduce long time latencies in the measurement of the experimental response function that vitiate the practical application of a direct TAD-based methodology. In such cases one can still envision a place for the TAD algorithm in the design workflow. For example, in additive manufacturing, it is pointed  out in \cite{10.1115/DETC2019-98415}  that  complex but discernible relationships  can exist between process (i.e., promptly measured) manufacture parameters and final structure, property, and performance parameters of a product.  Even when those parameters must be validated externally, a TAD workflow targeted at a known-good region of a process-parameter space from a high-dimensional control space could cut down considerably on the number of test pieces that must be sent to the external test facility for validation. 

\section*{Acknowledgments}
This material was based upon work supported by the U.S. Department of Energy, Office of Science, Office of Advanced Scientific Computing Research, Scientific Discovery through Advanced Computing (SciDAC) program through the FASTMath Institute under Contract DE-AC02-06CH11357 at Argonne National Laboratory.

\appendix

\section{Gaussian Prediction Update Formula}\label{sec:appendix_gpuf}

Suppose a vector-valued random variable of dimension $1+N_1+N_2$, written in partitioned form $[f,\bg_1,\bg_2]^T$, has a multivariate normal distribution, 
\begin{equation}
\begin{bmatrix}
f\\
\bg_1\\
\bg_2
\end{bmatrix}
\sim\mathcal{N}\left\{ \begin{bmatrix}
m\\
\bmm_{1}\\
\bmm_{2}
\end{bmatrix},
\begin{bmatrix}
C_{ff} & \bC_{f1} & \bC_{f2}\\
\bC_{1f} & \bC_{11} & \bC_{12}\\
\bC_{2f} & \bC_{21} & \bC_{22}
\end{bmatrix}\right\},\label{eq:jnorm_1}
\end{equation}
where $f$ and $m$ are 1-dimensional, $\bg_1$ and $\bmm_1$ are $N_1$-dimensional, $\bg_2$ and $\bmm_2$ are $N_2$-dimensional, and the various covariance matrix partitions have the obvious necessary dimensions.

The usual normal theory conditioning formulae allow us to write the distributions for $\bg_2|\bg_1$, $f|\bg_1$ and $f|(\bg_1,\bg_2)$:
\begin{align}
\bg_{2}|\bg_{1} &\sim \mathcal{N}\left(\bp^{(2|1)},\bQ^{(2|1)}\right)\label{eq:datapred}\\
\bp^{(2|1)} &= \bmm_{2}+\bC_{21}\bC_{11}^{-1}(\bg_{1}-\bmm_{1})\label{eq:datapred_mean}\\
\bQ^{(2|1)} &= \bC_{22}-\bC_{21}\bC_{11}^{-1}\bC_{12}.\label{eq:datapred_cov}
\end{align}
\begin{align}
f|\bg_{1} &\sim \mathcal{N}(p^{(f|1)},Q^{(f|1)})\label{eq:pred_1}\\
p^{(f|1)} &= m+\bC_{f1}\bC_{11}^{-1}(\bg_{1}-\bmm_1)\label{eq:pred_1_mean}\\
Q^{(f|1)} &= C_{ff}-\bC_{f1}\bC_{11}^{-1}\bC_{1f}.,\label{eq:pred_1_cov}
\end{align}
and
\begin{align}
f|\bg_{1},\bg_{2} &\sim \mathcal{N}\left\{ p^{(f|1+2)},Q^{(f|1+2)}\right\} \label{eq:pred_1+2}\\
p^{(f|1+2)} &= m+\begin{bmatrix}
\bC_{f1} & \bC_{f2}\end{bmatrix}
\begin{bmatrix}
\bC_{11} & \bC_{12}\\
\bC_{21} & \bC_{22}
\end{bmatrix}^{-1}
\begin{bmatrix}
\bg_{1}-\bmm_{1}\\
\bg_{2}-\bmm_{2}
\end{bmatrix}\label{eq:pred_1+2_mean}\\
Q^{(f|1+2)} &= C_{ff}-\begin{bmatrix}
\bC_{f1} & \bC_{f2}\end{bmatrix}
\begin{bmatrix}
\bC_{11} & \bC_{12}\\
\bC_{21} & \bC_{22}
\end{bmatrix}^{-1}
\begin{bmatrix}
\bC_{1f}\\
\bC_{2f}
\end{bmatrix}.\label{eq:pred_1+2_cov}
\end{align}

In the main text, the interpretation of $\bg_1$ is ``data in the can,'' while that of $\bg_2$ is ``data that could be acquired at the next iteration,'' and $f$ is a latent value to be predicted. It is therefore useful to re-express the predictive distribution $f|(\bg_1,\bg_2)$ represented by Equations (\ref{eq:pred_1+2}--\ref{eq:pred_1+2_cov}) as an update to the predictive distribution $f|\bg_1$ represented by Equations (\ref{eq:pred_1}--\ref{eq:pred_1_cov}) consequent on the updating of the data $\{\bg_1\}\rightarrow\{\bg_1,\bg_2\}$. 

\begin{lemma}[Gaussian Prediction Update]\label{thm:gpred_update} Update formulae from $f|\bg_1$ to $f|(\bg_1,\bg_2)$ may be written as follows:

\begin{enumerate}
\item[(i)] The normal means $p^{(f|1+2)}$ and $p^{(f|1)}$ are related by
\begin{equation}
p^{(f|1+2)} = p^{(f|1)} + \left(\bC_{f2}-\bC_{f1}\bC_{11}^{-1}\bC_{12}\right)\left(\bQ^{(2|1)}\right)^{-1}\left(\bg_{2}-\bp^{(2|1)}\right).\label{eq:mean_update}
\end{equation}

\item[(ii)] The normal covariances $Q^{(f|1+2)}$ and $Q^{(f|1)}$ are related by
\begin{equation}
Q^{(f|1+2)} = Q^{(f|1)} -\left(\bC_{f2}-\bC_{f1}\bC_{11}^{-1}\bC_{12}\right)\left(\bQ^{(2|1)}\right)^{-1}\left(\bC_{2f}-\bC_{21}\bC_{11}^{-1}\bC_{1f}\right).\label{eq:Covariance_update}
\end{equation}
\end{enumerate}
\end{lemma}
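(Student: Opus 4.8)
The plan is to prove both update formulae simultaneously by exploiting the sequential (tower) structure of Gaussian conditioning rather than inverting the full joint block covariance directly. The key observation is that conditioning the joint vector $[f,\bg_1,\bg_2]^T$ on both $\bg_1$ and $\bg_2$ may be carried out in two stages: first condition on $\bg_1$ alone, obtaining a reduced joint normal law for $(f,\bg_2)\mid\bg_1$; then, within that reduced law, condition on $\bg_2$. Since jointly normal variables remain jointly normal under conditioning, the second stage is again a standard normal-conditioning step, and by consistency of conditional distributions its output must coincide with the law of $f\mid(\bg_1,\bg_2)$ given in \cref{eq:pred_1+2,eq:pred_1+2_mean,eq:pred_1+2_cov}.

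First I would assemble the joint law of $(f,\bg_2)\mid\bg_1$. Its marginal means and variances are already recorded in the statement: the $f$-component has mean $p^{(f|1)}$ and variance $Q^{(f|1)}$ (Equations \cref{eq:pred_1_mean,eq:pred_1_cov}), while the $\bg_2$-component has mean $\bp^{(2|1)}$ and variance $\bQ^{(2|1)}$ (Equations \cref{eq:datapred_mean,eq:datapred_cov}). The only genuinely new ingredient is the conditional cross-covariance between $f$ and $\bg_2$ given $\bg_1$, which the standard conditional-covariance formula supplies as $\bm{R}\equiv\bC_{f2}-\bC_{f1}\bC_{11}^{-1}\bC_{12}$. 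Establishing this $\bm{R}$ is the linchpin of the argument, since it is precisely the factor appearing in both \cref{eq:mean_update,eq:Covariance_update}.

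Next I would apply the ordinary Gaussian conditioning formulae to this reduced $(f,\bg_2)\mid\bg_1$ law, now conditioning on the realized value of $\bg_2$. This yields conditional mean $p^{(f|1)}+\bm{R}\,(\bQ^{(2|1)})^{-1}(\bg_2-\bp^{(2|1)})$ and conditional covariance $Q^{(f|1)}-\bm{R}\,(\bQ^{(2|1)})^{-1}\bm{R}^T$. Recognizing that $\bm{R}^T=\bC_{2f}-\bC_{21}\bC_{11}^{-1}\bC_{1f}$ reproduces parts (i) and (ii) verbatim, closing both proofs at once.

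I expect the main obstacle to be conceptual rather than computational: justifying that the two-stage conditioning genuinely reconstructs $f\mid(\bg_1,\bg_2)$, together with the careful bookkeeping that pins down the conditional cross-covariance $\bm{R}$. A more mechanical alternative would invert the joint $[\bg_1,\bg_2]$ covariance directly through the Schur complement of $\bC_{11}$---which is exactly $\bQ^{(2|1)}$---and substitute into \cref{eq:pred_1+2_mean,eq:pred_1+2_cov}. That route verifies the formulae by brute force but obscures why $\bQ^{(2|1)}$ and $\bm{R}$ arise, so I would favor the sequential-conditioning argument and at most relegate the Schur-complement identity to a supporting remark.
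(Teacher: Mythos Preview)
Your proposal is correct and takes a genuinely different route from the paper. The paper proceeds exactly along the ``mechanical alternative'' you sketch at the end: it writes out the block inverse of $\begin{bmatrix}\bC_{11}&\bC_{12}\\\bC_{21}&\bC_{22}\end{bmatrix}$ via the matrix inversion lemma (Schur complement with respect to $\bC_{11}$), noting that the $(2,2)$ block of the inverse is $(\bQ^{(2|1)})^{-1}$, and then substitutes the four blocks into \cref{eq:pred_1+2_mean,eq:pred_1+2_cov} and simplifies term by term until the combinations $\bC_{f2}-\bC_{f1}\bC_{11}^{-1}\bC_{12}$ and $\bg_2-\bp^{(2|1)}$ emerge. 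Your sequential-conditioning argument reaches the same destination with almost no algebra, because the cross-covariance $\bm{R}=\bC_{f2}-\bC_{f1}\bC_{11}^{-1}\bC_{12}$ drops out of a single application of the conditional-covariance formula and the second conditioning step is then a one-line standard identity. What your approach buys is transparency about \emph{why} $\bQ^{(2|1)}$ and $\bm{R}$ are the natural objects; what the paper's approach buys is that it is entirely self-contained linear algebra, with no need to invoke closure of joint normality under conditioning or the consistency of iterated conditioning. Either is a complete proof.
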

\begin{proof}

We begin by expanding the inversion of the partitioned matrix in Equations (\ref{eq:pred_1+2_mean}) and (\ref{eq:pred_1+2_cov}) using the matrix inversion lemma (see \S 2.7 of \cite{press2007numerical}):

\begin{equation}
\begin{bmatrix}
C_{11} & C_{12}\\
C_{21} & C_{22}
\end{bmatrix}^{-1}=
\begin{bmatrix}
I_{11} & I_{12}\\
I_{21} & I_{22}
\end{bmatrix},\label{eq:MIL_1}
\end{equation}
with
\begin{align}
\bI_{22} &\equiv \left(\bC_{22}-\bC_{21}\bC_{11}^{-1}\bC_{12}\right)^{-1}\label{eq:MIL_2}\\
 &= \left(Q^{(2|1)}\right)^{-1}\label{eq:MIL_2p}\\
\bI_{12} &\equiv -\bC_{11}^{-1}\bC_{12}\bI_{22}\label{eq:MIL_3}\\
\bI_{21} &\equiv -\bI_{22}\bC_{21}\bC_{11}^{-1}\label{eq:MIL_4}\\
\bI_{11} &\equiv \bC_{11}^{-1}+\bC_{11}^{-1}\bC_{12}\bI_{22}\bC_{21}\bC_{11}^{-1}.\label{eq:MIL_5}
\end{align}
The second line notes the equality with the inverse of Equation (\ref{eq:datapred_cov}).

To establish part (i), we combine Equations (\ref{eq:pred_1+2_mean}) and
(\ref{eq:MIL_1}-\ref{eq:MIL_5}), obtaining an expression for the mean update:

\begin{align}
p^{(f|1+2)} &= m+\begin{pmatrix}
\bC_{f1}I_{11}+\bC_{f2}I_{21} & \bC_{f1}I_{12}+\bC_{f2}I_{22}\end{pmatrix}
\begin{pmatrix}
\bg_{1}-\bmm_{1}\\
\bg_{2}-\bmm_{2}
 \end{pmatrix}\nonumber \\
 &= m+\bC_{f1}\left(\bC_{11}^{-1}+\bC_{11}^{-1}\bC_{12}I_{22}\bC_{21}\bC_{11}^{-1}\right)\left(\bg_{1}-\bmm_{1}\right)\nonumber \\
 & +\bC_{f2}\left(-I_{22}\bC_{21}\bC_{11}^{-1}\right)\left(\bg_{1}-\bmm_{1}\right)\nonumber \\
 & +\left[\bC_{f1}\left(-\bC_{11}^{-1}\bC_{12}I_{22}\right)+\bC_{f2}I_{22}\right]\left(\bg_{2}-\bmm_{2}\right)\nonumber \\
 &= p^{(f|1)}+\left(\bC_{f1}\bC_{11}^{-1}\bC_{12}-\bC_{f2}\right)I_{22}\bC_{21}\bC_{11}^{-1}\left(\bg_{1}-\bmm_{1}\right)\nonumber \\
 &  -\left(\bC_{f1}\bC_{11}^{-1}\bC_{12}-\bC_{f2}\right)I_{22}\left(\bg_{2}-\bmm_{2}\right)\nonumber \\
 & = p^{(f|1)}+\left(\bC_{f2}-\bC_{f1}\bC_{11}^{-1}\bC_{12}\right)I_{22}\left[\bg_{2}-\bmm_{2}-\bC_{21}\bC_{11}^{-1}\left(\bg_{1}-\bmm_{1}\right)\right]\nonumber \\
 & = 
p^{(f|1)}+\left(\bC_{f2}-\bC_{f1}\bC_{11}^{-1}\bC_{12}\right)
\left(\bQ^{(2|1)}\right)^{-1}
\left(\bg_{2}-p^{(2|1)}\right),\nonumber
\label{eq:mean_update_2}
\end{align}
where in the last line we have used Equations (\ref{eq:datapred_mean})
and (\ref{eq:MIL_2p}).

To establish part (ii), we combine Equations (\ref{eq:pred_1+2_cov}) and (\ref{eq:MIL_1}-\ref{eq:MIL_5}) to obtain an expression for the covariance update:
\begin{align}
Q^{(f|1+2)} &= \bC_{ff}-\bC_{f1}\bI_{11}\bC_{1f}-\bC_{f2}\bI_{21}\bC_{1f}-\bC_{f1}\bI_{12}\bC_{2f}-\bC_{f2}\bI_{22}\bC_{2f}\nonumber \\
 &= \bC_{ff}-\bC_{f1}\bC_{11}^{-1}\bC_{1f}-\bC_{f1}\bC_{11}^{-1}\bC_{12}\bI_{22}\bC_{21}\bC_{11}^{-1}\bC_{1f}\nonumber \\
 & -\bC_{f2}\left(-\bI_{22}\bC_{21}\bC_{11}^{-1}\right)\bC_{1f}-\bC_{f1}\left(-\bC_{11}^{-1}\bC_{12}\bI_{22}\right)\bC_{2f}-\bC_{f2}\bI_{22}\bC_{2f}\nonumber \\
 &= Q^{(f|1)}-\left(\bC_{f1}\bC_{11}^{-1}\bC_{12}-\bC_{f2}\right)\bI_{22}\bC_{21}\bC_{11}^{-1}\bC_{1f}-\left(\bC_{f2}-\bC_{f1}\bC_{11}^{-1}\bC_{12}\right)\bI_{22}\bC_{2f}\nonumber \\
 &= Q^{(f|1)}-\left(\bC_{f2}-\bC_{f1}\bC_{11}^{-1}\bC_{12}\right)\bI_{22}\left(\bC_{2f}-\bC_{21}\bC_{11}^{-1}\bC_{1f}\right)\nonumber \\
 &= Q^{(f|1)}-\left(\bC_{f2}-\bC_{f1}\bC_{11}^{-1}\bC_{12}\right)\left(\bQ^{(2|1)}\right)^{-1}\left(\bC_{2f}-\bC_{21}\bC_{11}^{-1}\bC_{1f}\right).\nonumber
\label{eq:Covariance_update_2}
\end{align}
\end{proof}

\begin{remk}
We note---although  hardly unexpected---that $p^{(f|1+2)}$ is linearly dependent on $\bg_2$, while $Q^{(f|1+2)}$ is independent of $\bg_2$.
\end{remk}

\section{Redundant Information in The Zero-Noise Limit}

The behavior of the TAD algorithm is elucidated in the text by considering the case of redundant points in the limit of zero measurement noise. In this appendix we supply the technical results to support that discussion.

Suppose that the points $\bx_1$ and $\bx_2$ have a subset of points $\br$ in common, so that $\bx_1=\by_1\cup \br$, $\bx_2=\by_2\cup \br$,
$\by_1\cap\by_2=\emptyset$. We partition the submatrices of the covariance in Equation \cref{eq:joint1_cov} accordingly:
\begin{equation}
\Gamma\equiv
\begin{bmatrix}
K_{xx} & \bK_{x1} & \bK_{x2}\\
\bK_{1x} & \bK_{11} + \bSigma_1  & \bK_{12}\\
\bK_{2x} & \bK_{21} & \bK_{22} + \bSigma_2
\end{bmatrix},
\label{eq:joint1_cov2}
\end{equation}
where
\begin{equation}
\bK_{11}=
\begin{bmatrix}
\bJ_{11}&\bJ_{1r}\\
\bJ_{r1}&\bJ_{rr}
\end{bmatrix}
\quad;\quad
\bK_{22}=
\begin{bmatrix}
\bJ_{22}&\bJ_{2r}\\
\bJ_{r2}&\bJ_{rr}
\end{bmatrix}
\quad;\quad
\bK_{12}=
\begin{bmatrix}
\bJ_{12}&\bJ_{1r}\\
\bJ_{r2}&\bJ_{rr}
\end{bmatrix},
\label{eq:partitions_1}
\end{equation}
\begin{equation}
\bK_{x1}=
\begin{bmatrix}
\bJ_{x1}&\bJ_{xr}
\end{bmatrix}
\quad;\quad
\bK_{x2}=
\begin{bmatrix}
\bJ_{x2}&\bJ_{xr}
\end{bmatrix},
\label{eq:partitions_2}
\end{equation}
with $\bJ_{11}\equiv C(\by_1,\by_1)$, $\bJ_{1r}=\bJ_{r1}^T\equiv C(\by_1,\br)$, $\bJ_{rr}\equiv C(\br,\br)$, $\bJ_{22}\equiv C(\by_2,\by_2)$, $\bJ_{2r}=\bJ_{r2}^T\equiv C(\by_r,\br)$, $\bJ_{12}\equiv C(\by_1,\by_2)$, $\bJ_{x1}\equiv C(x,\by_1)$, $\bJ_{x2}\equiv C(x,\by_2)$, and $\bJ_{xr}\equiv C(x,\br)$.

If we omit the redundant $\br$ points from $\bx_2$, we obtain a nonredundant set $\bx_2^\prime=\by_2$, in terms of which the TAD problem gives rise to the covariance $\Gamma^\prime$ given by
\begin{equation}
\Gamma^\prime\equiv
\begin{bmatrix}
K_{xx} & \bK_{x1} & \bJ_{x2}\\
\bK_{1x} & \bK_{11} + \bSigma_1  & \bL_{12}\\
\bJ_{2x} & \bL_{21} & \bJ_{22} + \bsigma_2
\end{bmatrix},
\label{eq:joint1_covprime}
\end{equation}
where we have defined the block matrices $\bL_{12}\equiv\bL_{21}^T$, with
\begin{equation}
\bL_{12}\equiv
\begin{bmatrix}
\bJ_{12}\\
\bJ_{r2}
\end{bmatrix}
\label{eq:Lmat_def}
\end{equation}
and where $\bsigma_2$ is the restriction of the noise $\bSigma_2$ to the nonredundant space of $\by_2$. We will refer to the dimensionality of the various partitioned spaces as $D_1^\prime$ for the nonredundant ``$1^\prime$''  space, $D_2^\prime$ for the nonredundant ``$2^\prime$'' space, $D_r$ for the redundant space, and $D_x$ for the prediction space. So, for example, the matrix $\bL_{12}$ is $(D_1^\prime+D_r)\times D_2^\prime$-dimensional, the matrix $\bJ_{2r}$ is $D_2^\prime\times D_r$-dimensional, $\bJ_{x1}$ is $D_x\times D_1^\prime$-dimensional, and so on.

The predictive covariance that follows from the covariance structure in Equation \cref{eq:joint1_cov2} is
\begin{equation}
Q^{(f(x)|1+2)}=K_{xx}-\begin{bmatrix}
\bK_{x1} & \bK_{x2}
\end{bmatrix}
\begin{bmatrix}
\bK_{11}+\bSigma_1 & \bK_{12}\\
\bK_{21} & \bK_{22}+\bSigma_2
\end{bmatrix}^{-1}
\begin{bmatrix}
\bK_{1x}\\
\bK_{2x}
\end{bmatrix},\label{eq:pred_1+2_cov2}
\end{equation}
while the predictive covariance that follows from the nonredundant data and the covariance in Equation \cref{eq:joint1_covprime} is
\begin{equation}
Q^{(f(x)|1+2^\prime)}=K_{xx}-\begin{bmatrix}
\bK_{x1} & \bJ_{x2}
\end{bmatrix}
\begin{bmatrix}
\bK_{11}+\bSigma_1 & \bJ_{12}\\
\bJ_{21} & \bJ_{22}+\bsigma_2
\end{bmatrix}^{-1}
\begin{bmatrix}
\bK_{1x}\\
\bJ_{2x}
\end{bmatrix}.\label{eq:pred_1+2_covprime}
\end{equation}

Suppose  that we set $\bSigma_1=0$, and we introduce  a multiplicative parameter $\epsilon$ regulating the size of $\bSigma_2$, that  is,  $\bSigma_2=\epsilon\overline{\bSigma}_2$ with $\overline{\bSigma}_2$ a finite noise covariance matrix, so that we can take a noise-free limit by taking the limit $\epsilon\rightarrow 0$.

As a matter of information theory, it is intuitively clear that redundant information affects predictions only up to noise and that in the noise-free limit the predictive distributions characterized by the covariances in Equations \cref{eq:pred_1+2_cov2,eq:pred_1+2_covprime} should coincide. On the other hand, as an algebraic matter it is not at all clear that the covariance in Equation \cref{eq:pred_1+2_cov2} is even well defined, since the matrix inverse on the right-hand side of Equation  \cref{eq:pred_1+2_cov2} certainly does not exist in the limit. This is troubling because we would like to be sure of the behavior of the TAD acquisition function, Equation \cref{eq:TAD_of}, when ``2'' points approach ``1'' points. Numerically, we are likely to encounter instabilities when this occurs.  It is helpful to at least know what one ought to expect in this case if one had infinite numerical precision at one's disposal.

The following theorem is therefore helpful.

\begin{theorem}[Noise-Free Limit]\label{thm:noise-free} The covariance 
$Q^{(f(x)|1+2)}$ has a finite limit as $\epsilon\rightarrow 0$ and, in fact,
$\lim_{\epsilon\rightarrow 0}Q^{(f(x)|1+2)}= Q^{(f(x)|1+2^\prime)}$.
\end{theorem}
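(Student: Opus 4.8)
The plan is to circumvent the singularity of the conditioning block of $\Gamma$ in Equation \cref{eq:joint1_cov2} by exploiting the fact that the predictive covariance of a jointly Gaussian vector is invariant under any invertible linear transformation of the conditioning variables. Concretely, with the data vector ordered as $[\bg_{\by_1},\bg_{\br}^{(1)},\bg_{\by_2},\bg_{\br}^{(2)}]$ (the two copies of $\br$ being its ``1'' and ``2'' measurements), I would introduce the invertible block transformation $\bm{T}$ that leaves the first three blocks fixed and replaces the last block by the difference $\bg_{\br}^{(2)}-\bg_{\br}^{(1)}$. Writing $\bm{M}$ for the $(1{+}2)$-covariance block appearing in Equation \cref{eq:pred_1+2_cov2} and $\bm{c}\equiv[\bK_{1x}^T,\bK_{2x}^T]^T$ for the stacked cross-covariance with $f(x)$, invariance of conditioning under $\bm{T}$ gives $Q^{(f(x)|1+2)}=K_{xx}-\tilde{\bm{c}}^{\,T}(\bm{T}\bm{M}\bm{T}^T)^{-1}\tilde{\bm{c}}$ with $\tilde{\bm{c}}\equiv\bm{T}\bm{c}$.

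The crucial simplification is that $\bSigma_1=\bm{0}$ renders the ``1'' measurement at $\br$ noise-free, so that the transformed coordinate $\bg_{\br}^{(2)}-\bg_{\br}^{(1)}$ collapses to pure ``2'' noise at $\br$ and decouples from the latent field and from all remaining observations up to terms of order $\epsilon$. Carrying out the row and column operations induced by $\bm{T}$ on Equations \cref{eq:partitions_1,eq:partitions_2}, with $\bSigma_2=\epsilon\overline{\bSigma}_2$, yields
\begin{equation}
\bm{T}\bm{M}\bm{T}^T=\begin{bmatrix}\bm{M}^\prime & \bm{B}\\ \bm{B}^T & \bm{D}_\epsilon\end{bmatrix},\qquad \tilde{\bm{c}}=\begin{bmatrix}\bm{c}^\prime\\ \bm{0}\end{bmatrix},
\end{equation}
where $\bm{M}^\prime$ is precisely the reduced conditioning matrix underlying Equation \cref{eq:pred_1+2_covprime} (with $\bL_{12}$ as in Equation \cref{eq:Lmat_def}), $\bm{c}^\prime\equiv[\bK_{1x}^T,\bJ_{2x}^T]^T$ is the corresponding reduced cross term, the coupling block $\bm{B}$ collects the off-diagonal ``2'' noise and is $O(\epsilon)$, and $\bm{D}_\epsilon=\bsigma_{rr}=\epsilon\,\overline{\bsigma}_{rr}$ is invertible for every $\epsilon>0$. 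The content of this step is the cancellation of every field-dependent entry tying $\br^{(2)}$ to the rest of the vector, leaving only noise in the last block row and column.

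Since the last block of $\tilde{\bm{c}}$ vanishes, only the leading block of $(\bm{T}\bm{M}\bm{T}^T)^{-1}$ enters the quadratic form, and by the Schur-complement identity that block equals $(\bm{M}^\prime-\bm{B}\bm{D}_\epsilon^{-1}\bm{B}^T)^{-1}$. I would then observe that $\bm{B}\bm{D}_\epsilon^{-1}\bm{B}^T=\epsilon\,\overline{\bsigma}_{2r}\overline{\bsigma}_{rr}^{-1}\overline{\bsigma}_{r2}$ in its single nonzero block, which is $O(\epsilon)$ notwithstanding the $O(1/\epsilon)$ growth of $\bm{D}_\epsilon^{-1}$, precisely because $\bm{B}=O(\epsilon)$. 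Letting $\epsilon\to0$ therefore gives $(\bm{M}^\prime-\bm{B}\bm{D}_\epsilon^{-1}\bm{B}^T)^{-1}\to\bm{M}^{\prime\,-1}$, whence $\lim_{\epsilon\to0}Q^{(f(x)|1+2)}=K_{xx}-\bm{c}^{\prime\,T}\bm{M}^{\prime\,-1}\bm{c}^\prime=Q^{(f(x)|1+2^\prime)}$, which is the asserted identity.

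The one genuinely delicate step is the $\epsilon$-accounting just described: the factor $\bm{D}_\epsilon^{-1}$ diverges, and a careless manipulation of the singular inverse would wrongly suggest that the limit fails to exist. Routing the entire redundancy through the single noise-dominated coordinate $\bg_{\br}^{(2)}-\bg_{\br}^{(1)}$ is what tames this divergence and confines it to a block that is annihilated by the vanishing entry of $\tilde{\bm{c}}$; once that structure is in place, the block multiplications and the Schur complement are entirely routine.
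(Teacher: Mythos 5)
Your proof is correct, and it takes a genuinely different route from the paper's. The paper works through the update formula $Q^{(f(x)|1+2)}=Q^{(f(x)|1)}-T$ of \cref{thm:Qthm}, writes $\bQ^{(2|1)}=\bSigma_2+\bP_{2^\prime}^T\bz\bP_{2^\prime}$, inverts this asymptotically in $\epsilon$ via the Sherman--Morrison--Woodbury lemma, and separately shows that the cross term $\bK_{x2}-\bK_{x1}\bK_{11}^{-1}\bK_{12}=\bq\bP_{2^\prime}$ annihilates the redundant block, concluding $T=T^\prime+\mathcal{O}(\epsilon)$. You instead change variables on the conditioning data, replacing the second copy of the measurement at $\br$ by the difference of the two copies; with $\bSigma_1=0$ that coordinate is pure ``2'' noise, exactly uncorrelated with $f(x)$ and with the ``1'' data, and tied to the remaining observations only through the $\mathcal{O}(\epsilon)$ noise cross-block, so a Schur complement confines the divergent $\bm{D}_\epsilon^{-1}$ to a block annihilated by the vanishing entry of $\tilde{\bm{c}}$. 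Both arguments hinge on the same structural fact---the redundant direction drops out of the cross-covariances---but yours localizes the singularity \emph{before} inverting anything, which avoids the paper's order-by-order expansion (and its $\mathcal{O}(\epsilon^{3})$ bookkeeping) and makes the information-theoretic moral transparent: the redundant measurement carries only noise. Two small points to make explicit if you write this up: the final step needs $\bm{M}^\prime$ invertible at $\epsilon=0$, i.e., the nonredundant points distinct and the kernel nondegenerate (the paper's analogous hypothesis is invertibility of $\bz$); and $\bm{M}^\prime$ itself retains an $\mathcal{O}(\epsilon)$ dependence through $\bsigma_2$, so the right-hand side of the limit should be read as $Q^{(f(x)|1+2^\prime)}$ evaluated at $\epsilon=0$, which is harmless since it is continuous there.
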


\begin{proof}
Consider from the update formula, Equation \cref{eq:Qthm},
\begin{equation}
Q^{(f(x)|1+2)}=Q^{(f(x)|1)}-T,
\label{eq:Q_pred_formula}
\end{equation}
where  $T$ is the correction term for for the ``2'' information, given by
\begin{equation}
T=\left[
\bK_{x2}-\bK_{x1}\bK_{11}^{-1}\bK_{12}
\right]
\left(\bQ^{(2|1)}\right)^{-1}
\left[
\bK_{2x}-\bK_{21}\bK_{11}^{-1}\bK_{1x}
\right]
\label{eq:correction_term}
\end{equation}
and where $\bQ^{(2|1)}$ is given by Equation \cref{eq:datapred_cov_t},
\begin{equation}
\bQ^{(2|1)} = 
\bK_{22}+\bSigma_2
- \bK_{21}\bK_{11}^{-1}\bK_{12}.
\label{eq:Q21_2}
\end{equation}

We will have a proof of the theorem if we can show that $\lim_{\epsilon\rightarrow 0}T=T^\prime$, where $T^\prime$ is the correction term for the nonredundant ``$2^\prime$'' information. $T^\prime$ can be written down by adapting the expression for $T$ in Equation \cref{eq:correction_term} in light of a simple comparison of Equations \cref{eq:joint1_cov2,eq:joint1_covprime}:
\begin{equation}
T^\prime=
\left[\bJ_{x2}-\bK_{x1}\bK_{11}^{-1}\bL_{12}\right]
\left(\bQ^{(2^\prime|1)}\right)^{-1}
\left[\bJ_{2x}-\bL_{21}\bK_{11}^{-1}\bK_{1x}\right],
\label{eq:Tprime}
\end{equation}
where
\begin{equation}
\bQ^{(2^\prime|1)}=\bJ_{22}+\bsigma_2-\bL_{21}\bK_{11}^{-1}\bL_{12}.
\label{eq:Qprime}
\end{equation}

The first task is to provide an asymptotic form for the term $\left(\bQ^{(2|1)}\right)^{-1}$ in the limit $\epsilon\rightarrow 0$.  The term in Equation \cref{eq:Q21_2} proportional to $\epsilon^0$ is
$\bK_{22}-\bK_{21}\bK_{11}^{-1}\bK_{12}$. Note that according to Equations
\cref{eq:partitions_1}, the matrices $\bK_{11}$ and $\bK_{12}$ share their
second block column. We may exploit this fact, together with the definition of
the inverse matrix and the definitions in Equations \cref{eq:partitions_1}, to
evaluate $\bK_{21}\bK_{11}^{-1}\bK_{12}$ and thereby show that
\begin{equation}
\bQ^{(2|1)}=
\begin{bmatrix}
\bz&0\\
0&0
\end{bmatrix}
+\bSigma_2,
\label{eq:Q21_3}
\end{equation}
where
\begin{equation}
\bz\equiv
\bJ_{22}-
\bL_{21}\bK_{11}^{-1}\bL_{12}.
\end{equation}

We now simplify the notation, defining  the $D_2^\prime\times(D_2^\prime+D_r)$-dimensional projection operator $\bP_{2^\prime}$ from the redundant 2-space to the nonredundant $2^\prime$-space
\begin{equation}
\bP_{2^\prime}\equiv
\begin{bmatrix}
\bm{I}_{D_2^\prime}&\bm{0}
\end{bmatrix}.
\label{eq:Projection}
\end{equation}
We may write
\begin{equation}
\bQ^{(2|1)}=\bSigma_2 + \bP_{2^\prime}^T\bz\bP_{2^\prime}.
\label{eq:Q21_4}
\end{equation}
Equation \cref{eq:Q21_4} is a low-rank update to the first term on its right-hand side, so we may use the Sherman--Morrison--Woodbury lemma (see \S 2.7 of \cite{press2007numerical}) to obtain an expression for the inverse of $\bQ^{(2|1)}$ that we require in Equation \cref{eq:correction_term}:
\begin{align}
\left(\bQ^{(2|1)}\right)^{-1}&={\bSigma_2}^{-1}-{\bSigma_2}^{-1}\bP_{2^\prime}^T
\left(\bz^{-1}+\bP_{2^\prime}{\bSigma_2}^{-1}\bP_{2^\prime}^T\right)^{-1}\bP_{2^\prime}{\bSigma_2}^{-1}\nonumber\\
&={\bSigma_2}^{-1}-{\bSigma_2}^{-1}\bP_{2^\prime}^T\bigg[
\left(\bP_{2^\prime}{\bSigma_2}^{-1}\bP_{2^\prime}^T\right)^{-1}\nonumber\\
&\hspace{2cm}
-\left(\bP_{2^\prime}{\bSigma_2}^{-1}\bP_{2^\prime}^T\right)^{-1}
\bz^{-1}
\left(\bP_{2^\prime}{\bSigma_2}^{-1}\bP_{2^\prime}^T\right)^{-1}+\mathcal{O}(\epsilon^3)
\bigg]\bP_{2^\prime}\bSigma_2^{-1},
\label{eq:smw}
\end{align}
where the asymptotic notation $\mathcal{O}[\epsilon^n]$ has the conventional meaning $x=z+\mathcal{O}[\epsilon^n]\Rightarrow\left|\lim_{\epsilon\rightarrow 0}(x-z)/\epsilon^n\right|<\infty$ and is interpreted in matrix equations as applying to all the matrix elements in the equation.

We may also compute the term $\bK_{x2}-\bK_{x1}\bK_{11}^{-1}\bK_{12}$ in Equation \cref{eq:correction_term}, again appealing to the common block column structure of $\bK_{11}$ and $\bK_{12}$ to evaluate $\bK_{11}^{-1}\bK_{12}$:
\begin{align}
\bK_{x2}-\bK_{x1}\bK_{11}^{-1}\bK_{12}&=
\begin{bmatrix}
\bJ_{x2}&\bJ_{xr}
\end{bmatrix}
-
\begin{bmatrix}
\bJ_{x1}&\bJ_{xr}
\end{bmatrix}
\bK_{11}^{-1}\bK_{12}\nonumber\\
&=
\begin{bmatrix}
\bq&\bm{0}
\end{bmatrix}\nonumber\\
&=\bq\bP_{2^\prime},
\label{eq:otherterm}
\end{align}
where we have defined
\begin{equation}
\bq\equiv \bJ_{x2}-
\bK_{x1}\bK_{11}^{-1}\bL_{12}.
\label{eq:qdef}
\end{equation}

Inserting Equations \cref{eq:smw,eq:qdef} into Equation \cref{eq:correction_term}, we find

\begin{align}
T&=\bq\left(\bP_{2^\prime}\bSigma_2^{-1}\bP_{2^\prime}^T\right)\bq^T
-\bq\left(\bP_{2^\prime}\bSigma_2^{-1}\bP_{2^\prime}^T\right)
\left(\bP_{2^\prime}\bSigma_2^{-1}\bP_{2^\prime}^T\right)^{-1}
\left(\bP_{2^\prime}\bSigma_2^{-1}\bP_{2^\prime}^T\right)\bq^T\nonumber\\
&+\bq\left(\bP_{2^\prime}\bSigma_2^{-1}\bP_{2^\prime}^T\right)
\left(\bP_{2^\prime}\bSigma_2^{-1}\bP_{2^\prime}^T\right)^{-1}
\bz^{-1}
\left(\bP_{2^\prime}\bSigma_2^{-1}\bP_{2^\prime}^T\right)^{-1}
\left(\bP_{2^\prime}\bSigma_2^{-1}\bP_{2^\prime}^T\right)
\bq^T\nonumber\\
&+\mathcal{O}(\epsilon)\nonumber\\
&=\bq\bz^{-1}\bq^T+\mathcal{O}(\epsilon)\nonumber\\
&=T^\prime+\mathcal{O}(\epsilon).
\label{eq:T_and_Tprime}
\end{align}

It follows that $\lim_{\epsilon\rightarrow 0}T = T^\prime$, so the theorem is proven.
\end{proof}

\section*{Government License}
The submitted manuscript has been created by UChicago Argonne, LLC,
Operator of Argonne National Laboratory (``Argonne''). Argonne, a U.S.
Department of Energy Office of Science laboratory, is operated under
Contract No. DE-AC02-06CH11357. The U.S. Government retains for itself,
and others acting on its behalf, a paid-up nonexclusive, irrevocable
worldwide license in said article to reproduce, prepare derivative
works, distribute copies to the public, and perform publicly and display
publicly, by or on behalf of the Government. The Department of Energy
will provide public access to these results of federally sponsored
research in accordance with the DOE Public Access Plan \url{https://www.energy.gov/downloads/doe-public-access-plan}.

\bibliographystyle{siamplain}

\begin{thebibliography}{10}

\bibitem{OCW}
{\sc A.~Alfaris}, {\em Multiobjective optimization}.
\newblock
  \url{https://ocw.mit.edu/courses/ids-338j-multidisciplinary-system-design-optimization-spring-2010/f091204b62ab5bb46825829fdedc66cd_MITESD_77S10_lec14.pdf},
  2010.

\bibitem{alvarez2012kernels}
{\sc M.~A. {\'A}lvarez, L.~Rosasco, and N.~D. Lawrence}, {\em Kernels for
  vector-valued functions: A review}, Foundations and Trends{\textregistered}
  in Machine Learning, 4 (2012), pp.~195--266,
  \url{https://arxiv.org/abs/1106.6251}.

\bibitem{bingham2014design}
{\sc D.~Bingham, P.~Ranjan, and W.~J. Welch}, {\em {Design Of Computer
  Experiments for Optimization, Estimation Of Function Contours, and Related
  Objectives}}, Statistics in Action: A Canadian Outlook, 109 (2014),
  \url{https://arxiv.org/abs/1601.05887}.

\bibitem{pymoo}
{\sc J.~{Blank} and K.~{Deb}}, {\em pymoo: Multi-objective optimization in
  python}, IEEE Access, 8 (2020), pp.~89497--89509.

\bibitem{10.2307/2246015}
{\sc K.~Chaloner and I.~Verdinelli}, {\em Bayesian experimental design: A
  review}, Statistical Science, 10 (1995), pp.~273--304,
  \url{https://www.jstor.org/stable/pdf/2246015.pdf}.

\bibitem{cheng2021recent}
{\sc H.~Cheng, J.~G. Shapter, Y.~Li, and G.~Gao}, {\em Recent progress of
  advanced anode materials of lithium-ion batteries}, Journal of Energy
  Chemistry, 57 (2021), pp.~451--468.

\bibitem{contal2013parallel}
{\sc E.~Contal, D.~Buffoni, A.~Robicquet, and N.~Vayatis}, {\em Parallel
  {Gaussian} process optimization with upper confidence bound and pure
  exploration}, in Joint European Conference on Machine Learning and Knowledge
  Discovery in Databases, Springer, 2013, pp.~225--240.

\bibitem{david2004order}
{\sc H.~David and H.~Nagaraja}, {\em Order Statistics}, Wiley Series in
  Probability and Statistics, Wiley, 2004,
  \url{https://books.google.com/books?id=bdhzFXg6xFkC}.

\bibitem{Deb2005}
{\sc K.~Deb, L.~Thiele, M.~Laumanns, and E.~Zitzler}, {\em Scalable Test
  Problems for Evolutionary Multiobjective Optimization}, Springer London,
  London, 2005, pp.~105--145, \url{https://doi.org/10.1007/1-84628-137-7_6},
  \url{https://doi.org/10.1007/1-84628-137-7_6}.

\bibitem{DelshadKhatibi2017}
{\sc P.~Delshad~Khatibi, H.~Henein, and U.~Fritsching}, {\em In-Situ, Real Time
  Diagnostics in the Spray Forming Process}, Springer International Publishing,
  Cham, 2017, pp.~221--263, \url{https://doi.org/10.1007/978-3-319-52689-8_6},
  \url{https://doi.org/10.1007/978-3-319-52689-8_6}.

\bibitem{Eadie:100342}
{\sc W.~T. Eadie, D.~Drijard, F.~E. James, M.~Roos, and B.~Sadoulet}, {\em
  {Statistical methods in experimental physics; 2nd ed.}}, World Scientific,
  Singapore, 2006.

\bibitem{gardner2018gpytorch}
{\sc J.~Gardner, G.~Pleiss, K.~Q. Weinberger, D.~Bindel, and A.~G. Wilson},
  {\em {GPyTorch}: Blackbox matrix-matrix {Gaussian} process inference with
  {GPU} acceleration}, in Advances in Neural Information Processing Systems,
  S.~Bengio, H.~Wallach, H.~Larochelle, K.~Grauman, N.~Cesa-Bianchi, and
  R.~Garnett, eds., vol.~31, Curran Associates, Inc., 2018,
  \url{https://proceedings.neurips.cc/paper/2018/file/27e8e17134dd7083b050476733207ea1-Paper.pdf}.

\bibitem{gonzalez2016batch}
{\sc J.~Gonz{\'a}lez, Z.~Dai, P.~Hennig, and N.~Lawrence}, {\em Batch
  {Bayesian} optimization via local penalization}, in Artificial intelligence
  and statistics, PMLR, 2016, pp.~648--657.

\bibitem{higdon2008computer}
{\sc D.~Higdon, J.~Gattiker, B.~Williams, and M.~Rightley}, {\em Computer model
  calibration using high-dimensional output}, Journal of the American
  Statistical Association, 103 (2008), pp.~570--583.

\bibitem{huang2006sequential}
{\sc D.~Huang, T.~T. Allen, W.~I. Notz, and R.~A. Miller}, {\em Sequential
  kriging optimization using multiple-fidelity evaluations}, Structural and
  Multidisciplinary Optimization, 32 (2006), pp.~369--382.

\bibitem{jenks2020basic}
{\sc C.~Jenks, H.~N. Lee, J.~Lewis, C.~R. Kagan, P.~Nealey, P.~Braun, J.~E.
  Holladay, Y.~Gao, D.~S. Sholl, B.~A. Helms, et~al.}, {\em Basic research
  needs for transformative manufacturing}, tech. report, USDOE Office of
  Science (SC), 2020, \url{https://doi.org/10.2172/1618267}.

\bibitem{jones1998efficient}
{\sc D.~R. Jones, M.~Schonlau, and W.~J. Welch}, {\em {Efficient Global
  Optimization of Expensive Black-Box Functions}}, Journal of Global
  optimization, 13 (1998), pp.~455--492,
  \url{https://doi.org/10.1023/A:1008306431147}.

\bibitem{kennedy2001bayesian}
{\sc M.~C. Kennedy and A.~O'Hagan}, {\em Bayesian calibration of computer
  models}, Journal of the Royal Statistical Society: Series B (Statistical
  Methodology), 63 (2001), pp.~425--464.

\bibitem{kullback1951information}
{\sc S.~Kullback and R.~A. Leibler}, {\em On information and sufficiency}, The
  Annals of Mathematical Statistics, 22 (1951), pp.~79--86.

\bibitem{lindley1956measure}
{\sc D.~V. Lindley}, {\em {On a Measure of the Information Provided by an
  Experiment}}, The Annals of Mathematical Statistics,  (1956), pp.~986--1005,
  \url{https://www.jstor.org/stable/pdf/2237191.pdf}.

\bibitem{609493}
{\sc K.~Muske}, {\em Steady-state target optimization in linear model
  predictive control}, in Proceedings of the 1997 American Control Conference
  (Cat. No.97CH36041), vol.~6, 1997, pp.~3597--3601 vol.6,
  \url{https://doi.org/10.1109/ACC.1997.609493}.

\bibitem{TAD_github}
{\sc M.~Ngom}, {\em Targeted adaptive design}.
\newblock \url{https://github.com/mngom2/Targeted_Adaptive_Design}, 2021.

\bibitem{opper1998general}
{\sc M.~Opper and F.~Vivarelli}, {\em General bounds on bayes errors for
  regression with gaussian processes}, Advances in Neural Information
  Processing Systems, 11 (1998).

\bibitem{PAULSON2020108972}
{\sc N.~H. Paulson, J.~A. Libera, and M.~Stan}, {\em Flame spray pyrolysis
  optimization via statistics and machine learning}, Materials \& Design, 196
  (2020), p.~108972,
  \url{https://doi.org/https://doi.org/10.1016/j.matdes.2020.108972}.

\bibitem{Peremezhney_et_al_2014}
{\sc N.~Peremezhney, E.~Hines, A.~Lapkin, and C.~Connaughton}, {\em Combining
  gaussian processes, mutual information and a genetic algorithm for
  multi-target optimization of expensive-to-evaluate functions}, Engineering
  Optimization, 46 (2014), pp.~1593--1607,
  \url{https://doi.org/10.1080/0305215X.2014.881997}.

\bibitem{GPyTorch}
{\sc G.~Pleiss, J.~R. Gardner, K.~Q. Weinberger, A.~G. Wilson, and
  M.~Balandat}, {\em Gpytorch}.
\newblock \url{https://github.com/cornellius-gp/gpytorch}, 2022,
  \url{https://gpytorch.ai/}.

\bibitem{press2007numerical}
{\sc W.~Press, S.~Teukolsky, W.~Vetterling, and B.~Flannery}, {\em Numerical
  Recipes 3rd Edition: The Art of Scientific Computing}, Cambridge University
  Press, 2007.

\bibitem{ranjan2016inverse}
{\sc P.~Ranjan, M.~Thomas, H.~Teismann, and S.~Mukhoti}, {\em Inverse problem
  for time-series valued computer model via scalarization}, Open Journal of
  Statistics,  (2016), pp.~528--544,
  \url{https://arxiv.org/abs/arXiv:1605.09503}.

\bibitem{RandW}
{\sc C.~Rasmussen and C.~Williams}, {\em Gaussian Processes for Machine
  Learning}, Adaptive Computation and Machine Learning, MIT Press, Cambridge,
  MA, USA, Jan. 2006.

\bibitem{10.1115/DETC2019-98415}
{\sc S.~S. Razvi, S.~Feng, A.~Narayanan, Y.-T.~T. Lee, and P.~Witherell}, {\em
  {A Review of Machine Learning Applications in Additive Manufacturing}},
  vol.~Volume 1: 39th Computers and Information in Engineering Conference of
  International Design Engineering Technical Conferences and Computers and
  Information in Engineering Conference, 08 2019, p.~V001T02A040,
  \url{https://doi.org/10.1115/DETC2019-98415}.

\bibitem{ryan2003estimating}
{\sc K.~J. Ryan}, {\em Estimating expected information gains for experimental
  designs with application to the random fatigue-limit model}, Journal of
  Computational and Graphical Statistics, 12 (2003), pp.~585--603.

\bibitem{sacks1989design}
{\sc J.~Sacks, W.~J. Welch, T.~J. Mitchell, and H.~P. Wynn}, {\em Design and
  analysis of computer experiments}, Statistical science, 4 (1989),
  pp.~409--423.

\bibitem{sacks1984some}
{\sc J.~Sacks and D.~Ylvisaker}, {\em Some model robust designs in regression},
  The Annals of Statistics,  (1984), pp.~1324--1348.

\bibitem{7352306}
{\sc B.~Shahriari, K.~Swersky, Z.~Wang, R.~P. Adams, and N.~de~Freitas}, {\em
  Taking the human out of the loop: A review of bayesian optimization},
  Proceedings of the IEEE, 104 (2016), pp.~148--175,
  \url{https://doi.org/10.1109/JPROC.2015.2494218}.

\bibitem{sinsbeck2017sequential}
{\sc M.~Sinsbeck and W.~Nowak}, {\em Sequential design of computer experiments
  for the solution of {Bayesian} inverse problems}, SIAM/ASA Journal on
  Uncertainty Quantification, 5 (2017), pp.~640--664.

\bibitem{sollich1998learning}
{\sc P.~Sollich}, {\em Learning curves for gaussian processes}, Advances in
  neural information processing systems, 11 (1998).

\bibitem{Vernon_etal_2010}
{\sc I.~Vernon, M.~Goldstein, and R.~G. Bower}, {\em {Galaxy formation: a
  Bayesian uncertainty analysis}}, Bayesian Analysis, 5 (2010), pp.~619 -- 669,
  \url{https://doi.org/10.1214/10-BA524},
  \url{https://doi.org/10.1214/10-BA524}.

\bibitem{williams2007multi}
{\sc C.~Williams, E.~V. Bonilla, and K.~M. Chai}, {\em {Multi-Task {G}aussian
  Process Prediction}}, {Advances In Neural Information Processing Systems},
  (2007), pp.~153--160,
  \url{http://videolectures.net/site/normal_dl/tag=28445/bark08_williams_mtlwgp_01.pdf}.

\bibitem{williams2000upper}
{\sc C.~K. Williams and F.~Vivarelli}, {\em Upper and lower bounds on the
  learning curve for gaussian processes}, Machine Learning, 40 (2000),
  pp.~77--102.

\bibitem{wilson2015kernel}
{\sc A.~Wilson and H.~Nickisch}, {\em Kernel interpolation for scalable
  structured {Gaussian} processes ({KISS-GP})}, in International Conference on
  Machine Learning, PMLR, 2015, pp.~1775--1784,
  \url{http://proceedings.mlr.press/v37/wilson15.pdf}.

\bibitem{wolfman2022importance}
{\sc M.~Wolfman, X.~Wang, J.~C. Garcia, P.~Barai, J.~E. Stubbs, P.~J. Eng,
  O.~Kahvecioglu, T.~L. Kinnibrugh, K.~E. Madsen, H.~Iddir, et~al.}, {\em The
  importance of surface oxygen for lithiation and morphology evolution during
  calcination of high-nickel {NMC} cathodes}, Advanced Energy Materials,
  (2022), p.~2102951.

\bibitem{NIPS2016_18d10dc6}
{\sc J.~Wu and P.~Frazier}, {\em The parallel knowledge gradient method for
  batch {Bayesian} optimization}, in Advances in Neural Information Processing
  Systems, D.~Lee, M.~Sugiyama, U.~Luxburg, I.~Guyon, and R.~Garnett, eds.,
  vol.~29, Curran Associates, Inc., 2016,
  \url{https://proceedings.neurips.cc/paper/2016/file/18d10dc6e666eab6de9215ae5b3d54df-Paper.pdf}.

\bibitem{zhang2016adaptive}
{\sc J.~Zhang, W.~Li, L.~Zeng, and L.~Wu}, {\em An adaptive {G}aussian
  process-based method for efficient {B}ayesian experimental design in
  groundwater contaminant source identification problems}, Water Resources
  Research, 52 (2016), pp.~5971--5984,
  \url{https://doi.org/10.1002/2016WR018598}.

\bibitem{zhang2019sequential}
{\sc R.~Zhang, C.~D. Lin, and P.~Ranjan}, {\em A sequential design approach for
  calibrating dynamic computer simulators}, SIAM/ASA Journal on Uncertainty
  Quantification, 7 (2019), pp.~1245--1274,
  \url{https://doi.org/10.1137/18M1224544}.

\end{thebibliography}

\end{document}